\newtheorem{definition}{Definition}
\newtheorem{theorem}{Theorem}
\newtheorem{proposition}{Proposition}
\newtheorem{lemma}{Lemma}
\newtheorem{corollary}{Corollary}
\newenvironment{proof}{\medskip\noindent{\bf Proof.}}{\hfill$\Box$\vspace*{1mm}\medskip}
\renewcommand{\phi}{\varphi}
\renewcommand{\P}{\mathbb{P}}
\newcommand{\E}{\mathbb{E}}
\newcommand{\N}{\mathbb{N}}
\newcommand{\R}{\mathbb{R}}
\newcommand{\cF}{\mathcal{F}}
\def\ds1{\mathds{1}}
\def \i {\mathbf i}
\renewcommand{\epsilon}{\varepsilon}
\renewcommand{\tilde}{\widetilde}
\newlength{\minipagewidth}
\newcommand{\beq}{\begin{equation}}
\newcommand{\eeq}{\end{equation}}
\newcommand{\beqa}{\begin{eqnarray}}
\newcommand{\eeqa}{\end{eqnarray}}
\newcommand{\beqan}{\begin{eqnarray*}}
\newcommand{\eeqan}{\end{eqnarray*}}
\def\ba#1\ea{\begin{align*}#1\end{align*}} 
\def\banum#1\eanum{\begin{align}#1\end{align}} 
\newcommand{\my}{\mathbf{y}}
\newcommand{\mf}{\mathbf{f}}
\newcommand{\mg}{\mathbf{g}}
\newcommand{\C}{\mathbb{C}}
\newcommand{\WW}{\mathbf{W}}
\begin{document}
\title{Network size and weights size for memorization \\
	with two-layers neural networks}
\author{S\'ebastien Bubeck \\
Microsoft Research
\and Ronen Eldan \thanks{This work was partly done while R. Eldan and D. Mikulincer were visiting Microsoft Research.} \\
Weizmann Institute 
\and Yin Tat Lee\\
University of Washington \\
\& Microsoft Research
\and Dan Mikulincer \\
Weizmann Institute}

\maketitle

\begin{abstract}
In 1988, Eric B. Baum showed that two-layers neural networks with threshold activation function can perfectly memorize the binary labels of $n$ points in general position in $\R^d$ using only $\ulcorner n/d \urcorner$ neurons. We observe that with ReLU networks, using four times as many neurons one can fit arbitrary real labels. Moreover, for approximate memorization up to error $\epsilon$, the neural tangent kernel can also memorize with only $O\left(\frac{n}{d} \cdot \log(1/\epsilon) \right)$ neurons (assuming that the data is well dispersed too). We show however that these constructions give rise to networks where the \emph{magnitude} of the neurons' weights are far from optimal. In contrast we propose a new training procedure for ReLU networks, based on {\em complex} (as opposed to {\em real}) recombination of the neurons, for which we show approximate memorization with both $O\left(\frac{n}{d} \cdot \frac{\log(1/\epsilon)}{\epsilon}\right)$ neurons, as well as nearly-optimal size of the weights.
\end{abstract}

\section{Introduction} \label{sec:intro}
\newcommand{\re}[1]{{\color{red}{[[{\bf Ronen:} #1]]}}}

We study two-layers neural networks in $\R^d$ with $k$ neurons and non-linearity $\psi : \R \rightarrow \R$. These are functions of the form:
\begin{equation} \label{eq:nnform}
x \mapsto \sum_{\ell=1}^k a_\ell \psi(w_\ell \cdot x + b_\ell) \,,
\end{equation}
with $a_\ell, b_\ell \in \R$ and $w_\ell \in \R^d$ for any $\ell \in [k]$. We are mostly concerned with the Rectified Linear Unit non-linearity, namely $\mathrm{ReLU}(t) = \max(0,t)$, in which case wlog one can restrict the recombination weights $(a_{\ell})$ to be in $\{-1,1\}$ (this holds more generally for positively homogeneous non-linearities). We denote by $\cF_k(\psi)$ the set of functions of the form \eqref{eq:nnform}. Under mild conditions on $\psi$ (namely that it is not a polynomial), such neural networks are {\em universal}, in the sense that for $k$ large enough they can approximate any continuous function \citep{cybenko1989approximation, leshno1993multilayer}. 
\newline

In this paper we are interested in approximating a target function on a {\em finite} data set. This is also called the {\em memorization} problem.
Specifically, fix a data set $(x_i, y_i)_{i \in [n]} \in (\R^d \times \R)^n$ and an approximation error $\epsilon >0$. We denote $\my = (y_1, \hdots, y_n)$, and for a function $f : \R^d \rightarrow \R$ we write $\mf = (f(x_1), \hdots, f(x_n))$. The main question concerning the memorization capabilities of $\cF_k(\psi)$ is as follows: How large should be $k$ so that there exists $f \in \cF_k(\psi)$ such that $\|\mf - \my\|^2 \leq \epsilon \|\my\|^2$ (where $\|\cdot\|$ denotes the Euclidean norm)? A simple consequence of universality of neural networks is that $k \geq n$ is sufficient (see Proposition \ref{prop:elementary}). In fact (as was already observed by \citet{baum1988capabilities} for threshold $\psi$ and binary labels, see Proposition \ref{prop:baum1988}) much more compact representations can be achieved by leveraging the high-dimensionality of the data. Namely we prove that for $\psi = \mathrm{ReLU}$ and a data set in general position (i.e., any hyperplane contains at most $d$ points), one only needs $k \geq 4 \cdot \ulcorner \frac{n}{d} \urcorner$ to memorize the data perfectly, see Proposition \ref{prop:ReLUgeneral}. The size $k \approx n/d$ is clearly optimal, by a simple parameter counting argument. We call the construction given in Proposition \ref{prop:ReLUgeneral} a {\em Baum network}, and as we shall see it is of a certain combinatorial flavor. In addition we also prove that such memorization can in fact essentially be achieved in a kernel regime (with a bit more assumptions on the data): we prove in Theorem \ref{thm:NTK} that for $k = \Omega\left( \frac{n}{d} \log(1/\epsilon) \right)$ one can obtain approximate memorization with the Neural Tangent Kernel \citep{jacot2018neural}, and we call the corresponding construction the {\em NTK network}. Specifically, the kernel we consider is, 
$$\mathbb{E}\left[\nabla_w\psi(w\cdot x)\cdot\nabla_w\psi(w\cdot y)\right]= \mathbb{E}\left[(x\cdot y)\psi'(w\cdot x)\psi'(w\cdot y)\right],$$
where $\nabla_w$ is the gradient with respect to the $w$ variable and the expectation is taken over a random initialization of $w$.
\newline

\paragraph{Measuring regularity via total weight.} 
One is often interested in fitting the data using functions which satisfy certain regularity properties. The main notion of regularity in which we are interested is the \emph{total weight}, defined as follows: For a function $f:\R^d \to \R$ of the form \eqref{eq:nnform}, we define
$$
\WW(f) := \sum_{\ell=1}^k |a_\ell|\sqrt{\|w_\ell\|^2 + b_\ell^2}.
$$
This definition is widely used in the literature, see Section \ref{sec:related} for a discussion and references. Notably, it was shown in \cite{bartlett1998sample} that this measure of complexity is better associated with the network's generalization ability compared to the size of the network. We will be interested in constructions which have both a small number of neurons and a small total weight.

\paragraph{Our main contribution: The complex network.}

As we will see below, both the Baum network and the NTK networks have sub-optimal total weight. The main technical contribution of our paper is a third type of construction, which we call the \emph{harmonic network}, that under the same assumptions on the data as for the NTK network, has both near-optimal memorization size and near-optimal total weight:

\begin{theorem} \label{thm:complexinformal} (Informal).
Suppose that $n \leq \mathrm{poly}(d)$. Let $x_1,..,x_N \in \mathbb{S}^{d-1}$ such that
$$
|x_i \cdot x_j| = \tilde O \left ( \frac{1}{\sqrt{d}} \right ).
$$	
For every $\epsilon>0$ and every choice of labels $(y_i)_{i=1}^n$ such that $|y_i| = O(1)$ for all $i$, there exist $k = \tilde O \left ( \frac{n}{d \epsilon} \right )$ and $f \in \cF_k(\psi)$ such that 
$$
\frac{1}{n} \sum_{i=1}^n  \min\left ( \bigl(y_i - f(x_i) \bigr)^2, 1 \right ) \leq \epsilon
$$
and such that $\WW(f) = \tilde O \left ( \sqrt{n}  \right )$.
\end{theorem}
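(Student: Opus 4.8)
\medskip\noindent\textbf{Proof proposal.}
The plan is to realize $f$ as a \emph{complex random features} network $f(x)=\mathrm{Re}\sum_{r=1}^{k}c_r\,\phi_r(x)$, where each $\phi_r$ is a small ReLU block approximating a low-frequency harmonic $x\mapsto e^{i v_r\cdot x}$ (on $\mathbb{S}^{d-1}$, a degree-$\tilde{O}(1)$ Gegenbauer ridge function in $v_r\cdot x$), with complex coefficients $c_r$. The point of the complex recombination is that the total weight of such an $f$ is governed by the $\ell_1$-mass $\sum_r|c_r|$, which I will bound by $\tilde{O}(\sqrt n)$, whereas a sum of sharp bumps (as in a Baum network) or of steep gradient features (as in an NTK network) cannot.

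I would first pass from exact to smooth interpolation. Using only $x_i\in\mathbb{S}^{d-1}$ and $|x_i\cdot x_j|=\tilde{O}(1/\sqrt d)$, the Gram matrix of the kernel $K(x,y)=e^{-\sigma^2(1-x\cdot y)}$ on the data lies within $\frac12$ of the identity in operator norm for $\sigma^2=\Theta(\log(n/\epsilon))$ — which is $\tilde{O}(1)$ precisely because $n\le\mathrm{poly}(d)$ — so the interpolant $\hat f=\sum_i\alpha_i K(\cdot,x_i)$ exists, satisfies $\hat f(x_i)=y_i$, has $\|\alpha\|_2=O(\|y\|_2)=O(\sqrt n)$, and, via the Hermite/Gegenbauer expansion of $K$ truncated at degree $\tilde{O}(1)$, is (up to error $\epsilon$) a combination of bounded-degree ridge functions of the $x\cdot x_i$. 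I would then write $\hat f(x)=\E_v[h(v)e^{iv\cdot x}]$ for a complex amplitude $h$, with the frequency law designed so that the frequencies have norm $\tilde{O}(1)$ — obtained by adapting them to batches of $d$ near-orthonormal data points — while still $\E_v|h(v)|^2\asymp\|\alpha\|_2^2=O(n)$. Sampling $k=\tilde{O}(n/(d\epsilon))$ frequencies $v_1,\dots,v_k$ and taking $c_r\propto h(v_r)/k$, a second-moment computation that exploits the batched, near-orthonormal structure (mirroring the $\tilde{O}(n/d)$-scale conditioning behind the NTK network, Theorem~\ref{thm:NTK}) gives $\frac1n\sum_i\E|f(x_i)-y_i|^2\le\epsilon$, so by the probabilistic method some realization achieves the claimed clipped error.

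For the weight: each block $\phi_r$ is a function of the single coordinate $v_r\cdot x$, which on the data ranges over an interval of length $\tilde{O}(1)$; replacing $\cos$ and $\sin$ there by piecewise-linear functions with $\tilde{O}(1)$ breakpoints uses $\tilde{O}(1)$ neurons and gives $\WW(\phi_r)=\tilde{O}(\|v_r\|)=\tilde{O}(1)$. Summing over $r$, the network has $k\cdot\tilde{O}(1)=\tilde{O}(n/(d\epsilon))$ neurons and $\WW(f)\le\tilde{O}(1)\cdot\frac1k\sum_r|h(v_r)|\le\tilde{O}\bigl(\sqrt{\E_v|h(v)|^2}\bigr)=\tilde{O}(\sqrt n)$ by Cauchy--Schwarz and the second-moment bound above.

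The hard part is entirely in the choice of the frequency law, and this is what forces the complex, batched construction rather than a naive one. To get $\WW(f)=\tilde{O}(\sqrt n)$ one needs the frequency vectors to have norm $\tilde{O}(1)$ — with high-frequency features, such as the NTK's gradient features of norm $\asymp\sqrt d$, the same accounting only gives $\tilde{O}(\sqrt{nd})$. But bounded-norm \emph{Gaussian} frequencies cannot separate the data at all (the features $e^{iv\cdot x_i}$ are only genuinely near-orthogonal once $\|v\|\asymp\sqrt d$); the resolution is to take frequencies adapted to the $d$-dimensional near-orthonormal clusters of the data and to use the complex phases both to keep the coefficients $O(1)$ on average and to make the cross-cluster leakage — a quantity of size $\tilde{O}(1/\sqrt d)$ that must be controlled to second order — cancel. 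Making the near-optimal count $n/d$, the bounded frequencies, and the $O(\sqrt n)$ coefficient mass hold simultaneously is the technical heart; it is where the dispersion hypothesis, the bound $n\le\mathrm{poly}(d)$, and the extra $1/\epsilon$ in the neuron count (versus $\log(1/\epsilon)$ for the NTK network) all enter.
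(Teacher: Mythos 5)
Your proposal takes a genuinely different route from the paper, and I believe its key step does not go through as stated. On the route: the paper does not sample random Fourier features in one shot. It builds the network one neuron at a time by the boosting iteration of Lemma~\ref{lem:iteration}. Each step draws a base weight $w\sim\mathcal N(0,\mathrm I_d)$, forms the data-dependent direction $v(w)=\frac{1}{\sqrt{n\gamma^2}}\sum_i y_i\varphi'(w\cdot x_i)x_i$ with $\varphi$ a degree-$m$ Hermite polynomial, and averages over a \emph{random complex phase} $a$ with $|a|=1$ in $a^{-1}\varphi\bigl((w+a\,v(w))\cdot x\bigr)$: the identity $\E_{|a|=1}[a^{\beta-1}]=\ds1\{\beta=1\}$ makes this average equal \emph{exactly} to the first Taylor term $\varphi'(w\cdot x)\,v(w)\cdot x$, with no finite-difference parameter $\delta$ and hence none of the $1/\delta$ inflation that ruins the NTK weight. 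The complex polynomial neuron is then deterministically converted to ReLUs via a ridge decomposition of bivariate homogeneous polynomials together with the fact that $\mathrm{ReLU}''=\delta_0$. So the paper's ``complex'' is a random phase on the recombination scalar, not the Fourier character $e^{iv\cdot x}$, and the construction is iterative/greedy, not a Monte--Carlo sample of a fixed feature measure.

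On the gap: you yourself flag the frequency law as the technical heart, and the property you need --- a data-adapted law with $\|v_r\|=\tilde O(1)$ under which the features $e^{iv_r\cdot x}$ still resolve the data at scale $\Theta(1)$ --- is in tension with the very dispersion hypothesis you invoke. Under the paper's assumption $\frac1n\sum_i x_ix_i^\top\preceq\frac{\omega}{d}\mathrm I_d$, for \emph{any} $v$ with $\|v\|=O(1)$ one has $\sum_{i=1}^n(v\cdot x_i)^2\le\omega n\|v\|^2/d=O(n/d)$, so all but $O(n/d)$ of the projections $v\cdot x_i$ are $O(1/\sqrt d)$. On most data points $e^{iv_r\cdot x_i}=1+O(1/\sqrt d)$; the informative part of each feature is attenuated by $1/\sqrt d$ (and by higher powers for higher Gegenbauer modes), so reconstructing $|y_i|=\Theta(1)$ forces the coefficients to grow, and the claimed $\sum_r|c_r|=\tilde O(\sqrt n)$ would no longer yield $\WW(f)=\tilde O(\sqrt n)$. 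The paper makes no attempt to keep the weight vectors $\tilde O(1)$: in Theorem~\ref{thm:approximateonestep} they satisfy $\|w\|=\tilde O(\sqrt d)$, and the $\tilde O(\sqrt n)$ total weight comes from the smallness of the per-step recombination scalar $\eta$, not of $\|w\|$. (By positive homogeneity of ReLU only the product $|a_\ell|\,\|w_\ell\|$ is meaningful, and your sketch undercounts it precisely by not charging the coefficients $|h(v_r)|/k$ for the $1/\sqrt d$ attenuation.) To repair the plan you would need either to justify the bounded-norm-frequency claim against this obstruction, or to allow $\|v_r\|=\tilde O(\sqrt d)$ and explain where the resulting $\sqrt d$ factor cancels --- which is exactly what the paper's complex-phase elimination of the $\delta$-scale accomplishes.
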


We show below in Proposition \ref{prop:optimalweight} that for random data one necessarily has $\WW(f) = \tilde \Omega \left ( \sqrt{n}  \right )$, thus proving that the harmonic network has near-optimal total weight. Moreover we also argue in the corresponding sections that the Baum and NTK networks have total weight at least $n \sqrt{n}$ on random data, thus being far from optimal.

\paragraph{An iterative construction.}
Both the NTK network and the harmonic network will be built by iteratively adding up small numbers of neurons. This procedure, akin to boosting, is justified by the following lemma. It shows that to build a large memorizing network it suffices to be able to build a small network $f$ whose scalar product with the data $\mf \cdot \my$ is comparable to its variance $\|\mf\|^2$:
\begin{lemma} \label{lem:iteration}
Fix $(x_i)_{i=1}^n$. Suppose that there are $m \in \mathbb{N}$ and $\alpha,\beta > 0$ such that the following holds: For any choice of $(y_i)_{i=1}^n$, there exists $f \in \cF_m(\psi)$ with $\my \cdot \mf \geq \alpha \|\my\|^2$ and $\|\mf\|^2 \leq \beta \|\my\|^2$. Then for all $\epsilon > 0$, there exists $g \in \cF_{m k}(\psi)$ such that
$$
\|\mg - \my\|^2 \leq \epsilon \|\my\|^2
$$
with
$$
k \leq \frac{\beta}{\alpha^2} \log(1/\epsilon).
$$
Moreover, if the above holds with $\WW(f) \leq \omega$, then $\WW(g) \leq \frac{\omega }{\alpha} \log(1/\epsilon)$.
\end{lemma}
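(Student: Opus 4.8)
The plan is a functional boosting / gradient-descent iteration on the quadratic loss. Set $g_0 \equiv 0$ and, having constructed $g_t \in \cF_{mt}(\psi)$, let $r^{(t)} := \my - \mg_t \in \R^n$ be the residual on the data (so $r^{(0)} = \my$). Feed the label vector $r^{(t)}$ into the hypothesis of the lemma to obtain $f_t \in \cF_m(\psi)$ with $r^{(t)} \cdot \mf_t \ge \alpha\|r^{(t)}\|^2$ and $\|\mf_t\|^2 \le \beta\|r^{(t)}\|^2$ (and, in the refined statement, $\WW(f_t) \le \omega$); then set $g_{t+1} := g_t + \eta f_t$ for the step size $\eta := \alpha/\beta$. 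Since $f_t$ contributes at most $m$ neurons and the positive scalar $\eta$ can be folded into each neuron's weights, the final iterate $g := g_k = \eta\sum_{t=0}^{k-1} f_t$ lies in $\cF_{mk}(\psi)$.

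First I would establish geometric contraction of the residual. Expanding the square and applying the two hypotheses,
\[
\|r^{(t+1)}\|^2 = \|r^{(t)} - \eta\mf_t\|^2 = \|r^{(t)}\|^2 - 2\eta\,r^{(t)}\cdot\mf_t + \eta^2\|\mf_t\|^2 \le \bigl(1 - 2\eta\alpha + \eta^2\beta\bigr)\|r^{(t)}\|^2,
\]
and $\eta = \alpha/\beta$ minimizes the bracket, leaving the factor $1 - \alpha^2/\beta$. This factor lies in $[0,1)$ since Cauchy--Schwarz together with the two hypotheses forces $\alpha \le \sqrt{\beta}$. Iterating,
\[
\|r^{(k)}\|^2 \le \bigl(1 - \alpha^2/\beta\bigr)^k \|\my\|^2 \le e^{-k\alpha^2/\beta}\|\my\|^2,
\]
so taking $k$ to be (up to rounding up) $\frac{\beta}{\alpha^2}\log(1/\epsilon)$ yields $\|\mg - \my\|^2 = \|r^{(k)}\|^2 \le \epsilon\|\my\|^2$, the claimed memorization bound. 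The ceiling is immaterial whenever $\frac{\beta}{\alpha^2}\log(1/\epsilon) \ge 1$, which is the regime of interest.

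For the total weight: $g$ is literally the network formed by concatenating the neurons of $f_0, \dots, f_{k-1}$ with weights rescaled by $\eta$, so $\WW(g) = \eta\sum_{t=0}^{k-1}\WW(f_t) \le \eta k\,\omega = \frac{\alpha}{\beta}\cdot\frac{\beta}{\alpha^2}\log(1/\epsilon)\cdot\omega = \frac{\omega}{\alpha}\log(1/\epsilon)$, using the third clause of the hypothesis for each of the $k$ weak learners.

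I do not expect a genuine obstacle: this is essentially the textbook convergence analysis of (sub)gradient descent with an inexact oracle, specialized to the quadratic $\tfrac12\|\mf - \my\|^2$. The only points that need care are bookkeeping ones — tracking the scalar $\eta$ consistently in both the neuron count ($\cF_m \to \cF_{mk}$) and in $\WW$, and noting that every label vector passed to the oracle is a residual $r^{(t)}$ with $\|r^{(t)}\| \le \|\my\|$ (so the hypothesis, including the $\WW \le \omega$ clause, legitimately applies at each step) — together with the cosmetic rounding in the definition of $k$.
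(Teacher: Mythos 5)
Your proof is correct and follows essentially the same route as the paper: iterate a weak learner on the residual with step size $\eta = \alpha/\beta$, establish the geometric contraction $\|r^{(t+1)}\|^2 \le (1-\alpha^2/\beta)\|r^{(t)}\|^2$, and absorb $\eta$ into the recombination weights to bound $\WW(g)$. Your explicit observations that Cauchy--Schwarz forces $\alpha \le \sqrt{\beta}$ (so the contraction factor lies in $[0,1)$) and that the oracle may legitimately be invoked on each residual are small but welcome additions to the paper's terser write-up.
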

\begin{proof}
Denote $\eta = \frac{\alpha}{\beta}$ and $\mathbf{r}_1 = \my$. Then, there exists $f_1 \in \mathcal{F}_m(\psi)$, such that
\begin{align*}
\|\eta \mf_1 - \mathbf{r}_1\|^2 &= \|\mathbf{r}_1\|^2 - 2 \eta \my \cdot \mf_1 + \eta^2 \|\mf_1\|^2 \leq \|\mathbf{r}_1\|^2 \left (1 - 2 \frac{\alpha^2}{\beta} + \frac{\alpha^2}{\beta} \right ) \\
&\leq \|\mathbf{r}_1\|^2 \left (1 - \frac{\alpha^2}{\beta} \right ) =\|\my\|^2 \left (1 - \frac{\alpha^2}{\beta} \right )
\end{align*}
The result is obtained by iterating the above inequality with $\mathbf{r}_i =\my - \eta\sum_{j=1}^{i-1}\mf_j$ taken as the residuals. By induction, if we set $g = \eta\sum_{j=1}^{k}f_j$, we get
$$
\|\mg - \my\| = \|\eta\mf_{k} - \mathbf{r}_k\| \leq \|\mathbf{r}_{k}\|^2 \left (1 - \frac{\alpha^2}{\beta} \right ) =\|\my\|^2 \left (1 - \frac{\alpha^2}{\beta} \right )^k.
$$
\end{proof}

In both the NTK and harmonic constructions, the function $f$ will have the largest possible correlation with the data set attainable for a network of constant size. However, the harmonic network will have the extra advantage that the function $f$ will be composed of a single neuron whose weight is the smallest one attainable. Thus, the harmonic network will enjoy both the smallest possible number of neurons and smallest possible total weight (up to logarithmic factors). Note however that the dependency on $\epsilon$ is worse for the harmonic network, which is technically due to a constant order term in the variance which we do not know how to remove.

We conclude the introduction by showing that a total weight of $\Omega(\sqrt{n})$ is necessary for approximate memorization. Just like for the upper bound, it turns out that it is sufficient to consider how well can one correlate a single neuron. Namely the proof boils down to showing that a single neuron cannot correlate well with random data sets.

\begin{proposition} \label{prop:optimalweight}
There exists a data set $(x_i,y_i)_{i \in [n]} \in (\mathbb{S}^{d-1} \times \{-1,1\})^n$ such that for every function $f$ of the form \eqref{eq:nnform} with $\psi$ $L$-Lipschitz and which satisfies $\|\mf - \my\|^2 \leq \frac{1}{2} \|\my\|^2$, it holds that $\WW(f) \geq \frac{\sqrt{n}}{8 L}$.
\end{proposition}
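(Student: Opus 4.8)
The plan is to reduce the statement about an arbitrary network $f$ to a statement about a single neuron, and then to use a probabilistic argument showing that no single Lipschitz ridge function can correlate well with a uniformly random sign pattern on random points. First I would pick the data set: let $x_1,\dots,x_n$ be i.i.d.\ uniform on $\mathbb{S}^{d-1}$ (or any fixed well-dispersed configuration) and let $y_1,\dots,y_n$ be i.i.d.\ uniform in $\{-1,1\}$, independent of everything. The hypothesis $\|\mf-\my\|^2 \le \tfrac12\|\my\|^2 = \tfrac{n}{2}$ rearranges to $\mf\cdot\my \ge \tfrac14(\|\mf\|^2 + \|\my\|^2) \ge \tfrac{n}{4}$ — actually more simply, $\|\mf - \my\|^2 \le \tfrac12 \|\my\|^2$ forces $\mf \cdot \my \ge \tfrac14\|\my\|^2 = n/4$ after expanding and discarding the nonnegative $\|\mf\|^2$ term appropriately. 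So it suffices to show: with positive probability over the choice of $(x_i,y_i)$, every $f$ of the form \eqref{eq:nnform} with $\mf\cdot\my \ge n/4$ has $\WW(f) \ge \sqrt{n}/(8L)$.

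The key step is a per-neuron bound. Writing $f = \sum_\ell a_\ell \psi(w_\ell\cdot x + b_\ell)$, linearity gives $\mf\cdot\my = \sum_\ell a_\ell \sum_i y_i \psi(w_\ell\cdot x_i + b_\ell)$. I would like to show that for \emph{every} unit vector $u = (w,b)/\sqrt{\|w\|^2+b^2}$ and the corresponding normalized neuron $\phi_u(x) := \psi(u\cdot(x,1))$, the quantity $\sum_i y_i \phi_u(x_i)$ is at most $O(L\sqrt{n})$ with high probability, \emph{uniformly over $u$}. Given such a uniform bound, say $\sup_u |\sum_i y_i \phi_u(x_i)| \le C L \sqrt{n}$ on an event of positive probability, one gets by homogeneity of $\psi$ (so that $\psi(w_\ell\cdot x + b_\ell) = \sqrt{\|w_\ell\|^2+b_\ell^2}\,\phi_{u_\ell}(x)$) that $n/4 \le \mf\cdot\my \le \sum_\ell |a_\ell|\sqrt{\|w_\ell\|^2+b_\ell^2} \cdot C L\sqrt{n} = \WW(f)\cdot CL\sqrt{n}$, hence $\WW(f) \ge \sqrt{n}/(4CL)$; tuning constants to get $1/8$ is routine. (For general $L$-Lipschitz $\psi$ that is not homogeneous one instead bounds $|\psi(w\cdot x + b)| \le |\psi(0)| + L\sqrt{\|w\|^2+b^2}\cdot\|(x,1)\|$ and absorbs the $\psi(0)$ term, which is a cosmetic complication; I would just reduce to $\psi(0)=0$ or note the affine-in-$w$ decomposition.)

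The main obstacle is the \emph{uniform} control over the infinite family of neurons $\{\phi_u : u \in \mathbb{S}^{d}\}$. For a single fixed $u$, $\sum_i y_i \phi_u(x_i)$ is a sum of $n$ independent bounded (by roughly $L$, since $|\phi_u(x_i)| \le L|u\cdot(x_i,1)| \le 2L$) mean-zero terms, so Hoeffding gives $|\sum_i y_i\phi_u(x_i)| \le O(L\sqrt{n\log(1/\delta)})$ with probability $1-\delta$. To make this uniform I would either (i) build a net: the map $u \mapsto \phi_u(x_i)$ is $L$-Lipschitz in $u$ (uniformly in $x_i$ since $\|x_i\|=1$), so an $\varepsilon$-net of $\mathbb{S}^d$ of size $(3/\varepsilon)^{d+1}$ suffices, and a union bound over it costs only a $\sqrt{d\log(1/\varepsilon)}$ factor — harmless since the claimed bound has room (note the statement has no $d$-dependence, so I must be slightly careful: either the intended data set has $n \gg d$, or I should state the bound as $\WW(f) \ge \sqrt{n}/(8L)$ only in that regime, matching how the introduction phrases things); or (ii) invoke a standard Rademacher-complexity / Dudley-type bound, since the class of normalized neurons has VC-subgraph dimension $O(d)$ and the empirical Rademacher complexity of $\{(y_i)\mapsto \sum_i y_i \phi_u(x_i)/n\}$ is $O(L\sqrt{d/n})$, which is $O(L/\sqrt n)$ up to the $d$ factor. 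Either route closes the argument; the net argument is more self-contained and I would present that, being careful to note that for the clean $\sqrt n/(8L)$ bound one takes $n$ to be at least a constant multiple of $d\log d$ (or simply absorbs logarithmic and dimensional factors into the implicit constants as elsewhere in the paper).
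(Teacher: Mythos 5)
Your overall structure matches the paper's: both reduce $\|\mf-\my\|^2 \le \tfrac12\|\my\|^2$ to $\mf\cdot\my \ge \tfrac{n}{4}$, both push that through the decomposition of $f$ and the definition of $\WW$ to a bound of the form $\max_{w,b}\sum_i y_i\,\psi(w\cdot x_i+b)/\sqrt{\|w\|^2+b^2} \ge n/(4\WW(f))$, and both then take $y_i$ to be independent Rademacher signs and argue that the left-hand side has expectation $O(L\sqrt n)$. So the reduction and the probabilistic skeleton are the same.

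The gap is in how you control that supremum. You propose a net over $\mathbb{S}^d$ or a VC/Dudley bound, and you correctly observe that either route costs a $\sqrt{d}$ (or $\sqrt{d\log(\cdot)}$) factor, leaving you with $\WW(f)\gtrsim \sqrt{n/d}/L$ rather than $\sqrt n/(8L)$; your attempted patch is to restrict to $n\gtrsim d\log d$ or hide the dimension in constants. But the statement has no such restriction and none is needed: the paper instead applies Talagrand's contraction lemma for Rademacher complexity, which peels the $L$-Lipschitz $\psi$ off the class $\bigl\{x\mapsto \psi(w\cdot x+b)/\sqrt{\|w\|^2+b^2}\bigr\}$ and reduces to the class of normalized affine functions $\bigl\{x\mapsto (w\cdot x+b)/\sqrt{\|w\|^2+b^2}\bigr\}$. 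For that class the supremum is computed exactly: $\max_{\|(w,-b)\|=1}\sum_i y_i(w\cdot x_i - b) = \bigl\|\bigl(\sum_i y_i x_i,\ \sum_i y_i\bigr)\bigr\|$, whose second moment is $2n$ regardless of $d$, giving $\E[\cdot]\le \sqrt{2n}\le 2\sqrt n$ with no dimension dependence at all. This is the missing idea: the normalization by $\sqrt{\|w\|^2+b^2}$ makes the linear class a unit ball, whose Rademacher complexity is controlled by a single norm computation rather than by a covering number, and contraction transfers that control to the nonlinear class at the cost of only the factor $L$. Your VC approach throws away this structure and pays for it in dimension. I'd suggest replacing the net/VC step with the contraction lemma (e.g., Lemma 26.9 in Shalev-Shwartz and Ben-David, which is what the paper cites); everything else in your argument then goes through and you recover the clean $\sqrt n/(8L)$ bound.
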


\begin{proof}
We have 
\[
\frac{1}{2} \|\my\|^2 \geq \|\mf - \my\|^2 \geq \|\my\|^2 - 2 \mf \cdot \my \Rightarrow \mf \cdot \my \geq \frac{1}{4} \|\my\|^2 \,,
\]
that is
\[
\sum_{\ell=1}^k \sum_{i=1}^n y_i a_{\ell} \psi(w_{\ell} \cdot x_i - b_{\ell}) \geq \frac{n}{4} \,,
\]
which implies:
\[
\max_{w,b} \sum_{i=1}^n y_i \frac{\psi(w \cdot x_i - b)}{\sqrt{\|w\|^2+b^2}} \geq \frac{n}{4 \WW(f)} \,.
\]
Now let us assume that $y_i$ are $\pm 1$ uniformly at random (i.e., Rademacher random variables), and thus by Talagrand's contraction lemma for the Rademacher complexity (see [Lemma 26.9, \cite{shalev2014understanding}]) we have:
\begin{align*}
\E \max_{w,b} \sum_{i=1}^n y_i \frac{\psi(w \cdot x_i - b)}{\sqrt{\|w\|^2+b^2}} &\leq L \cdot \E \max_{w,b} \sum_{i=1}^n y_i  \frac{w\cdot x_i - b}{\sqrt{\|w\|^2+b^2}}\\
 &\leq L \cdot \E \sqrt{\left\| \sum_{i=1}^n y_i x_i \right\|^2+n} \leq 2L \sqrt{n} \,,
\end{align*}
and thus $\WW(f) \geq \frac{\sqrt{n}}{8 L}$.
\end{proof}

\section{Related works} \label{sec:related}
\paragraph{Exact memorization.}
The observation that $n$ neurons are sufficient for memorization with essentially arbitrary non-linearity was already made in \citep{bach2017breaking} (using Carath{\'e}odory's theorem), and before that a slightly weaker bound with $n+1$ neurons was already observed in \citep{bengio2006convex} (or more recently $2n+d$ in \citep{Zhangetal17}). The contribution of Proposition \ref{prop:elementary} is to show that this statement of exactly $n$ neurons follows in fact from elementary linear algebra.

As already mentioned above, \cite{baum1988capabilities} proved that for threshold non-linearity and binary labels one can obtain a much better bound of $n/d$ neurons for memorization, as long as the data is in general position. This was generalized to the ReLU non-linearity (but still binary labels) in \cite{yun2019small} (we note that this paper also considers some questions around memorization capabilities of deeper networks). Our modest contribution here is to generalize this to arbitrary real labels, see Proposition \ref{prop:ReLUgeneral}.

\paragraph{Gradient-based memorization.}
A different line of works on memorization studies whether it can be achieved via gradient-based optimization on various neural network architectures. The literature here is very large, but early results with minimal assumptions include \cite{soltanolkotabi2018theoretical, li2018learning} which were notably generalized in \citep{pmlr-v97-allen-zhu19a, pmlr-v97-du19c}. Crucially these works leverage very large overparametrization, i.e., the number of neurons is a large polynomial in the number of data points. For a critique of this large overparametrization regime see \citep{NIPS2019_8559, ghorbani2019limitations, yehudai2019power}, and for a different approach based on a certain scaling limit of stochastic gradient descent for sufficiently overparametrized networks see \citep{mei2018mean, NIPS2018_7567}. More recently the amount of overparametrization needed was improved to a small polynomial dependency in $n$ and $d$ in \citep{DBLP:journals/corr/abs-1902-04674, DBLP:journals/corr/abs-1906-03593, kawaguchiAllerton2019}. In the \emph{random features} regime, \cite{bresler2020corrective} have also considered an iterative construction procedure for memorization. This is somewhat different than our approach,  in which the iterative procedure updates the $w_j$'s, and a much smaller number of neurons is needed as a result. Finally, very recently Amit Daniely \citep{daniely2019, daniely2020} showed that gradient descent already works in the optimal regime of $k = \tilde{O}(n/d)$, at least for random data (and random labels). This result is closely related to our analysis of the NTK network in Section \ref{sec:NTK}. Minor distinctions are that we allow for arbitrary labels, and we take a ``boosting approach'' were neurons are added one by one (although we do not believe that this is an essential difference). 

\paragraph{Total weight complexity.} 
It is well-known since \cite{bartlett1998sample} that the total weight of a two-layers neural network is a finer measure of complexity than the number of neurons to control its generalization (see \cite{neyshabur2015norm} and \cite{pmlr-v97-arora19a} for more recent discussions on this, as well as \cite{bartlett2017spectrally} for other notions of norms for deeper networks).
Of course the bound $\WW = \tilde{O}(\sqrt{n})$ proved here leads to vacuous generalization performance, as is necessary since the Harmonic network can memorize completely random data (for which no generalization is possible). It would be interesting to see if the weight of the Harmonic network can be smaller for more structured data, particularly given the context raised by the work \citep{Zhangetal17} (where it was observed that SGD on deep networks will memorize arbitrary data, hence the question of where does the seeming generalization capabilities of those networks come from). We note the recent work \citep{Ji2020Polylogarithmic} which proves for example that polylogarithmic size network is possible for memorization under a certain margin condition. Finally we also note that the effect in function space of bounding $\WW$ has been recently studied in \cite{pmlr-v99-savarese19a, Ongie2020A}.

\paragraph{Complex weights.} It is quite natural to consider neural networks with complex weights. Indeed, as was already observed by Barron \citep{barron1993}, the Fourier transform $f(x) = \int \hat{f}(\omega) \exp(i \omega \cdot x) d\omega$ exactly gives a representation of $f$ as a two-layers neural network with the non-linearity $\psi(t) = \exp(i t)$. More recently, it was noted in \cite{andoni2014learning} that randomly perturbing a neuron with {\em complex weights} is potentially more beneficial than doing a mere real perturbation. We make a similar observation in Section \ref{sec:complex} for the construction of the Harmonic network, where we show that complex perturbations allow to deal particularly easily with higher order terms in some key Taylor expansion. Moreover we also note that \cite{andoni2014learning} considers non-linearity built from Hermite polynomials, which shall be a key step for us too in the construction of the Harmonic network (the use of Hermite polynomials in the context of learning theory goes back to \citep{kalai2008agnostically}).

While orthogonal to our considerations here, we also note the work of Fefferman \citep{fefferman1994}, where he used the analytical continuation of a (real) neural network to prove a certain uniqueness property (essentially that two networks with the same output must have the same weights up to some obvious symmetries and obvious counter-examples).

\section{Elementary results on memorization} \label{sec:elementary}
In this section we give a few examples of elementary conditions on $k$, $\psi$ and the data set so that one can find $f \in \cF_k(\psi)$ with $\mf = \my$ (i.e., exact memorization). We prove three results: (i) $k \geq n$ suffices for any non-polynomial $\psi$, (ii) $k \geq \frac{n}{d}+3$ with $\psi(t) = \ds1\{t \geq 0\}$ suffices for binary labels with data in general position (this is exactly \citet{baum1988capabilities}'s result), and (iii) $k \geq 4 \cdot \ulcorner \frac{n}{d} \urcorner$ with $\psi = \mathrm{ReLU}$ suffices for data in general position and arbitrary labels.
\newline

We start with the basic linear algebraic observation that having a number of neurons larger than the size of the data set is always sufficient for perfect memorization:
\begin{proposition} \label{prop:elementary}
Assuming that $\psi$ is not a polynomial, there exists $f \in \cF_n(\psi)$ such that $\mf = \my$.
\end{proposition}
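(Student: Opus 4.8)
The plan is to recast the statement as a claim about linear spans and then rule out the only possible obstruction with a Vandermonde argument. First I would assume without loss of generality that $x_1,\dots,x_n$ are pairwise distinct (if two coincide the labels must agree for memorization to be possible at all, so we may delete the duplicate). For a parameter pair $(w,b)\in\R^d\times\R$ set $v(w,b):=\bigl(\psi(w\cdot x_1+b),\dots,\psi(w\cdot x_n+b)\bigr)\in\R^n$. A function $f=\sum_{\ell=1}^n a_\ell\psi(w_\ell\cdot x+b_\ell)\in\cF_n(\psi)$ has $\mf=\sum_{\ell=1}^n a_\ell v(w_\ell,b_\ell)$, and conversely any combination of $n$ of these vectors arises from such an $f$. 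Hence it suffices to prove that $\mathrm{span}\{v(w,b):w\in\R^d,\ b\in\R\}=\R^n$: then $\my$ is a combination $\sum_{\ell=1}^n a_\ell v(w_\ell,b_\ell)$ of $n$ of these vectors forming a basis, and $f:=\sum_{\ell=1}^n a_\ell\psi(w_\ell\cdot x+b_\ell)$ satisfies $\mf=\my$. Equivalently, I must show there is no $c\in\R^n\setminus\{0\}$ with $\sum_{i=1}^n c_i\,\psi(w\cdot x_i+b)=0$ for all $w\in\R^d$ and $b\in\R$.

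Suppose such a $c$ exists. Since the $x_i$ are distinct, the set of $w$ for which the scalars $z_i:=w\cdot x_i$ are not pairwise distinct is a finite union of hyperplanes, so I fix $w_0$ making $z_1,\dots,z_n$ distinct. Applying the identity with $w=\lambda w_0$ gives $\sum_i c_i\,\psi(\lambda z_i+b)=0$ for all $\lambda,b\in\R$. Differentiating $k$ times in $\lambda$ and setting $\lambda=0$ yields $\bigl(\sum_i c_i z_i^{\,k}\bigr)\psi^{(k)}(b)=0$ for every $b$ and every $k\ge 0$. Because the $z_i$ are distinct and $c\neq 0$, the Vandermonde matrix $[z_i^{\,k}]_{1\le i\le n,\ 0\le k\le n-1}$ is invertible, so $\sum_i c_i z_i^{\,k}\neq 0$ for some $k\le n-1$; for that $k$ we get $\psi^{(k)}\equiv 0$, i.e.\ $\psi$ is a polynomial of degree at most $n-2$, contradicting the hypothesis. (An essentially equivalent route that avoids the scaling trick is to differentiate the original identity in $w$ along directions $e^{(1)},\dots,e^{(k)}$ and set $w=0$, obtaining $\bigl(\sum_i c_i\prod_{j=1}^k e^{(j)}\cdot x_i\bigr)\psi^{(k)}\equiv 0$, and then to use that polynomials separate the distinct points $x_i$, so the coefficients cannot all vanish.)

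The one genuine gap here is regularity: $\psi$ is only assumed continuous, so $\psi^{(k)}$ is not literally defined, and this — not the linear algebra — is where I expect the only real work. I would close it by smoothing: convolving $\sum_i c_i\psi(\lambda z_i+b)=0$ in the variable $b$ with an arbitrary mollifier $\rho$ gives the same identity for $\psi_\rho:=\psi*\rho\in C^\infty$, and the argument above applied to $\psi_\rho$ shows that either $c=0$ or $\psi_\rho$ is a polynomial of degree at most $n-2$. Since that degree bound is uniform in $\rho$, letting $\rho$ concentrate at a point (and interpolating through $n-1$ fixed nodes) forces $\psi$ itself to be a polynomial of degree at most $n-2$, the final contradiction. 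Alternatively one can run the whole argument distributionally, testing against $\phi\in C_c^\infty$ in the $b$ variable to directly produce the vanishing of a distributional derivative $\psi^{(k)}$ of order $k\le n-1$; this amounts to the same thing. Everything else is routine linear algebra.
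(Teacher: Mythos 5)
Your proof is correct and shares the paper's outer framing: both reduce the claim to showing that the vectors $v(w,b)=(\psi(w\cdot x_1+b),\dots,\psi(w\cdot x_n+b))$ span all of $\R^n$ as $(w,b)$ ranges over $\R^d\times\R$, and then extract a basis of $n$ of them to build $f$. The difference lies in how the spanning is established. The paper simply asserts $\mathrm{Im}(\Psi)=\R^n$ as a consequence of the universality of non-polynomial activations, implicitly deferring to the density theorem of \citet{leshno1993multilayer}. You instead prove it directly: fix a generic direction $w_0$ making the scalars $z_i=w_0\cdot x_i$ pairwise distinct, observe that a nonzero annihilator $c$ would give $\sum_i c_i\,\psi(\lambda z_i+b)\equiv 0$, differentiate in $\lambda$ at $0$ to obtain $\bigl(\sum_i c_i z_i^{\,k}\bigr)\psi^{(k)}(b)\equiv 0$ for every $k$, and conclude from invertibility of the Vandermonde matrix that some $\psi^{(k)}$ with $k\le n-1$ vanishes identically, forcing $\psi$ to be a polynomial of degree at most $n-2$. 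Your mollification step to handle non-smooth $\psi$ (convolve in $b$, apply the argument to $\psi*\rho$, then let $\rho$ concentrate and use that bounded-degree polynomials form a closed finite-dimensional subspace) is exactly the right patch, and you correctly identify regularity as the only genuine subtlety. What you gain is self-containment — your argument essentially re-derives, in the finite-data setting, the lemma that the paper imports from the universality literature; what the paper gains is brevity. Both proofs are sound.
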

\begin{proof}
Note that the set of functions of the form \eqref{eq:nnform} (with arbitrary $k$) corresponds to the vector space $V$ spanned by the functions $\psi_{w,b} : x \mapsto \psi(w \cdot x + b)$. Consider the linear operator $\Psi : V \rightarrow \R^n$ that corresponds to the evaluation on the data points $(x_i)$ (i.e., $\Psi(f) = (f(x_i))_{i \in [n]}$). Since $\psi$ is not a polynomial, the image of $\Psi$ is $\mathrm{Im}(\Psi) = \R^n$. Moreover $\mathrm{Im}(\Psi)$ is spanned by the set of vectors $\Psi(\psi_{w,b})$ for $w \in \R^d, b \in \R$. Now, since $\mathrm{dim}(\mathrm{Im}(\Psi)) = n$, one can extract a subset of $n$ such vectors with the same span, that is there exists $w_1,b_1, \hdots, w_n, b_n$ such that
\[
\mathrm{span}(\Psi(\psi_{w_1,b_1}), \hdots, \Psi(\psi_{w_n,b_n})) = \R^n \,,
\]
which concludes the proof.
\end{proof}

In \citep{baum1988capabilities} it is observed that one can dramatically reduce the number of neurons for high-dimensional data:

\begin{proposition} \label{prop:baum1988}
Fix $\psi(t) = \ds1\{t \geq 0\}$. Let $(x_i)_{i \in [n]}$ be in general position in $\R^d$ (i.e., any hyperplane contains at most $d$ points), and assume binary labels, i.e., $y_i \in \{0,1\}$. Then there exists $f \in \cF_{\frac{n}{d} + 3}(\psi)$ such that $\mf = \my$.
\end{proposition}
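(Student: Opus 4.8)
The idea is to beat the trivial $k=n$ of Proposition~\ref{prop:elementary} by using the high dimension: a single hyperplane passes through $d$ data points, so one should be able to fix $d$ points at a time; the residual factor of two is then removed by always working with the smaller of the two label classes.

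\emph{Step 1: a two-unit ``slab'' gadget.} Let $G$ be any set of at most $d$ data points. By general position these points are affinely independent, hence lie on a hyperplane $H_G=\{\langle w,x\rangle=c\}$ that contains no other data point. The point is that for all sufficiently small $\epsilon>0$,
\[
x\ \longmapsto\ \mathds{1}\{\langle w,x\rangle\ge c-\epsilon\}-\mathds{1}\{\langle w,x\rangle\ge c+\epsilon\}
\]
-- the indicator of a thin slab around $H_G$, a difference of two threshold units -- agrees with $\mathds{1}_G$ on the whole data set: the points of $G$ have $\langle w,\cdot\rangle=c$ and contribute $1$, whereas any other data point $x_i$ has $\langle w,x_i\rangle-c$ a nonzero number, so once $\epsilon$ is chosen below the finitely many positive gaps $|\langle w,x_i\rangle-c|$ it contributes $0$.

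\emph{Step 2: assemble using the minority class.} Let $S$ be whichever of $\{i:y_i=1\}$, $\{i:y_i=0\}$ has the smaller cardinality, so $|S|\le n/2$. Partition the corresponding set of points into $\lceil|S|/d\rceil$ groups of size at most $d$, apply Step~1 to each group and add the gadgets; since the groups are disjoint this yields $g$ whose values on the data form the $\{0,1\}$-vector that is $1$ exactly on $S$, using $2\lceil|S|/d\rceil\le n/d+2$ units. If $S=\{i:y_i=1\}$ set $f=g$, so $\mf=\my$; otherwise $\mathbf g=\mathbf 1-\my$ and we set $f=\mathds{1}\{0\cdot x+1\ge 0\}-g$ (one extra constant unit), so again $\mf=\my$. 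In either case $f\in\cF_{n/d+3}(\psi)$.

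\textbf{Expected obstacle.} There is no serious one: Step~1 is immediate from general position once one is free to shrink $\epsilon$, and Step~2 is bookkeeping, so the only load-bearing ideas are (a) slicing $d$ points at a time and (b) passing to the minority class, which between them produce the $n/d$ and the harmless ``$+3$''. The single point that needs a moment's care is that in Step~1 the slab must be taken perpendicular to $H_G$ \emph{itself}, not to some fixed global direction, so that it isolates exactly the $d$ chosen points no matter how the remaining data are distributed.
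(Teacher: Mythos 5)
Your proof follows the same approach as the paper: group $d$ points at a time onto a hyperplane, approximate a thin slab around it with a difference of two threshold units, and pass to the minority label class to halve the count (your accounting of the "$+3$'' via one extra constant unit when the minority class is the $0$-class is exactly the paper's "up to one additional neuron''). One cosmetic inaccuracy: general position does not literally imply the chosen $d$ points are affinely independent (e.g.\ three collinear points in $\R^3$ are allowed), but this is harmless since any $d$ points in $\R^d$ lie on some hyperplane and general position then forces that hyperplane to miss every other data point, which is all Step~1 uses.
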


\begin{proof}
\citet{baum1988capabilities} builds a network iteratively as follows. Pick $d$ points with label $1$, say $x_1, \hdots, x_d$, and let $H = \{x : u \cdot x = b\}$ be a hyperplane containing those points and no other points in the data, i.e., $x_i \not\in H$ for any $i > d$. With two neurons (i.e., $f \in \cF_2(\psi)$) one can build the indicator of a small neighborhood of $H$, namely $f(x) = \psi(u \cdot x - (b-\tau)) - \psi(u \cdot x - (b+\tau))$ with $\tau$ small enough, so that $f(x_i) = 1$ for $i \leq d$ and $f(x_i) = 0$ for $i>d$. Assuming that the label $1$ is the minority (which is without loss of generality up to one additional neuron), one thus needs at most $2 \ulcorner \frac{n}{2 d} \urcorner$ neurons to perfectly memorize the data.
\end{proof}

We now extend Proposition \ref{prop:baum1988} to the ReLU non-linearity and arbitrary real labels. To do so we introduce the {\em derivative neuron} of $\psi$ defined by:
\begin{equation} \label{eq:derivativepsi}
f_{\delta,u,v,b} : x \mapsto \frac{\psi((u + \delta v) \cdot x-b) - \psi(u \cdot x-b)}{\delta} \,,
\end{equation}
with $\delta \in \R$ and $u, v \in \R^d$. As $\delta$ tends to $0$, this function is equal to 
\begin{equation} \label{eq:derivativepsi2}
f_{u,v,b}(x) = \psi'(u \cdot x - b) v \cdot x
\end{equation}
for any $x$ such that $\psi$ is differentiable at $u \cdot x - b$. In fact, for the ReLU one has for any $x$ such that $u \cdot x \neq b$ that $f_{\delta,u,v,b}(x) = f_{u,v,b}(x)$ for $\delta$ small enough (this is because the ReLU is piecewise linear). We will always take $\delta$ small enough and $u$ such that $f_{\delta,u,v,b}(x_i) = f_{u,v,b}(x_i)$ for any $i \in [n]$, for example by taking
\begin{equation} \label{eq:delta}
\delta = \frac{1}{2} \min_{i \in [n]} \frac{|u \cdot x_i - b|}{|v \cdot x_i|} \,.
\end{equation}
Thus, as far as memorization is concerned, we can assume that $f_{u,v,b} \in \cF_2(\mathrm{ReLU})$. With this observation it is now trivial to prove the following extension of Baum's result:

\begin{proposition} \label{prop:ReLUgeneral}
Let $(x_i)_{i \in [n]}$ be in general position in $\R^d$ (i.e., any hyperplane contains at most $d$ points). Then there exists $f \in \cF_{4 \cdot \ulcorner \frac{n}{d} \urcorner}(\mathrm{ReLU})$ such that $\mf = \my$.
\end{proposition}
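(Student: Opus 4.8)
The plan is to mimic Baum's iterative construction from Proposition \ref{prop:baum1988}, but replacing each "indicator of a thin slab around a hyperplane" with the corresponding object built from \emph{derivative neurons} of the ReLU, which allows us to fit an arbitrary real value on each batch of $d$ points rather than just the label $1$. Concretely, partition the data set into $\ulcorner n/d \urcorner$ groups of at most $d$ points each. For a generic group, say $x_1,\dots,x_d$, general position guarantees a hyperplane $H = \{x : u\cdot x = b\}$ passing through exactly these $d$ points and containing no other data point. The first step is to produce, using $O(1)$ neurons, a function that agrees with $H$'s indicator on the data: take $g(x) = \psi(u\cdot x - (b-\tau)) - \psi(u\cdot x - (b+\tau))$ as in Baum's proof, which equals a positive constant $c$ on $x_1,\dots,x_d$ and $0$ on all other data points, for $\tau$ small. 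Rescaling the recombination weights by any target value is free, so this already lets us hit an arbitrary common value on the group — but the points in a group generically have \emph{distinct} target labels, so a single rescaled slab is not enough.

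The fix is to use one derivative neuron per group to add a \emph{linear} function supported on the slab. Given the group $x_1,\dots,x_d$ with targets $y_1,\dots,y_d$, I would first solve the linear system for a vector $v \in \R^d$ and scalar $b'$ such that $v\cdot x_i = y_i$ for $i \in [d]$ (this is $d$ equations in $d+1$ unknowns, solvable as long as $x_1,\dots,x_d$ are affinely independent, which is exactly general position again, possibly after absorbing the affine part into an extra coordinate or into $b'$). Then the derivative neuron $f_{u,v,b'}(x) = \psi'(u\cdot x - b') \, v\cdot x$ from \eqref{eq:derivativepsi2}, with the base hyperplane $u\cdot x = b'$ chosen to separate the group from the rest of the data so that $\psi'(u\cdot x_i - b') = 1$ for $i\le d$ and $=0$ otherwise, produces exactly $f_{u,v,b'}(x_i) = y_i$ on the group and $0$ elsewhere. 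By the discussion preceding the proposition, $\delta$ in \eqref{eq:delta} can be chosen small enough that $f_{u,v,b'} \in \cF_2(\mathrm{ReLU})$ on the data, and in fact for the ReLU the single threshold $\psi'(u\cdot x - b')$ is itself realizable via a two-neuron difference, so each group costs at most $4$ ReLU neurons (two for the positive part and two for the negative part of $v\cdot x$, or equivalently two derivative-neuron blocks). Summing the $\ulcorner n/d\urcorner$ per-group functions, whose data-supports are disjoint by construction, gives $f$ with $\mf = \my$ and $f \in \cF_{4\ulcorner n/d\urcorner}(\mathrm{ReLU})$.

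The main obstacle, and the place that needs care, is the genericity bookkeeping: one must argue that general position simultaneously supplies (a) a hyperplane through each chosen $d$-tuple avoiding all other data points, and (b) affine independence of each $d$-tuple so the interpolation system for $v$ is solvable, and (c) that the perturbation scale $\delta$ and slab width $\tau$ can be taken uniformly small enough that none of the finitely many "on the correct side" conditions for the other $n-d$ points is violated in any group. All of these are open conditions failing only on a measure-zero set, and "general position" as defined (no hyperplane contains more than $d$ points) is exactly the hypothesis that rules them out, so this is routine but should be stated. A second minor point is the exact neuron count: whether each group genuinely needs $4$ rather than $3$ ReLU neurons depends on whether the slab-indicator and the linear factor can share neurons; the clean bound $4\ulcorner n/d\urcorner$ follows from using two derivative-neuron blocks (each $\in\cF_2$) per group, one realizing $\psi'(u\cdot x-b')\,(v\cdot x)^+$-type pieces, and I would present it that way to keep the argument transparent rather than optimizing the constant.
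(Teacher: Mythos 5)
Your high-level plan --- one block per group of $d$ points, a derivative neuron interpolating $v\cdot x_i = y_i$ on that group --- matches the paper's, but the way you localize each block to its group has a gap. You take a single derivative neuron $f_{u,v,b'}(x) = \psi'(u\cdot x - b')\,v\cdot x$ with ``the base hyperplane $u\cdot x = b'$ chosen to separate the group from the rest of the data so that $\psi'(u\cdot x_i - b') = 1$ for $i\le d$ and $=0$ otherwise.'' General position gives a hyperplane $H$ through the $d$ group points containing no other data point, but it does \emph{not} give a hyperplane that strictly separates the group from the remaining $n-d$ points; in general those points lie on both sides of $H$, so the half-space indicator $\psi'(u\cdot x - b') = \ds1\{u\cdot x \ge b'\}$ does not vanish on all of them, and the neuron bleeds onto other groups. (Relatedly, for $\psi = \mathrm{ReLU}$ the function $\psi(u\cdot x - (b-\tau)) - \psi(u\cdot x - (b+\tau))$ you invoke as ``Baum's slab indicator'' is \emph{not} a slab indicator: it is a ramp that saturates at $2\tau$ on the far side of the hyperplane rather than returning to $0$. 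Baum's proof works because he uses the threshold non-linearity.) Your closing remark that two derivative blocks per group give the $4\ulcorner n/d\urcorner$ count is numerically right but misattributed: splitting $v\cdot x$ into $(v\cdot x)^{\pm}$ pieces still does not restrict the support to a slab.

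The paper's fix is to take a \emph{difference of two derivative neurons with the same $v$} and base hyperplanes shifted to either side of $H$: $f = f_{u,v,b-\tau} - f_{u,v,b+\tau}$. For small $\tau$, on the data points $f(x) = \bigl(\ds1\{u\cdot x > b-\tau\} - \ds1\{u\cdot x > b+\tau\}\bigr)\,v\cdot x$, which equals $v\cdot x$ on the slab $|u\cdot x - b| < \tau$ (hence $y_i$ on the group once $v$ solves $v\cdot x_i = y_i$) and $0$ on every other data point, whichever side of $H$ it falls on. Each $f_{u,v,b\pm\tau}\in\cF_2(\mathrm{ReLU})$, so each group costs $4$ ReLU neurons, and summing over the $\ulcorner n/d\urcorner$ groups gives $f\in\cF_{4\ulcorner n/d\urcorner}(\mathrm{ReLU})$ with $\mf = \my$. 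Your other points --- full rank of the $d\times d$ matrix of group points, and choosing $\tau$ and $\delta$ uniformly small across groups --- are routine and agree with the paper.
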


\begin{proof}
Pick an arbitrary set of $d$ points, say $(x_i)_{i \leq d}$, and let $H = \{x : u \cdot x = b\}$ be a hyperplane containing those points and no other points in the data, i.e., $x_i \not\in H$ for any $i > d$. With four neurons one can build the function $f = f_{u,v,b-\tau} - f_{u,v,b+\tau}$ with $\tau$ small enough so that $f(x_i) = x_i \cdot v$ for $i \leq d$ and $f(x_i) = 0$ for $i > d$. It only remains to pick $v$ such that $v \cdot x_i = y_i$ for any $i \leq d$, which we can do since the matrix given by $(x_i)_{i \leq d}$ is full rank (by the general position assumption).
\end{proof}

Let us now sketch the calculation of this network's total weight in the case that the $x_i$'s are independent uniform points on $\mathbb{S}^{d-1}$ and $y_i$ are $\pm 1$-Bernoulli distributed. We will show that the total weight is at least $n^2 / \sqrt{d}$, thus more than $n$ times the optimal attainable weight given in Proposition \ref{prop:optimalweight}.

Consider the matrix $X$ whose rows are the vectors $(x_i)_{i \leq d}$. The vector $v$ taken in the neuron corresponding to those points solves the equation $X v = y$ and since the distribution of $X$ is absolutely continuous, we have that $X$ is invertible almost surely and therefore $v = X^{-1} y$, implying that $|v| \geq \|X\|_{OP}^{-1} \sqrt{d}$. It is well-known (and easy to show) that with overwhelming probability, $\|X\|_{\mathrm{OP}} = O(1)$, and thus $\|v\| =\Omega(\sqrt{d})$.

Observe that by normalizing the parameter $\delta$ accordingly, we can assume that $\|u\|=1$. By definition we have $u \cdot x_i=b$ for all $i=1,\dots,d$. A calculation shows that with probability $\Omega(1)$ we have $b = \Theta(1/\sqrt{d})$. 

Next, we claim that $|v \cdot u| \leq (1-\rho) \|v\|$ for some $\rho = \Omega(1)$. Indeed, suppose otherwise. Denote $c = \frac{1}{d} \sum_{i\in [d]} x_i$. It is easy to check that with high probability, $\|c\| = O \left ( \frac{1}{\sqrt{d}} \right )$. Note that $v \cdot c = \frac{1}{d} \sum_{i \in [d]} y_i = O(1/\sqrt{d})$. This implies that 
$$
b  (|v \cdot u| - O(1) ) \leq | v \cdot( b u - c ) | \leq \sqrt{ \|v\|^2 - (v \cdot u)^2 } \|b u - c\| \leq \sqrt{2\rho} \frac{\|v\|}{\sqrt{d}},
$$
where we used the fact that $(b u - c) \perp (v\cdot u)u$. Thus we have
$$
\Omega(1-2 \rho) = b(1-2\rho) \|v\| = O(\sqrt{\rho}). 
$$
leading to a contradiction. To summarize, we have $\|v\| = \Omega(\sqrt{d})$, $\|u\| = 1$, $|u \cdot v| \leq (1-\rho) \|v\|$, $\rho = \Omega(1)$, and $b = O(1/\sqrt{d})$. Since spherical marginals are approximately Gaussian, if $x$ is uniform in $\mathbb{S}^{d-1}$ we have that the joint distribution of $(x \cdot u, x \cdot v)$ conditional on $v$ and $u$ is approximately $\mathcal{N}\left (0, \frac{1}{d} \left ( \begin{matrix} 1 & (1-\rho) \beta  \\ (1-\rho) \beta  & \beta  \end{matrix} \right )  \right )$ with $\rho = \Omega(1)$ and $\beta = \Theta(d)$. Therefore, with probability $\Omega(1/n)$ we have $|x \cdot v| = \Omega(1)$ and $|x \cdot u - b| = O(1/(n \sqrt{d}) )$.

We conclude that
$$
\left . \P \left(\exists i \geq d+1 \mbox{ s.t. } \frac{|x_i \cdot u - b|}{|x_i \cdot v|} = O\left( \frac{1}{n \sqrt{d}} \right) \right  |x_1,...,x_d \right) = \Omega(1).
$$
Therefore, we get $\delta = O(1/n \sqrt{d})$ which implies that the weight of the neuron is of order at least $\frac{\|u\|}{\delta} = \Omega(n \sqrt{d})$. This happens with probability $\Omega(1)$ for every one of the first $n/(2d)$ neurons, implying that the total weight is of order $n^2 / \sqrt{d}$.

\section{The NTK network} \label{sec:NTK}
The constructions in Section \ref{sec:elementary} are based on a very careful set of weights that depend on the entire dataset. Here we show that essentially the same results can be obtained in the {\em neural tangent kernel} regime. That is, we take pair of neurons as given in \eqref{eq:derivativepsi} (which corresponds in fact to \eqref{eq:derivativepsi2} since we will take $\delta$ to be small, we will also restrict to $b=0$), and crucially we will also have that the ``main weight'' $u$ will be chosen at random from a standard Gaussian, and only the ``small perturbation'' $v$ will be chosen as a function of the dataset. The guarantee we obtain is slightly weaker than in Proposition \ref{prop:ReLUgeneral}: we have a $\log(1/\epsilon)$ overhead in the number of neurons, and moreover we also need to assume that the data is ``well-spread''. Specifically we consider the following notion of ``generic data'':
\begin{definition}
We say that $(x_i)_{i \in [n]}$ are $(\gamma, \omega)$-generic (with $\gamma \in (\frac1{2n},1)$ and $\omega >0$) if:
\begin{itemize}
\item $\|x_i\| \geq 1$ for all $i \in [n]$,
\item $\frac{1}{n} \sum_{i = 1}^n x_i x_i^{\top} \preceq \frac{\omega}{d} \cdot \mathrm{I}_d$, 
\item and $|x_i \cdot x_j| \leq \gamma \cdot \|x_i\| \cdot \|x_j\|$ for all $i \neq j$.
\end{itemize}
\end{definition}
In the following we fix such a $(\gamma, \omega)$-generic data set. Note that i.i.d. points on the sphere are $\left(O\left(\sqrt{\frac{\log(n)}{d}}\right), O(1)\right)$-generic. We now formulate our main theorem concerning the NTK network.

\begin{theorem} \label{thm:NTK}
There exists $f \in \cF_k(\mathrm{ReLU})$, produced in the NTK regime (see Theorem \ref{thm:correlate2} below for more details) with $\E[\|\mf - \my\|^2] \leq \epsilon \|\my\|^2$ (the expectation is over the random initialization of the ``main weights'') provided that 
\begin{equation} \label{eq:k}
k \cdot d \geq 20 \omega \cdot n \log(1/\epsilon) \cdot \frac{\log( 2n)}{\log(1/\gamma)} \,.
\end{equation}
\end{theorem}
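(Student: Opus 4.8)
The plan is to deduce Theorem~\ref{thm:NTK} from the argument of Lemma~\ref{lem:iteration} (with the obvious modification to handle a \emph{random} weak learner, conditioning on the current residual at each step) applied to a single-neuron weak-correlation estimate in the NTK regime, and to prove that estimate by reducing it to a spectral lower bound on the NTK Gram matrix. At each round one adds a ReLU derivative neuron $f_{u,v,0}(x)=\psi'(u\cdot x)(v\cdot x)=\ds1\{u\cdot x\ge0\}(v\cdot x)$, realized by two ReLU neurons as in Section~\ref{sec:elementary}; since $\psi'$ is $0$-homogeneous we take $u\sim\cN(0,\mathrm{I}_d)$, and given the current residual $\mathbf r$ we set $v:=\Phi_u^\top\mathbf r$ (one NTK gradient step), where $\Phi_u\in\R^{n\times d}$ has rows $\ds1\{u\cdot x_i\ge0\}\,x_i^\top$. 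Then $\mf=\Phi_u\Phi_u^\top\mathbf r$, so $\mathbf r\cdot\mf=\mathbf r^\top\Phi_u\Phi_u^\top\mathbf r$ and $\|\mf\|^2\le\|\Phi_u^\top\Phi_u\|_{\mathrm{OP}}\,(\mathbf r\cdot\mf)$, and the genericity hypothesis $\tfrac1n\sum_i x_ix_i^\top\preceq\tfrac{\omega}{d}\mathrm{I}_d$ gives the \emph{deterministic} bound $\|\Phi_u^\top\Phi_u\|_{\mathrm{OP}}=\|\sum_i\ds1\{u\cdot x_i\ge0\}x_ix_i^\top\|_{\mathrm{OP}}\le\|\sum_i x_ix_i^\top\|_{\mathrm{OP}}\le\omega n/d$. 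Hence, with $K$ the NTK Gram matrix $K_{ij}=\E_u[\ds1\{u\cdot x_i\ge0\}\ds1\{u\cdot x_j\ge0\}](x_i\cdot x_j)$, one has $\E[\mathbf r\cdot\mf]=\mathbf r^\top K\mathbf r$ and $\E\|\mf\|^2\le(\omega n/d)\,\mathbf r^\top K\mathbf r$, so taking step size $\eta=d/(\omega n)$ makes the residual contract in expectation by a factor $1-\tfrac{d}{\omega n}\,\tfrac{\mathbf r^\top K\mathbf r}{\|\mathbf r\|^2}\le 1-\tfrac{d}{\omega n}\lambda_{\min}(K)$; iterating over $k\ge\frac{\omega n}{d\,\lambda_{\min}(K)}\log(1/\epsilon)$ rounds with fresh $u$'s produces a network on $2k$ ReLU neurons with $\E\|\mg-\my\|^2\le\epsilon\|\my\|^2$. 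Thus the entire theorem reduces to showing $\lambda_{\min}(K)\gtrsim\frac{\log(1/\gamma)}{\log(2n)}$.

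For the spectral bound I would use the dual-activation identity $K_{ij}=\bigl(\tfrac14+\tfrac1{2\pi}\arcsin\rho_{ij}\bigr)(x_i\cdot x_j)$ with $\rho_{ij}=\tfrac{x_i\cdot x_j}{\|x_i\|\|x_j\|}$, which expands as $K=\tfrac14 XX^\top+\sum_{k\ge1}b_k\,\tilde G_k$, the $b_k=\Theta(k^{-3/2})$ being the nonnegative Taylor coefficients of $\tfrac1{2\pi}\rho\arcsin\rho$ and $(\tilde G_k)_{ij}=\rho_{ij}^{2k}\|x_i\|\|x_j\|=\langle\|x_i\|^{1-2k}x_i^{\otimes2k},\,\|x_j\|^{1-2k}x_j^{\otimes2k}\rangle$. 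Since $XX^\top\succeq0$ and each $\tilde G_k\succeq0$ with diagonal entries $\|x_i\|^2\ge1$, one has $K\succeq\sum_{k\ge K_0}b_k\tilde G_k$ for any cutoff $K_0$; and since $|(\tilde G_k)_{ij}|\le\gamma^{2k}\|x_i\|\|x_j\|$ for $i\ne j$, a Frobenius estimate together with the trace bound $\sum_i\|x_i\|^2\le\omega n$ (itself from genericity) gives $\|\tilde G_k-\mathrm{diag}(\tilde G_k)\|_{\mathrm{OP}}\le\gamma^{2k}\omega n\le\tfrac12$ as soon as $k\ge K_0:=\lceil\tfrac{\log(2\omega n)}{2\log(1/\gamma)}\rceil$. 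Therefore $\tilde G_k\succeq\tfrac12\mathrm{I}_n$ for all $k\ge K_0$, hence $\lambda_{\min}(K)\ge\tfrac12\sum_{k\ge K_0}b_k$, and as the $b_k$ are summable with tail $\Theta(K_0^{-1/2})\gtrsim K_0^{-1}\asymp\frac{\log(1/\gamma)}{\log(2\omega n)}$, this gives the required bound, and \eqref{eq:k} after tracking constants (the $\omega$ entering from both the operator-norm and the trace estimates).

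The second step — the spectral lower bound on $K$ — is where I expect all the difficulty to be concentrated, and in particular the necessity of \emph{discarding} the low-degree Hermite levels. One would like $K$ itself to be diagonally dominant, but for data that is only mildly spread (e.g.\ i.i.d.\ points on $\mathbb{S}^{d-1}$ with $n=\mathrm{poly}(d)$, where $\gamma=\tilde O(d^{-1/2})$ and $n\gamma^2\gg1$) the matrices $\tilde G_1,\tilde G_2,\dots$ are far from diagonally dominant and in fact highly singular (rank $<n$ once $n$ exceeds $\binom{d+2k-1}{2k}$), so they are useless, and one must push the cutoff $K_0$ up to the scale $\log(2n)/\log(1/\gamma)$ at which $\gamma^{2k}$ finally beats the $\Theta(n^2)$ off-diagonal pairs — this is precisely where the factor $\frac{\log(2n)}{\log(1/\gamma)}$ and the parameter $\omega$ enter \eqref{eq:k}. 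The rest is routine: Cauchy--Schwarz, the boosting recursion of Lemma~\ref{lem:iteration}, and realizing a derivative neuron by two ReLU neurons; one need only note in passing that $u\cdot x_i\ne0$ almost surely (as $\|x_i\|\ge1$) and that the $\delta\to0$ limit in \eqref{eq:derivativepsi}--\eqref{eq:delta} is taken uniformly over the $n$ data points.
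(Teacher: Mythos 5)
Your proposal is correct and follows essentially the same route as the paper: it is the boosting lemma (Lemma~\ref{lem:iteration}, modified for a random weak learner) applied to the two-ReLU derivative neuron of Theorem~\ref{thm:correlate2}, with the expected weak-learner correlation expressed as a quadratic form in the NTK Gram matrix $K$ (Lemma~\ref{lem:corr}) and $\lambda_{\min}(K)$ bounded below via the arcsine Taylor expansion into Hadamard powers of the correlation matrix plus a diagonal-dominance argument at a logarithmic cutoff (Lemma~\ref{lem:discrepancy}). A few minor points of departure are worth flagging. For the spectral bound, you prove diagonal dominance via a Frobenius-norm estimate coupled with the trace bound $\sum_i\|x_i\|^2\le\omega n$, paying an $\omega$ inside the logarithm; the paper instead works with the normalized matrix $V^{\circ k}$ (unit diagonal, off-diagonals bounded by $\gamma^k$) and uses an $\ell_\infty$-style row-sum argument, which avoids the $\omega$ in the log --- cosmetically different, same cutoff scale. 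For the boosting step, rather than passing through Lemma~\ref{lem:iteration}'s generic $(\alpha,\beta)$ interface with $\|\mf\|^2\le\beta\|\my\|^2$, you use the almost-sure bound $\|\mf\|^2\le\tfrac{\omega n}{d}\,(\mathbf r\cdot\mf)$ (which is precisely \eqref{eq:finiteenergy}) directly in the one-step contraction, which produces the per-step rate $\tfrac{d}{\omega n}\lambda_{\min}(K)$ rather than $\tfrac{\alpha^2}{\beta}$; this is genuinely a bit sharper (it would give a $\sqrt{\log(2n)/\log(1/\gamma)}$ dependence instead of linear), but you give the gain back by using the crude tail estimate $\sum_{k\ge K_0}b_k\gtrsim K_0^{-1}$ instead of the available $\Theta(K_0^{-1/2})$, thereby recovering the paper's stated bound up to constants.
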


In light of Lemma \ref{lem:iteration}, it will be enough to produce a width-2 network, $f \in \cF_2(\mathrm{ReLU})$, whose correlation with the data set is large.

\begin{theorem} \label{thm:correlate2}
There exists $f \in \cF_2(\mathrm{ReLU})$ with 
\begin{equation} \label{eq:correlate1}
\my \cdot \mf \geq \frac{1}{10} \cdot \sqrt{\frac{\log(1/\gamma)}{\log(2n)}} \cdot \|\my\|^2 \,,
\end{equation}
and
\begin{equation} \label{eq:correlate2}
\|\mf\|^2 \leq \frac{\omega \cdot n}{d} \|\my\|^2 \,.
\end{equation}
In fact, one can take the construction \eqref{eq:derivativepsi} with: 
\begin{equation}\label{eq:choices}
u \sim \mathcal{N}(0, \mathrm{I}_d), \,\, v= \sum_{i : u \cdot x_i \geq 0} y_i x_i, \,\, \delta = \frac{1}{2} \frac{\min_{i \in [n]} |u \cdot x_i|}{|v \cdot x_i|}.
\end{equation}
which produces $f \in \cF_2(\mathrm{ReLU})$ such that \eqref{eq:correlate1} holds in expectation and \eqref{eq:correlate2} holds almost surely.
\end{theorem}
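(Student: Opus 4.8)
The plan is to analyze the randomized construction \eqref{eq:choices} directly, establishing \eqref{eq:correlate1} for its \emph{expectation} and \eqref{eq:correlate2} \emph{surely}; the bare existence statement then follows by choosing any realization of $u$ that is good on average (since \eqref{eq:correlate2} holds pointwise). Throughout, with $\delta$ as in \eqref{eq:choices} the derivative neuron $f_{\delta,u,v,0}\in\cF_2(\mathrm{ReLU})$ agrees on every $x_i$ with $f_{u,v,0}(x)=\psi'(u\cdot x)(v\cdot x)=\ds1\{u\cdot x\ge 0\}(v\cdot x)$ (piecewise linearity of $\mathrm{ReLU}$, exactly as in the paragraph around \eqref{eq:derivativepsi}--\eqref{eq:delta}), so I may compute with $f(x_i)=\ds1\{u\cdot x_i\ge 0\}(v\cdot x_i)$. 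Writing $S=\{i:u\cdot x_i\ge 0\}$ and using $v=\sum_{j\in S}y_j x_j$, the decisive algebraic fact is $\my\cdot\mf=\sum_{i\in S}y_i(v\cdot x_i)=v\cdot v=\|v\|^2\ (\ge 0)$. The variance bound \eqref{eq:correlate2} is then the easy half: since $|f(x_i)|\le|v\cdot x_i|$, the spectral hypothesis $\tfrac1n\sum_i x_ix_i^\top\preceq\tfrac\omega d \mathrm{I}_d$ gives $\|\mf\|^2\le\sum_{i=1}^n(v\cdot x_i)^2=v^\top\big(\sum_i x_ix_i^\top\big)v\le\tfrac{\omega n}{d}\|v\|^2$, and the same hypothesis applied to $v=\sum_{i\in S}y_i x_i$ gives $\|v\|^2\le\tfrac{\omega n}{d}\|\my\|^2$; this is the second-moment control that feeds the boosting Lemma~\ref{lem:iteration}.

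All the work is in the lower bound on $\E\|v\|^2$. By linearity $\E\|v\|^2=\sum_{i,j}y_iy_j(x_i\cdot x_j)\P(i\in S,\,j\in S)$, and for $u\sim\mathcal{N}(0,\mathrm{I}_d)$ the orthant probability is $\P(u\cdot x_i\ge 0,\,u\cdot x_j\ge 0)=\tfrac14+\tfrac1{2\pi}\arcsin\rho_{ij}$ with $\rho_{ij}=\tfrac{x_i\cdot x_j}{\|x_i\|\|x_j\|}$, so
$$\E\|v\|^2=\tfrac14\Big\|\textstyle\sum_i y_i x_i\Big\|^2+\tfrac1{2\pi}\sum_{i,j}y_iy_j\|x_i\|\|x_j\|\,\rho_{ij}\arcsin\rho_{ij}\ \ge\ \tfrac1{2\pi}\sum_{i,j}y_iy_j\|x_i\|\|x_j\|\,\rho_{ij}\arcsin\rho_{ij}.$$
Next I plug in $\arcsin t=\sum_{k\ge 0}c_k t^{2k+1}$, whose coefficients $c_k=\binom{2k}{k}/(4^k(2k+1))$ are all nonnegative (with $c_0=1$). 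The $k$-th term equals $c_k\sum_{i,j}y_iy_j\|x_i\|\|x_j\|\,\rho_{ij}^{2k+2}=c_k T_k$ where, writing $\hat x_i=x_i/\|x_i\|$ and using $(\hat x_i\cdot\hat x_j)^{m}=\langle\hat x_i^{\otimes m},\hat x_j^{\otimes m}\rangle$, $T_k=\big\|\sum_i y_i\|x_i\|\,\hat x_i^{\otimes(2k+2)}\big\|^2\ge 0$ is a genuine sum of squares. Hence $\E\|v\|^2\ge\tfrac1{2\pi}\sum_{k\ge 0}c_k T_k$.

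It remains to estimate $T_k$ from below. Its diagonal part is $\sum_i y_i^2\|x_i\|^2\ge\|\my\|^2$ (using $\|x_i\|\ge 1$), and its off-diagonal part is $\ge-\gamma^{2k+2}\big(\sum_i|y_i|\|x_i\|\big)^2$ since $|\rho_{ij}|\le\gamma$; by Cauchy--Schwarz and $\sum_i\|x_i\|^2=\mathrm{tr}\big(\sum_i x_ix_i^\top\big)\le\omega n$ this is $\ge-\gamma^{2k+2}\omega n\|\my\|^2$. Thus $T_k\ge\|\my\|^2(1-\gamma^{2k+2}\omega n)$, which is $\ge\tfrac12\|\my\|^2$ as soon as $2k+2\ge\tfrac{\log(2\omega n)}{\log(1/\gamma)}$, i.e. for all $k\ge k^\star$ with $k^\star\asymp\tfrac{\log(2n)}{\log(1/\gamma)}$ (using $\gamma\in(\tfrac1{2n},1)$ and treating $\omega$ as at most polynomial, so it enters only logarithmically). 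Discarding the nonnegative terms with $k<k^\star$, $\E\|v\|^2\ge\tfrac{\|\my\|^2}{4\pi}\sum_{k\ge k^\star}c_k$. The crucial quantitative point is that one must \emph{not} stop at the single term $k=k^\star$ --- that would give only $c_{k^\star}\asymp(k^\star)^{-3/2}$, i.e. a correlation of order $(\log(1/\gamma)/\log n)^{3/2}$ --- but rather sum the whole tail: Stirling gives $c_k=\Theta(k^{-3/2})$, so comparison with $\int_{k^\star}^\infty x^{-3/2}dx$ yields $\sum_{k\ge k^\star}c_k=\Theta\big((k^\star)^{-1/2}\big)=\Theta\big(\sqrt{\log(1/\gamma)/\log(2n)}\big)$, which is exactly the rate in \eqref{eq:correlate1}; sharpening Stirling, keeping the exact $\tfrac1{2\pi}$, and replacing the crude $\tfrac12$-threshold by $1-\gamma^{2k+2}\omega n$ termwise should push the leading constant up to $\tfrac1{10}$.

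The main obstacle is the interplay between the $T_k$ lower bound and the arcsine tail sum: the off-diagonal error in $T_k$ is controlled only by $\mathrm{poly}(n)\cdot\gamma^{2k}$, which is worthless for bounded $k$ and becomes negligible only at the logarithmic scale $k^\star\asymp\log n/\log(1/\gamma)$; and it is precisely the heavy $k^{-3/2}$ tail of the arcsine coefficients --- a reflection of the non-smoothness of $\mathrm{ReLU}'$, i.e. of the sign nonlinearity, and the reason a single term is not enough --- that converts the worthless $(k^\star)^{-3/2}$ into the claimed $(k^\star)^{-1/2}$. Everything else (the orthant-probability identity, the sum-of-squares rewriting, and the spectral bound for \eqref{eq:correlate2}) is routine, and the remaining effort is bookkeeping constants to match $\tfrac1{10}$ and $\tfrac{\omega n}{d}$.
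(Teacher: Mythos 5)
Your argument is essentially the paper's: the identity $\my\cdot\mf=\|v\|^2$ (Lemma~\ref{lem:corr}), the orthant probability giving the Gram matrix $H$, the $\arcsin$ Taylor expansion whose Hadamard powers are all PSD (Schur product theorem), discarding the low-order terms, diagonal dominance once $n\gamma^{2k}\le\tfrac12$, and summing the $k^{-3/2}$ tail of the arcsine coefficients to obtain $(k^\star)^{-1/2}$ --- which is precisely Lemma~\ref{lem:discrepancy}. The only differences are cosmetic: you bound the quadratic form $\my^\top H\my$ directly via an explicit tensor sum-of-squares plus a Cauchy--Schwarz/trace estimate, where the paper states it as the matrix inequality $D^{-1}HD^{-1}\succeq c\,\mathrm{I}_n$ with an entrywise bound on $V^{\circ k}$, and --- exactly as in the paper's Lemma~\ref{lem:corr} --- your variance computation in fact yields the stronger and directly usable $\|\mf\|^2\le\frac{\omega n}{d}\,\my\cdot\mf$ rather than \eqref{eq:correlate2} verbatim.
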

To deduce Theorem \ref{thm:NTK} from Theorem \ref{thm:correlate2}, apply Lemma \ref{lem:iteration} with $\alpha = \frac{1}{10} \cdot \sqrt{\frac{\log(1/\gamma)}{\log(2n)}}$ and $\beta = \frac{\omega \cdot n}{d}$. \newline

For $u \in \R^d$, set
\begin{equation} \label{eq:modifgene1}
f_{u}(x) = \psi'(u \cdot x) v \cdot x,
\end{equation}
where $v$ is defined as in \eqref{eq:choices}. Observe that as long as $u \cdot x_i \neq 0, \forall i \in [n]$, a small enough choice of $\delta$ ensures the existence of $f \in \cF_2(\mathrm{ReLU})$ such that $\mf = \mf_{u}$. 

To prove Theorem \ref{thm:correlate2}, it therefore remains to show that $\mf_u$ satisfies \eqref{eq:correlate1} and \eqref{eq:correlate2} with positive probability as $u \sim \mathcal{N}(0, \mathrm{I}_d)$. This will be carried out in two steps: First we show that the correlation $\my \cdot \mf$ for a derivative neuron has a particularly nice form as a function of $u$, see Lemma \ref{lem:corr}. Then, in Lemma \ref{lem:discrepancy} we derive a lower bound for the expectation of the correlation under $u \sim \mathcal{N}(0, \mathrm{I}_d)$. Taken together these lemmas complete the proof of Theorem \ref{thm:correlate2}. \\

\begin{lemma} \label{lem:corr}
Fix $u \in \R^d$, the function $f_u$ defined in \eqref{eq:modifgene1} satisfies
\begin{equation} \label{eq:corr}
\sum_{i=1}^n y_i f_u(x_i) = \left\|\sum_{i : u \cdot x_i \geq 0} y_i x_i \right\|^2 \,,
\end{equation}
and furthermore
\begin{equation} \label{eq:finiteenergy}
\sum_{i=1}^n f_u(x_i)^2 \leq \frac{\omega \cdot n}{d} \cdot \sum_{i=1}^n y_i f(x_i) \,.
\end{equation}
\end{lemma}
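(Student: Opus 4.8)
The plan is to unwind the definitions directly; both claims are one-line computations once one recalls that for $\psi = \mathrm{ReLU}$ we have $\psi'(t) = \mathds{1}\{t \geq 0\}$ at every point of differentiability, so that $f_u(x) = \mathds{1}\{u \cdot x \geq 0\}\,(v \cdot x)$ with $v = \sum_{i : u\cdot x_i \geq 0} y_i x_i$ exactly as fixed in \eqref{eq:choices}. For the identity \eqref{eq:corr} I would simply write
\[
\sum_{i=1}^n y_i f_u(x_i) \;=\; \sum_{i \,:\, u\cdot x_i \geq 0} y_i\,(v\cdot x_i) \;=\; v \cdot \Bigl(\sum_{i \,:\, u\cdot x_i \geq 0} y_i x_i\Bigr) \;=\; v \cdot v \;=\; \Bigl\|\sum_{i \,:\, u\cdot x_i \geq 0} y_i x_i\Bigr\|^2,
\]
where the third equality is nothing but the definition of $v$. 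That is the entire argument for the first part.

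For \eqref{eq:finiteenergy} the point is that dropping the indicator only enlarges a sum of squares, after which the second genericity hypothesis $\frac1n\sum_i x_i x_i^\top \preceq \frac{\omega}{d}\,\mathrm{I}_d$ controls the resulting quadratic form:
\[
\sum_{i=1}^n f_u(x_i)^2 \;=\; \sum_{i \,:\, u\cdot x_i \geq 0}(v\cdot x_i)^2 \;\leq\; \sum_{i=1}^n (v\cdot x_i)^2 \;=\; v^\top\Bigl(\sum_{i=1}^n x_i x_i^\top\Bigr)v \;\leq\; \frac{\omega n}{d}\,\|v\|^2.
\]
Combining this with \eqref{eq:corr} — and using that in Theorem \ref{thm:correlate2} the network $f\in\cF_2(\mathrm{ReLU})$ is chosen so that $\mf = \mf_u$ on the data points — yields $\sum_i f_u(x_i)^2 \leq \frac{\omega n}{d}\sum_i y_i f(x_i)$, which is \eqref{eq:finiteenergy}.

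The only thing requiring attention is not a difficulty but a bookkeeping matter already handled around \eqref{eq:delta}–\eqref{eq:choices}: $f_u$ is a priori the limiting ``derivative neuron'' \eqref{eq:derivativepsi2} rather than literally a two-neuron ReLU network, so one takes $\delta$ small enough (as in \eqref{eq:choices}, which is legitimate since $u\cdot x_i \neq 0$ for all $i$ almost surely under $u\sim\mathcal N(0,\mathrm{I}_d)$ and $\mathrm{ReLU}$ is piecewise linear) that the finite-difference network agrees with $f_u$ at every $x_i$. With that in place there is no real obstacle in this lemma; the substantive content of Theorem \ref{thm:correlate2} lies entirely in the subsequent lower bound on $\E\|v\|^2 = \E\bigl\|\sum_{i : u\cdot x_i \geq 0} y_i x_i\bigr\|^2$, i.e. Lemma \ref{lem:discrepancy}.
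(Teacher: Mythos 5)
Your proof is correct and follows essentially the same route as the paper's: expand $f_u$ using $\psi'=\mathds{1}\{\cdot\geq 0\}$, observe that the correlation equals $\|v\|^2$ by the choice of $v$, and bound the variance by $v^\top(\sum_i x_i x_i^\top)v \leq \lambda_{\max}(\sum_i x_i x_i^\top)\|v\|^2 \leq \frac{\omega n}{d}\|v\|^2$ via the genericity hypothesis. Your closing remark about $f$ versus $f_u$ correctly addresses a small notational looseness in the statement of the lemma.
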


\begin{proof}
We may write
\[
\sum_{i=1}^n f_{u}(x) y_i = \sum_{i=1}^n \psi'(u \cdot x_i) y_i x_i \cdot v \,.
\]
To maximize this quantity we take $v= \sum_{i=1}^n \psi'(u \cdot x_i) y_i x_i$ so that the correlation is exactly equal to:
\begin{equation}\label{eq:sizev}
\|v\|^2 = \left\|\sum_{i=1}^n \psi'(u \cdot x_i) y_i x_i \right\|^2 \,,
\end{equation}
which concludes the proof of \eqref{eq:corr} (note also that $\psi'(t) = \ds1\{t \geq 0\}$ for the ReLU). Moreover for \eqref{eq:finiteenergy} it suffices to also notice that (recall that for ReLU, $|\psi'(t)|\leq 1$)
\begin{equation} \label{eq:modifgene2}
\sum_{i=1}^n f_{u}(x_i)^2 = \sum_{i =1}^n (\psi'(x_i \cdot u))^2 (x_i \cdot v)^2 \leq \lambda_{\max}\left(\sum_{i=1}^n x_i x_i^{\top}\right) \cdot \|v\|^2 \,.
\end{equation}
\end{proof}

\begin{lemma} \label{lem:discrepancy}
One has:
\[
\E_{u \sim \mathcal{N}(0, \mathrm{I}_n)} \left\| \sum_{i : \ u \cdot x_i \geq 0} y_i x_i \right\|^2 \geq \frac{1}{10} \cdot \sqrt{\frac{\log(1/\gamma)}{\log(2n)}} \cdot \sum_{i=1}^n y_i^2 \|x_i\|^2 \,.
\]
\end{lemma}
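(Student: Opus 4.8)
The plan is to compute the expectation in closed form and reduce the statement to a spectral inequality. Expanding the square and using linearity of expectation, $\E_u\bigl\|\sum_{i:\,u\cdot x_i\ge 0}y_ix_i\bigr\|^2=\sum_{i,j}y_iy_j\,(x_i\cdot x_j)\,\P_u\!\left(u\cdot x_i\ge0,\ u\cdot x_j\ge0\right)$. Since $u\sim\mathcal N(0,\mathrm{I}_d)$, the pair $(u\cdot x_i,u\cdot x_j)$ is a centered Gaussian vector with correlation $\rho_{ij}:=\frac{x_i\cdot x_j}{\|x_i\|\,\|x_j\|}$, and the classical orthant-probability (Sheppard) formula gives $\P_u(u\cdot x_i\ge0,\ u\cdot x_j\ge0)=\frac14+\frac1{2\pi}\arcsin\rho_{ij}$. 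Setting $c_i:=y_i\|x_i\|$ and letting $R=(\rho_{ij})_{i,j}$ be the (positive semidefinite) Gram matrix of the normalized points, the right-hand side equals $c^{\top}\phi^{\circ}(R)\,c$, where $\phi(t):=t\bigl(\frac14+\frac1{2\pi}\arcsin t\bigr)$ and $\phi^{\circ}(R)$ denotes $\phi$ applied entrywise (note $\phi(1)=\frac12$, so this includes the diagonal). Because $\|c\|^2=\sum_iy_i^2\|x_i\|^2$ and the labels $y$ are arbitrary, the lemma is exactly the assertion $\lambda_{\min}\bigl(\phi^{\circ}(R)\bigr)\ge\frac1{10}\sqrt{\log(1/\gamma)/\log(2n)}$.

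To prove this I would use the power series $\phi(t)=\frac14t+\sum_{k\ge1}b_kt^{2k}$, whose coefficients $b_k=\frac1{2\pi}\,[t^{2k-1}]\arcsin t$ are strictly positive, sum to $\frac14$, and — this is the arithmetic fact behind the square root — decay like $b_k\asymp k^{-3/2}$ (coming from the asymptotics of the central binomial coefficients in the Taylor expansion of $\arcsin$), so that $\sum_{k>K}b_k\asymp K^{-1/2}$. By the Schur product theorem every Hadamard power $R^{\circ 2k}$ is positive semidefinite, hence for any cutoff $K$ we may discard nonnegative terms: $\phi^{\circ}(R)=\frac14R+\sum_{k\ge1}b_kR^{\circ 2k}\succeq\sum_{k>K}b_kR^{\circ 2k}$. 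For $k>K$ the genericity hypothesis $|\rho_{ij}|\le\gamma$ $(i\ne j)$ forces the off-diagonal entries of $R^{\circ 2k}$ to be at most $\gamma^{2k}$ in absolute value, so $R^{\circ 2k}=\mathrm{I}+F_k$ with $\|F_k\|_{\mathrm{OP}}\le n\gamma^{2k}$ and therefore $R^{\circ 2k}\succeq(1-n\gamma^{2k})\mathrm{I}$. Choosing $K$ of order $\frac{\log(2n)}{\log(1/\gamma)}$ (which is $\ge1$ thanks to $\gamma>\frac1{2n}$) makes $n\gamma^{2k}$ negligible for every $k>K$, and one arrives at $\lambda_{\min}(\phi^{\circ}(R))\ge(1-o(1))\sum_{k>K}b_k\asymp K^{-1/2}\asymp\sqrt{\log(1/\gamma)/\log(2n)}$, i.e. the claim up to the constant.

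The only delicate point — and what I expect to be the real work — is pinning down the absolute constant $\frac1{10}$: the clean estimates above lose a little, so I would (i) use non-asymptotic two-sided bounds on $b_k$ together with an integral comparison for $\sum_{k>K}k^{-3/2}$, and (ii) crucially not throw away the small-$k$ terms when $\gamma$ is small — instead keep the uniform bound $R^{\circ 2k}\succeq(1-n\gamma^{2k})_{+}\,\mathrm{I}$ for every $k\ge1$ and lower-bound $\sum_{k\ge1}b_k(1-n\gamma^{2k})_{+}$ directly. This handles the two extremes transparently: if $\gamma\lesssim(2n)^{-1/2}$ then already $R^{\circ2}\approx\mathrm{I}$ and $\phi^{\circ}(R)\succeq(\frac14-o(1))\mathrm{I}$, far more than needed; while if $\gamma$ is bounded away from $0$ the cutoff $K$ is large and the $k^{-3/2}$ tail delivers exactly the square-root factor. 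Everything else — the identity $\E_u\|\cdot\|^2=c^{\top}\phi^{\circ}(R)c$ via Sheppard's formula, and the positive semidefinite decomposition via Schur products — is routine; the conceptual heart is entirely the $\arcsin$-coefficient decay $b_k\asymp k^{-3/2}$.
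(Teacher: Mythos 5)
Your proposal is correct and follows essentially the same route as the paper's proof: both expand $\E\|\cdot\|^2 = \my^\top H \my$, apply Sheppard's orthant formula to get $H_{ij}$ in terms of $\arcsin(\rho_{ij})$, expand $\arcsin$ in its (nonnegative-coefficient) Taylor series, invoke the Schur product theorem so that all Hadamard powers of the normalized Gram matrix are PSD, discard terms of low Hadamard degree, lower-bound the high-degree Hadamard powers by $\tfrac12\mathrm{I}$ via the $|\rho_{ij}|\leq\gamma$ hypothesis, and extract the $\sqrt{\log(1/\gamma)/\log(2n)}$ factor from the $k^{-3/2}$ decay of the $\arcsin$ coefficients. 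The constant $\tfrac1{10}$ you flag as the delicate point is obtained in the paper by the explicit non-asymptotic bounds $\frac{(2i)!}{(2^i i!)^2}\geq\frac{1}{8i^{3/2}}$ and $\sum_{i\geq N}i^{-3/2}\geq 2/\sqrt N$, exactly the kind of two-sided estimate and tail comparison you anticipate.
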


\begin{proof}
First note that 
\[
\E \left\| \sum_{i : \ u \cdot x_i \geq 0} y_i x_i \right\|^2 = \my^{\top} H \my \,,
\]
where
\[
H_{i,j} = \E [ x_i \cdot x_j \ds1\{u \cdot x_i \geq 0\} \ds1\{u \cdot x_j \geq 0\} ] = \frac{2}{\pi} x_i \cdot x_j \left(\frac{1}{4} + \mathrm{arcsin}\left(\frac{x_i}{\|x_i\|} \cdot \frac{x_j}{\|x_j\|}\right) \right) \,.
\]
Let us denote $V$ the matrix with entries $V_{i,j} = \frac{x_i}{\|x_i\|} \cdot \frac{x_j}{\|x_j\|}$ and $D$ the diagonal matrix with entries $\|x_i\|$. Note that $V \succeq 0$ and thus we have (recall also that $\mathrm{arcsin}(t) = \sum_{i=0}^{\infty} \frac{(2i)!}{(2^i i!)^2} \cdot \frac{t^{2i+1}}{2i+1}$):
\[
D^{-1} H D^{-1} \succeq \frac{2}{\pi} \sum_{i=0}^{\infty} \frac{(2i)!}{(2^i i!)^2} \cdot \frac{V^{\circ 2(i+1)}}{2i+1} \,.
\]
Now observe that for any $i$, by the Schur product theorem one has $V^{\circ i} \succeq 0$. Moreover $V^{\circ i}$ is equal to $1$ on the diagonal, and off-diagonal it is smaller than $\gamma^{i}$, and thus for $i \geq \frac{\log(2n)}{\log(1/\gamma)}$ one has $V^{\circ i} \succeq \frac{1}{2} \mathrm{I}_n$. In particular we obtain:
\[
D^{-1} H D^{-1} \succeq \left( \frac{1}{\pi} \sum_{i \geq \frac{\log(2n)}{2 \log(1/\gamma)}}^{\infty} \frac{(2i)!}{(2^i i!)^2} \cdot \frac{1}{2i+1} \right) \mathrm{I}_n \,.
\]
It is easy to verify that $\frac{(2i)!}{(2^i i!)^2} \geq \frac{1}{8 \cdot i^{3/2}}$, and moreover $\sum_{i \geq N} \frac{1}{i^{3/2}} \geq \frac{2}{\sqrt{N}}$, so that for $\gamma \in (\frac{1}{2n}, 1)$,
\[
\frac{1}{\pi} \sum_{i \geq \frac{\log(2n)}{2 \log(1/\gamma)}}^{\infty} \frac{(2i)!}{(2^i i!)^2} \cdot \frac{1}{2i+1} \geq \frac{1}{10} \cdot \sqrt{\frac{\log(1/\gamma)}{\log(2n)}} \,,
\]
which concludes the proof.
\end{proof}

We conclude the section by sketching the calculation of the total weight of this network. Recall that the neurons are of the form \eqref{eq:modifgene1}. According to \eqref{eq:sizev} and Lemma \ref{lem:discrepancy}, we have that for typical neurons, $\|v\| = \Omega(\sqrt{n})$. Moreover, with high probability we have $\|u\| = \Theta(\sqrt{d})$, and thus the weight of a single neuron is at least $\frac{\|u\|}{\delta} = \frac{\sqrt{d}}{\delta}$. Adding up the neurons, this shows that the total weight is of order $\frac{\sqrt{d}}{\delta}$ (since $k=\tilde \Theta(n/d)$ and the coefficient in front of the neurons is of order $\tilde \Theta(\frac{d}{n})$).

Now suppose that $\delta$ is taken according to \eqref{eq:delta}. The main observation (we omit the details of proof) is that $u$ and $v$ have a mutual distribution of roughly independent Gaussian random vectors (without loss of generality we can assume that $\sum y_i = 0$ which implies $\E u \cdot v = 0$). In this case we have $\delta = \tilde O \left ( \frac{\sqrt{d}}{n \sqrt{n} } \right )$. This implies a total weight of order at least $n \sqrt{n}$. 

\section{The complex network} \label{sec:complex}
We now wish to improve upon the NTK construction, by creating a network with similar memorization properties and which has almost no excess total weight. We will work under the assumptions that
\begin{equation}\label{eq:assumpwellspread}
\|x_i\| = 1 \text{ for every } i \in [n] \text{, and} ,\ |x_i\cdot x_j| \leq \gamma \text{ for } i \neq j.
\end{equation}
In light of Lemma \ref{lem:iteration}, it is enough to find a single neuron whose scalar product with the data set is large. Thus, the rest of this section is devoted to proving the following theorem.
\begin{theorem} \label{thm:approximateonestep}
	Assume that \eqref{eq:assumpwellspread} holds, that $m$ is large enough so that $n \gamma^{m-2} \leq \frac{1}{2}$ and that for all $i \in [n]$, $y_i^2\leq n\gamma^2$ with $\|\my\|^2 \leq n$. Then, there exist $w \in \R^d$ and $b, \sigma \in \R$, with 
	\[
	\|w\|^2,|b|^2 \leq C_m d \log(n)^{m}, |\sigma| =1,
	\]
	such that for
	$$
	f(x) = \sigma \cdot \mathrm{ReLU} \bigl (w \cdot  x + b \bigr),
	$$
	we have
	$$
	\my\cdot\mf  \geq \frac{c_m}{\log(n)^{m^2/2}}\frac{1}{\sqrt{n\gamma^2}} \|\my\|^2,
	$$
	and
	$$\|\mf\|^2 \leq \frac{n}{c_m}\log (n)^{m},$$
	where $c_m,C_m >0$ are constants which depends only on $m$.
\end{theorem}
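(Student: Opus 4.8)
The plan is to construct the single neuron by taking the Taylor expansion of $\mathrm{ReLU}$ (or rather of a carefully chosen shifted/scaled version of it) and isolating the degree-$m$ homogeneous term, which after evaluation on the data points behaves like the $m$-th power kernel $(x_i\cdot x_j)^m$. The key point, foreshadowed in the ``Complex weights'' paragraph of the introduction, is that a complex recombination of neurons lets one kill all the higher-order terms in a Taylor expansion cleanly: if one writes $\mathrm{ReLU}(w\cdot x + b)$ with $w = t\, z + (\text{fixed large bias direction})$ for a random/complex parameter $t$, then expanding in $t$ and integrating $t$ against the appropriate character (or differentiating $m$ times at $t=0$) extracts exactly the $m$-th Hermite-type component, and the residual terms of order $>m$ contribute a controlled error because they come with extra factors of $\gamma^{m}$ after the genericity assumption $|x_i\cdot x_j|\le\gamma$ is used. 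So the first step is to set up this extraction: choose a base point/bias $b$ of size $\sqrt{C_m d \log(n)^m}$ and a scaling of $w$ so that on the unit sphere the argument $w\cdot x + b$ lives in a regime where the Taylor series of $\mathrm{ReLU}$ around $b$ converges and the degree-$m$ coefficient is bounded below by something like $1/\mathrm{poly}_m(\log n)$ while the tail is summable.

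Second, I would compute the correlation $\my\cdot\mf$ for this constructed neuron. After the extraction, $\sum_i y_i f(x_i)$ reduces (up to the controlled higher-order error) to a quadratic form $\my^\top G \my$ where $G_{ij}$ is proportional to $(x_i\cdot x_j)^m$ — the analogue of the matrix $H$ appearing in Lemma \ref{lem:discrepancy}. By the Schur product theorem the $m$-th Hadamard power of the Gram matrix is PSD, it equals $1$ on the diagonal, and off-diagonal entries are at most $\gamma^m \le \frac{1}{2n}$ by the hypothesis $n\gamma^{m-2}\le\frac12$; hence $G \succeq \frac{c_m}{\text{poly}(\log n)}\,\mathrm{I}_n$ after pulling out the explicit degree-$m$ Taylor coefficient. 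This gives $\my\cdot\mf \gtrsim \frac{c_m}{\log(n)^{m^2/2}}\|\my\|^2$ — but I still owe the extra $\frac{1}{\sqrt{n\gamma^2}}$ factor. That factor should come from the normalization: the hypotheses $y_i^2 \le n\gamma^2$ and $\|\my\|^2\le n$ mean $\|\my\|_\infty$ is small relative to $\|\my\|$, and rescaling $\sigma$ (or equivalently the amplitude of the neuron, which the $\sigma=\pm1$ convention hides inside $w,b$) to make $\|\mf\|$ of order $\sqrt{n}\,\mathrm{polylog}$ forces the $\sqrt{n\gamma^2}$ denominator to appear in the correlation bound. I would track this normalization bookkeeping carefully, as it is where the precise constants are set.

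Third, the variance bound $\|\mf\|^2\le\frac{n}{c_m}\log(n)^m$: since $f(x) = \sigma\,\mathrm{ReLU}(w\cdot x+b)$ and $|\sigma|=1$, we have $\|\mf\|^2 = \sum_i \mathrm{ReLU}(w\cdot x_i+b)^2 \le \sum_i (|w\cdot x_i| + |b|)^2 \le 2n(\|w\|^2 + |b|^2) \le C_m' n\, d \log(n)^m$ — this is too big by a factor of $d$, so again the amplitude normalization must be absorbed, i.e. the ``real'' neuron is $\frac{1}{\sqrt d}$(something) and the stated $w,b$ already include a $\sqrt d$ that cancels a $1/\sqrt d$ in the data, leaving $\|\mf\|^2 = O(n\,\mathrm{polylog})$. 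I expect the cleanest route is to first build $f$ with a small continuous amplitude $a$, prove all three bounds with explicit $a$-dependence, then choose $a$ to balance correlation against variance exactly as in Lemma \ref{lem:iteration}'s $\eta$, and finally invoke positive homogeneity of $\mathrm{ReLU}$ to fold $a$ into $(w,b)$ and recover the $|\sigma|=1$ form.

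The main obstacle, I expect, is the second step done rigorously: showing that the complex/Hermite extraction of the degree-$m$ term really does leave an error that is negligible compared to $\frac{c_m}{\log(n)^{m^2/2}}\|\my\|^2$, uniformly over all admissible label vectors. This requires (a) a quantitative lower bound on the $m$-th Taylor/Hermite coefficient of the chosen activation — which is why the power $\log(n)^{m^2/2}$ and the dimension-dependent $\|w\|^2,|b|^2 \le C_m d\log(n)^m$ appear, since one must push the expansion point far enough out that the coefficient does not decay too fast — together with (b) a tail estimate $\sum_{j>m}|\text{coeff}_j|\,\|V^{\circ j}\|_{\mathrm{op}} = o(\text{main term})$, which closes because $\|V^{\circ j}\|_{\mathrm{op}} \le 1 + n\gamma^j$ and $\gamma^j$ decays geometrically past $j=m$. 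Getting the interplay of these two bounds to actually separate — i.e. verifying that the window in which the degree-$m$ coefficient is large enough and the tail is small enough is nonempty for the stated parameters — is the crux, and is presumably where complex (rather than real) perturbations buy the decisive simplification by making the extraction exact rather than merely approximate.
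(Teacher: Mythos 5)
There is a genuine gap: you propose to Taylor-expand $\mathrm{ReLU}(w\cdot x+b)$ in a complex parameter $t$ and extract the degree-$m$ coefficient, but $\mathrm{ReLU}$ is not analytic (it has no Taylor series beyond degree one on any interval away from the kink, and no holomorphic extension at all), so there is no degree-$m$ coefficient to extract. The complex trick in the paper is \emph{not} applied to $\mathrm{ReLU}$. It is applied to a genuinely entire activation, $\varphi = H_m/\sqrt{m}$: the paper first builds a \emph{polynomial} neuron $g(x) = \mathrm{Re}\bigl(a^{-1}\varphi\bigl((w+av(w))\cdot x\bigr)\bigr)$ with $a$ uniform on the unit circle, and because $\varphi$ is a polynomial the Taylor expansion in $a$ is finite and the character average $\E_a[a^{-1+\beta}]=0$ for $\beta\ne 1$ makes the extraction of $\varphi'(w\cdot x)\,v(w)\cdot x$ \emph{exact}, with no tail to control. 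Hermite orthogonality then makes the Gram matrix a \emph{single} Hadamard power $(x_i\cdot x_j)^m$ rather than a series, so there is no residual sum over degrees $>m$ to bound; diagonal dominance under $n\gamma^m\le\tfrac12$ finishes the correlation estimate. Your ``tail estimate over $V^{\circ j}$'' picture is the right one for the NTK argument (Lemma~\ref{lem:discrepancy}, where $\psi'$ is decomposed into all Hermite modes) but not for this theorem.

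The second, and decisive, step you are missing is how the polynomial neuron becomes a $\mathrm{ReLU}$ neuron. The paper writes the two-variable degree-$m$ polynomial $\mathrm{Re}\bigl(z\,\varphi(w\cdot x + i w'\cdot x)\bigr)$ as a Vandermonde combination of one-variable polynomials $p_{z,j}\bigl((w+jw')\cdot x\bigr)$, and then uses that $\mathrm{ReLU}''=\delta_0$ in the distributional sense to represent each compactly-supported one-variable polynomial as an \emph{expectation} of $\mathrm{ReLU}$'s with random bias (Lemma~\ref{lem:reluweights}, Proposition~\ref{prop: approx}). It is this mixture-of-ReLUs step that produces the $M^{-m}$ normalization (hence the $\log(n)^{m^2/2}$ in the correlation bound and the $\log(n)^m$ in $\|w\|^2,|b|^2$), not a balancing of amplitude against variance as you suggest. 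Likewise the $1/\sqrt{n\gamma^2}$ factor is not an output of a calibration at the end: it is built into the definition of the perturbation $v(w)=\frac{1}{\sqrt{n\gamma^2}}\sum_i y_i\varphi'(w\cdot x_i)x_i$, chosen precisely so that $\E\|v(w)\|^2=O(1)$ and the hypercontractive moment/tail bounds (Lemma~\ref{lem:complexnorm}) hold. Without the polynomial-to-ReLU conversion step, your argument cannot close, since the object you construct by complex extraction is a polynomial in $x$, not a $\mathrm{ReLU}$.
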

By invoking an iterative procedure as in Lemma \ref{lem:iteration}, we obtain our main estimate. As it turns out, our construction will give a good fit for almost all points. If $A \subset [n]$ and $v \in \R^n$ we denote below by $v_A$ the projection of $v$ unto the indices contained in $A$. With this notation our result is:
\begin{theorem} \label{thm:hermitetorelu}
	Assume that \eqref{eq:assumpwellspread} holds, that $m$ is large enough so that $n \gamma^{m-2} \leq \frac{1}{2}$ and that $\|\my\|^2 = n$. There exists $f \in \cF_k(\mathrm{ReLU})$ and $A \subset [n]$, with 
	\[
	k = \left\lceil C_{m} \gamma^2 \frac{\log(1/\epsilon)}{\epsilon}n\log(n)^{(m^2 +m)}\right\rceil \,,
	\]
	such that
	\begin{equation} \label{eq:approximation}
	\E[\|\mf_A - \my_A\|^2] \leq \epsilon \|\my\|^2,\ \ \ |A| \geq n-\frac{1}{\gamma^2},
	\end{equation}
	and
	\begin{equation} 
	\mathbf{W}(f) = \tilde{O}\left(\frac{\log(1/\epsilon)}{\epsilon}\sqrt{n\gamma^2d}\right),
	\end{equation}
	where $C_m$ is a constant which depends only on $m$.
\end{theorem}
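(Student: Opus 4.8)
The plan is to deduce this from Theorem \ref{thm:approximateonestep} by feeding it into the boosting machinery of Lemma \ref{lem:iteration}, with one wrinkle: the correlation guarantee in Theorem \ref{thm:approximateonestep} requires the coordinatewise bound $y_i^2 \le n\gamma^2$, which need not survive the iteration, so we must first truncate the index set. First I would fix $A \subset [n]$ by discarding the $\lceil 1/\gamma^2 \rceil$ indices with the largest $|y_i|$; since $\|\my\|^2 = n$, after this removal every surviving coordinate satisfies $y_i^2 \le n\gamma^2$ (otherwise more than $1/\gamma^2$ coordinates would each contribute more than $n\gamma^2$, exceeding the total energy $n$), and $|A| \ge n - 1/\gamma^2$. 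Crucially the truncation is done once, at the start, on the \emph{labels}; the residuals produced inside the iteration are supported on $A$ and we only ever need to correlate with residual vectors, so I would check that Theorem \ref{thm:approximateonestep} applies verbatim to any relabeling of the form $\my_A$ extended by zero — indeed $\|\mathbf{r}_A\|^2 \le \|\my\|^2 = n$ and the coordinatewise bound holds, and the proof of Theorem \ref{thm:approximateonestep} only uses these two facts about the target.

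Next I would run Lemma \ref{lem:iteration} restricted to the coordinates in $A$, i.e. applying it to the data set $(x_i)_{i\in A}$ and targets $(y_i)_{i\in A}$. From Theorem \ref{thm:approximateonestep} we read off the parameters
\[
\alpha = \frac{c_m}{\log(n)^{m^2/2}} \cdot \frac{1}{\sqrt{n\gamma^2}}, \qquad \beta = \frac{1}{c_m}\log(n)^m, \qquad \omega = 1 \text{ (a single neuron)},
\]
so that Lemma \ref{lem:iteration} produces $g = \eta \sum_{j=1}^k f_j \in \cF_k(\mathrm{ReLU})$ with $k \le \frac{\beta}{\alpha^2}\log(1/\epsilon)$ and $\|\mg_A - \my_A\|^2 \le \epsilon\|\my\|^2$. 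Plugging in,
\[
\frac{\beta}{\alpha^2}\log(1/\epsilon) = \frac{1}{c_m}\log(n)^m \cdot \frac{\log(n)^{m^2}}{c_m^2}\cdot n\gamma^2 \cdot \log(1/\epsilon) = C_m \gamma^2 n \log(n)^{m^2+m}\log(1/\epsilon),
\]
which matches the claimed $k$ up to the extra $1/\epsilon$ factor. Here is the one genuinely non-routine point: the $1/\epsilon$ in the stated $k$ comes from the fact that Theorem \ref{thm:approximateonestep}'s variance bound $\|\mf\|^2 \le \frac{n}{c_m}\log(n)^m$ does \emph{not} scale down with the residual norm (this is the "constant order term in the variance" flagged after Lemma \ref{lem:iteration}), so once $\|\mathbf{r}\|^2$ drops below a constant multiple of $n$ the ratio $\|\mf\|^2/\|\mathbf{r}\|^2$ degrades — effectively $\beta$ must be replaced by $\beta/\|\mathbf{r}\|^2 \cdot n \approx \beta/\epsilon$ in the later stages. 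I would handle this either by a two-phase argument (geometric decrease until $\|\mathbf{r}\|^2 \approx \epsilon n$, then a crude bound) or by re-deriving the iteration with the residual-independent variance tracked explicitly; in both cases the bookkeeping yields the $\frac{\log(1/\epsilon)}{\epsilon}$ prefactor.

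Finally, for the total weight: Lemma \ref{lem:iteration} gives $\mathbf{W}(g) \le \frac{\omega}{\alpha}\log(1/\epsilon)$ (again with the extra $1/\epsilon$ from the two-phase analysis), where $\omega$ is the per-neuron weight bound from Theorem \ref{thm:approximateonestep}, namely $\omega = |\sigma|\sqrt{\|w\|^2 + b^2} = \tilde O(\sqrt{d})$ since $\|w\|^2, b^2 \le C_m d\log(n)^m$. Hence
\[
\mathbf{W}(g) = \tilde O\!\left(\frac{\log(1/\epsilon)}{\epsilon}\cdot \sqrt{d}\cdot \sqrt{n\gamma^2}\right) = \tilde O\!\left(\frac{\log(1/\epsilon)}{\epsilon}\sqrt{n\gamma^2 d}\right),
\]
as claimed. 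The main obstacle is not any single estimate but rather the interface issue: making sure the coordinatewise hypothesis of Theorem \ref{thm:approximateonestep} is preserved through the boosting iteration (solved by the one-time truncation to $A$) and correctly propagating the non-vanishing-variance defect into the final $1/\epsilon$ dependence of both $k$ and $\mathbf{W}(f)$.
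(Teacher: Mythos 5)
Your overall architecture (boost Theorem \ref{thm:approximateonestep} through Lemma \ref{lem:iteration}) matches the paper, and your computations of $\alpha$, $\beta$, the iteration count $k$, the weight $\WW(g)$, and the diagnosis of where the extra $1/\epsilon$ comes from are all correct. But there is a genuine gap in how you ensure the coordinatewise hypothesis $y_i^2 \le n\gamma^2$ of Theorem \ref{thm:approximateonestep} remains available inside the iteration. You truncate once, at the start, and then assert that ``the coordinatewise bound holds'' for the residuals $r_{\ell,i}$ throughout the boosting. That assertion is unjustified and, in general, false: the descent inequality controls $\|\mathbf{r}_\ell\|^2$ in $L^2$, not coordinate-by-coordinate, so an individual residual $|r_{\ell,i}|$ can creep above $\sqrt{n\gamma^2}$ even while the total squared residual decays geometrically. (The neuron supplied by Theorem \ref{thm:approximateonestep} has $\|w\|,|b| = \tilde O(\sqrt d)$, so a crude pointwise bound on the accumulated perturbation of a fixed coordinate over $k = \tilde\Theta(n\gamma^2/\epsilon)$ steps can exceed $\sqrt{n\gamma^2}$ by a large factor; nothing in the $L^2$ descent prevents this.)

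The paper resolves this by shrinking the index set \emph{dynamically}: at step $\ell$ it sets $A_\ell = \{i\in A_{\ell-1}: r_{\ell,i}^2 \le n\gamma^2\}$ and invokes Theorem \ref{thm:approximateonestep} only on $(r_{\ell,i})_{i\in A_\ell}$, then bounds the cumulative attrition by running your one-shot energy argument at \emph{every} step rather than only once. Concretely, the descent step gives $\|(\mathbf{r}_{\ell+1})_{A_\ell}\|^2 \le \|(\mathbf{r}_\ell)_{A_\ell}\|^2$, and the definition of $A_\ell$ gives $\|(\mathbf{r}_\ell)_{A_\ell}\|^2 \le \|(\mathbf{r}_\ell)_{A_{\ell-1}}\|^2 - n\gamma^2\,|A_{\ell-1}\setminus A_\ell|$; by induction $\|(\mathbf{r}_{\ell+1})_{A_\ell}\|^2 \le \|\my\|^2 - n\gamma^2(n-|A_\ell|)$, hence $|A_\ell|\ge n-1/\gamma^2$ for all $\ell$. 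Your one-time truncation is exactly the $\ell=0$ instance of this and does not license applying Theorem \ref{thm:approximateonestep} at later steps. Once you replace the static truncation with this dynamic one, the rest of your argument (including absorbing the non-scaling variance into $\beta/\epsilon$ under the running assumption $\|(\mathbf{r}_\ell)_{A_\ell}\|^2 \ge n\epsilon$) closes exactly as in the paper.
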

Observe that if $(x_i)_{i \in [n]}$ are uniformly distributed in the $\mathbb{S}^{d-1}$ then $\gamma = \tilde{O}\left(\frac{1}{\sqrt{d}}\right)$ and we get that 
$\WW(f) = \tilde{O}\left(\frac{\log(1/\epsilon)}{\epsilon}\sqrt{n}\right),$
which is optimal up to the logarithmic factors and the dependence on $\varepsilon$.\\

The proof of Theorem \ref{thm:hermitetorelu} follows an iterative procedure similar to the one carried out in Lemma \ref{lem:iteration}. The only caveat is the condition $y_i^2\leq n\gamma^2$ which appears in Theorem \ref{thm:approximateonestep}. Due to this condition we need to consider a slightly smaller set of indices at each iteration, ignoring ones where the residue becomes too big.\newline

\begin{proof}[of Theorem \ref{thm:hermitetorelu}]
	We build the network iteratively. Set $f_0 \equiv 0$, $A_0 = [n]$ and $r_{0,i} = y_i$.
	Now, for $\ell \in \N$, suppose that there exists $f_\ell \in \cF_{\ell}(\mathrm{ReLU})$ with 
	\[
	\|(\mf_\ell)_{A_{\ell}} - \my_{A_\ell}\| \leq \left(1-\frac{c_m^3}{\log(n)^{m^2+m}}\frac{\varepsilon}{n\gamma^2}\right)\|\my\|^2.
	\]
	Set $r_{\ell,i} = y_i - f_\ell(x_i)$ and $A_{\ell} = \{i \in A_{\ell-1}| r_{\ell,i}^2 \leq n\gamma^2 \}$. We now invoke Theorem \ref{thm:approximateonestep} with the residuals $\{r_{\ell,i}| i\in A_{\ell}\}$ to obtain a neuron $f \in \cF_1(\mathrm{ReLU})$, which satisfies
	\[
	(\mathbf{r}_{\ell})_{A_{\ell}}\cdot\mf  \geq \frac{c_m}{\log(n)^{m^2/2}}\frac{1}{\sqrt{n\gamma^2}} \|\mathbf{r_\ell}\|^2,
	\]
	and
	\[
	\|\mf_{A_{\ell}}\|^2 \leq \frac{n}{c_m}\log (n)^{m}.
	\]
	Since we may assume $\|(\mathbf{r}_{\ell})_{A_{\ell}}\|^2 \geq n\varepsilon$ (otherwise we are done), the second condition can be rewritten as
	\[
	\|\mf_{A_{\ell}}\|^2 \leq \frac{\log (n)^{m}}{c_m\varepsilon}\|(\mathbf{r}_{\ell})_{A_{\ell}}\|^2.
	\]
	In this case the calculation done in Lemma \ref{lem:iteration} with $\alpha =\frac{c_m}{\log(n)^{m^2/2 }}\frac{1}{\sqrt{n\gamma^2}}$ and $\beta = \frac{\log (n)^{m}}{c_m\varepsilon}$ shows that for $\eta := \frac{c_m^2\varepsilon}{\log(n)^{m^2/2+m}}$, one has
	\[
	\|\eta\mf_{A_{\ell}} - (\mathbf{r}_{\ell})_{A_{\ell}} \|^2 \leq \left(1-\frac{c_m^3}{\log(n)^{m^2+m}}\frac{\varepsilon}{n\gamma^2}\right)\|(\mathbf{r}_{\ell})_{A_{\ell}}\|^2.
	\]
	In other words, if we define $f_{\ell+1} \in \cF_{\ell+1}\left(\mathrm{ReLU}\right)$ by $f_{\ell+1} = f_\ell + \eta f$,
	\[
	\|(\mf_{\ell+1})_{A_{\ell}} - \my_{A_{\ell}}\|^2 \leq \left(1-\frac{c_m^3}{\log(n)^{m^2+m}}\frac{\varepsilon}{n\gamma^2}\right)^{\ell+1}\|\my\|^2.
	\]
	The estimate \eqref{eq:approximation} is now obtained with the appropriate choice of $k$. Let us also remark that for any $\ell$,
	\[
	\|(\mathbf{r}_{\ell+1})_{A_{\ell}}\|^2 \leq \|(\mathbf{r}_{\ell})_{A_{\ell}}\|^2 \leq \|(\mathbf{r}_{\ell})_{A_{\ell-1}}\|^2 - n\gamma^2|A_{\ell-1} \setminus A_{\ell}|.
	\]
	By induction
	\[
	\|(\mathbf{r}_{\ell+1})_{A_{\ell}}\|^2 \leq \|\my\|^2 - n\gamma^2\left(n- |A_\ell|\right)
	\]
	This shows that $|A_\ell| \geq n - \frac{1}{\gamma^2}$.
	The bound on $\WW(f_k)$  a direct consequence of Lemma \ref{lem:iteration}.
\end{proof}

\subsection{Correlation of a perturbed neuron with random sign}
Towards understanding our construction, let us first revisit the task of correlating a {\em single} neuron with the data, namely we want to maximize over $w$ the ratio between $\left| \sum_{i=1}^n y_i \psi(w \cdot x_i) \right|$
and $\sqrt{\sum_{i=1}^n \psi(w \cdot x_i)^2}$. Note that depending on whether the sign of the correlation is positive or negative, one would eventually take either neuron $x \mapsto \psi(w \cdot x)$ or $x \mapsto - \psi(w \cdot x)$. Let us first revisit the NTK calculation from the previous section, emphasizing that one can take a random sign for the recombination weight $a$.
\newline

The key NTK-like observation is that a single neuron perturbed around the parameter $w_0$ and with random sign can be interpreted as a linear model over a feature mapping that depends on $w$. More precisely (note that the random sign cancels the $0^{th}$ order term in the Taylor expansion):
\begin{equation} \label{eq:taylor1}
\E_{a \sim \{-\delta,\delta\}}  \ a^{-1} \psi \bigl ((w+ a v) \cdot x \bigr) = \Phi_{w}(x) \cdot v + O(\delta) \,, \text{ where } \Phi_{w}(x) = \psi'(w \cdot x) x \,.
\end{equation}
In particular the correlation to the data of such a single random neuron is equal in expectation to $\sum_{i} y_i \Phi_{w}(x_i) \cdot v + O(\delta)$, and thus it is natural to take the perturbation vector $v$ to be equal to $v_0 = \eta \sum_{i} y_i \Phi_{w}(x_i)$ (where $\eta$ will be optimized to balance with the variance term), and we now find that:
\begin{equation} \label{eq:tomakeformal}
\E_{a \sim \{-\delta,\delta\}} \sum_{i=1}^n y_i a^{-1} \psi((w + a v_0) \cdot x_i) = \left\|\eta \sum_{i} y_i \Phi_{w}(x_i)\right\|^2 + O(\delta) = \eta y^{\top} H(w) y +O(\delta) \,,
\end{equation}
where $H(w)$ is the Gram matrix of the feature embedding, namely 
$$
H(w)_{i,j} = \Phi_{w}(x_i) \cdot \Phi_{w}(x_j).
$$ 
Note that for $\psi=ReLU$, one has in fact that the term $O(\delta)$ in \eqref{eq:taylor1} disappears for $\delta$ small is enough, and thus the correlation to the data is simply $\eta y^{\top} H(w) y$ in that case.
\newline

As we did with the NTK network, we now also take the base parameter $w$ at random from a standard Gaussian. As we just saw, understanding the expected correlation then reduces to lower bound (spectrally) the Gram matrix $H$ defined by $H_{i,j} = \E_{w \sim \mathcal{N}(0, \mathrm{I}_d)} [ \psi'(w \cdot x_i) \psi'(w \cdot x_j) x_i \cdot x_j ]$. This was exactly the content of Lemma \ref{lem:discrepancy} for $\psi = \mathrm{ReLU}$.

\subsection{Eliminating the higher derivatives with a complex trick}
The main issue of the strategy described above is that it requires to take $\delta$ small, which in turn may significantly increase the total weights of the resulting network. Our next idea is based on the following observation: Taking a random sign in \eqref{eq:taylor1} eliminates all the even order term in the Taylor expansion since $\E_{a \sim \{-1,1\}} [ a^{-1} a^{m} ] = 0$ for any even $m$ (while it is $=1$ for any odd $m$). However, taking a {\em complex} $a$, would rid us of {\em all} terms except the first order term. Namely, one has $\E_{a \in \C : |a| = 1} [ a^{-1} a^{m} ] = 0$ for any $m \neq 1$. This suggests that it might make sense to consider neurons of the form
$$
x \mapsto  \mathrm{Re} \left (a^{-1} \psi \bigl ((w+ a v) \cdot x \bigr) \right ),
$$
where $a$ is a complex number of unit norm. \newline


The challenge is now to give sense to $\psi(z)$ for a complex $z$, so that the rest of the argument remains unchanged. This gives rise to two caveats:
\begin{itemize}
	\item
	There is no holomorphic extension of the $\mathrm{ReLU}$ function.
	\item 
	The holomorphic extension of the activation function, even if exists, is a function of two (real) variables. The expression $\psi \bigl ((w+ a v) \cdot x \bigr)$ when $a \notin \R$ is not a valid neuron to be used in our construction since we're only allowed to use the original activation function as our non-linearity. 
\end{itemize}

To overcome these caveats, the construction will be carried out in two steps, where in the first step we use \emph{polynomial} activation functions, and in the second step, we replace these by the original activation function. It turns out that the calculation in Lemma \ref{lem:discrepancy} is particularly simple when the derivative of the activation function is a Hermite polynomial (see Appendix \ref{sec:hermites} for definitions), which is in particular obviously well-defined on $\C$ and in fact holomorphic. In the sequel, we fix $m \in \mathbb{N}$ so that 
\begin{equation}\label{eq:assumpm}
n \gamma^{m-2} \leq \frac{1}{2}.
\end{equation}
Define
$$
\varphi(z) = \frac{1}{\sqrt{m}} H_{m}(z), ~~ z \in \mathbb{C}
$$
where $H_{m}$ is the $m$-th Hermite polynomial. Note that we also have $\varphi' = H_{m-1}$. \\

The first step of our proof will be to obtain a result analogous to Theorem \ref{thm:approximateonestep} where $\psi$ is replaced by $\varphi$.
\begin{lemma} \label{lem:sampling}
	Assume that \eqref{eq:assumpwellspread} holds, that $m$ is large enough so that $n \gamma^{m-2} \leq \frac{1}{2}$ and that for all $i \in [n]$, one has $y_i^2\leq n\gamma^2$. There exist $\tilde{w},\tilde{w}' \in \R^d$ and $z \in \C, |z| = 1$, such that for
	\begin{equation} \label{eq:complexneuron}
	g(x) = Re \left(z\cdot \varphi \left ( \bigl(\tilde{w} + \i \tilde{w}' \bigr)\cdot  x \right )\right),
	\end{equation}
	we have,
	$$
	\my\cdot \mg \geq \frac{1}{2C_m\sqrt{n\gamma^2}}\|\my\|^2.
	$$
	Moreover, its weights admit the bounds 
	\begin{equation} \label{eq:samplingboundednorm}
	\|\tilde{w}\|^2,\|\tilde{w}'\|^2\leq d(4C_m\log(n))^m
	\end{equation}
	and for all $i \in [n]$,
	$$|\tilde{w}\cdot x_i|,|\tilde{w}' \cdot  x_i| \leq (4C_m\log(n))^\frac{m}{2}.$$
\end{lemma}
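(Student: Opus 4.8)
The plan is to combine the complex-recombination trick with the Hermite-polynomial version of the Gram matrix computation from Lemma \ref{lem:discrepancy}, and then to extract a single good neuron by a sampling/averaging argument. First I would fix $w \sim \mathcal{N}(0,\mathrm{I}_d)$ and consider, for a complex unit $a$ and a real perturbation direction $v$, the "complex derivative neuron" $x \mapsto \mathrm{Re}(a^{-1}\varphi((w+av)\cdot x))$. Because $\varphi$ is a polynomial of degree $m$, it is entire, so $\varphi((w+av)\cdot x)$ makes perfect sense, and expanding in powers of $a$ and averaging over $a$ uniform on the unit circle kills every term except the first-order one: $\E_{|a|=1}[a^{-1}\varphi((w+av)\cdot x)] = \varphi'(w\cdot x)\,(v\cdot x) = H_{m-1}(w\cdot x)\,(v\cdot x)$, exactly (no $O(\delta)$ error, unlike \eqref{eq:taylor1}). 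Taking $v$ proportional to $\sum_i y_i \varphi'(w\cdot x_i) x_i = \sum_i y_i H_{m-1}(w\cdot x_i) x_i$ as in \eqref{eq:tomakeformal}, the expected correlation becomes $\eta\, \my^\top H \my$ where $H_{i,j} = \E_w[H_{m-1}(w\cdot x_i)H_{m-1}(w\cdot x_j)]\,x_i\cdot x_j$.

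Next I would compute $H$ spectrally. The key fact (Appendix \ref{sec:hermites}, and the reason Hermite polynomials are chosen) is that $\E_{w\sim\mathcal{N}(0,\mathrm{I}_d)}[H_{m-1}(w\cdot x_i)H_{m-1}(w\cdot x_j)] = (x_i\cdot x_j)^{m-1}$ when $\|x_i\|=\|x_j\|=1$, by orthogonality of Hermite polynomials under the Gaussian measure applied to the bivariate Gaussian $(w\cdot x_i, w\cdot x_j)$. Hence under assumption \eqref{eq:assumpwellspread}, $H_{i,j} = (x_i\cdot x_j)^{m}$, i.e. $H = V^{\circ m}$ in the notation of Lemma \ref{lem:discrepancy} (here $D = \mathrm{I}_n$). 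By the Schur product theorem $V^{\circ m}\succeq 0$, it has $1$'s on the diagonal, and its off-diagonal entries are bounded by $\gamma^m$ in absolute value; since $n\gamma^{m-2}\le \frac12$ forces $n\gamma^m \le \frac12$ (as $\gamma \le 1$), Gershgorin gives $H = V^{\circ m}\succeq \frac12 \mathrm{I}_n$. Therefore $\my^\top H \my \ge \frac12\|\my\|^2$, and with the normalization balancing correlation against the variance term $\|v\|^2$ (which is controlled exactly as in Lemma \ref{lem:corr}, giving $\|\mg\|^2 \lesssim \lambda_{\max}(\sum x_i x_i^\top)\cdot$(correlation)), one gets expected correlation $\gtrsim \frac{1}{\sqrt{n\gamma^2}}\|\my\|^2$ after accounting for the $\sqrt{n\gamma^2}$ scale coming from the hypothesis $y_i^2 \le n\gamma^2$ and $\|\my\|^2 \le n$.

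The final step is to pass from the $w$-averaged, $a$-averaged statement to a single deterministic neuron of the form \eqref{eq:complexneuron}, i.e. $g(x) = \mathrm{Re}(z\,\varphi((\tilde w + \i\tilde w')\cdot x))$ — note $v$ scalar times $\i$ amounts to absorbing the imaginary unit, so $w + av$ with $a$ on the unit circle is $\tilde w + \i \tilde w'$ after rotating; one picks a realization of $(w,a)$, hence of $(\tilde w, \tilde w')$ and $z = a^{-1}$, for which the correlation is at least its expectation. The main obstacle — and the reason the statement also asserts weight bounds $\|\tilde w\|^2, \|\tilde w'\|^2 \le d(4C_m\log n)^m$ and $|\tilde w\cdot x_i|, |\tilde w'\cdot x_i| \le (4C_m\log n)^{m/2}$ — is that this selection must be done on a high-probability event where the Gaussian $w$ is well-behaved: one needs $\|w\|^2 = O(d)$, each $|w\cdot x_i| = O(\sqrt{\log n})$, and $\|v\|^2 = \|\sum_i y_i H_{m-1}(w\cdot x_i)x_i\|^2$ not much larger than its expectation (which is where powers of $\log n$ enter, since $|H_{m-1}(t)|$ grows polynomially and on the good event $|w\cdot x_i| = O(\sqrt{\log n})$ makes $|H_{m-1}(w\cdot x_i)| = O((\log n)^{(m-1)/2})$, and controlling $\|v\|$ then uses the spectral bound $\lambda_{\max}(\sum x_i x_i^\top) = O(n/d)$ or a direct second-moment estimate). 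So the plan is: (i) show the correlation-in-expectation bound via the Hermite Gram identity; (ii) intersect with the good event to get simultaneously the correlation lower bound and the stated norm/inner-product upper bounds on $(\tilde w, \tilde w')$; (iii) conclude by a union bound / averaging that a single $(\tilde w, \tilde w', z)$ works. The delicate bookkeeping is keeping all the $C_m$-dependent $\log n$ powers consistent between the correlation bound and the weight bounds, which is exactly the "constant order term we do not know how to remove" flavor the authors allude to.
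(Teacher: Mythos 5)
Your proposal follows essentially the same route as the paper: the complex averaging over $|a|=1$ isolates the first-order Taylor term exactly, the Hermite orthogonality (Lemma \ref{lem:orthogonalhermite}) gives $H_{ij}=(x_i\cdot x_j)^m$ with $H\succeq\tfrac12\mathrm{I}_n$ by diagonal dominance from $n\gamma^m\le\tfrac12$, and one then restricts to a high-probability event (controlled via tail bounds for low-degree polynomials of a Gaussian) on which both the correlation survives and the weight/inner-product bounds hold, extracting a single realization. One small caution: the option you float of using $\lambda_{\max}(\sum_i x_i x_i^\top)=O(n/d)$ to control $\|v(w)\|$ is not available here, since that spectral hypothesis is part of $(\gamma,\omega)$-genericity in the NTK section and not of \eqref{eq:assumpwellspread}; the paper instead does the direct second-moment computation of $\E_w|v(w)\cdot x_j|^2$ using $|x_i\cdot x_{i'}|\le\gamma$, $y_i^2\le n\gamma^2$, and $n\gamma^{m-2}\le\tfrac12$, which is the alternative you also mention and is the one that works under the stated assumptions.
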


Given the above lemma, the second step towards Theorem \ref{thm:approximateonestep} is to replace the polynomial attained by the above lemma by a ReLU. This will be achieved by:
\begin{itemize}
\item
Observing that any polynomial in two variables $p(x,y)$ can be written as a linear combination of polynomials which only depend on one direction, hence polynomials of the form $q( ax+by )$. 
\item 
Using the fact that any nice enough function of one variable can be written as a mixture of ReLUs, due to the fact that the second derivative of the ReLU is a Dirac function (this was observed before, see e.g., [Lemma A.4, \cite{Ji2020Neural}]).
\item
The above implies that one can write the function $(x,y) \mapsto \varphi(x+iy)$ as the expectation of ReLUs such that the variance at points close to the origin is not too large.
\end{itemize}
These steps will be carried out in Section \ref{sec:hermitetorelu} below. 

\subsection{Constructing the complex neuron}
Our approach to Lemma \ref{lem:sampling} will be to construct an appropriate distribution on neurons of type \eqref{eq:complexneuron}, and then show that the desirable properties are attained with positive probability. In what follows, let $w \sim \mathcal{N}(0, \mathrm{I}_d)$. Define
$$
v(w) := \frac{1}{\sqrt{n\gamma^2}} \sum_{i=1}^n y_i \varphi'(w \cdot x_i) x_i.
$$
Next, let $a$ be uniformly distributed in the complex unit circle, and finally define
\begin{equation} \label{eq:randomcomplexneuron}
g(x) = \mathrm{Re}\left(a^{-1}\varphi((w +a v(w))\cdot x)\right).
\end{equation}

We will prove the following two bounds.
\begin{lemma} \label{lem:compelxonestepcorr}
Under the assumptions \eqref{eq:assumpwellspread} and \eqref{eq:assumpm}, one has
\[
\E\left[\my\cdot \mg\right] \geq \frac{1}{2\sqrt{n\gamma^2}} \|\my\|^2 \,.
\]
\end{lemma}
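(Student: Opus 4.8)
The plan is to exploit the complex averaging identity $\E_{a:|a|=1}[a^{-1}a^j] = \mathbbm{1}\{j=1\}$ to isolate the first-order term of the Taylor expansion of $\varphi$, which is a polynomial of degree $m$ and hence has a \emph{finite} expansion with no remainder. First I would fix $w$ and expand, for each data point $x_i$, the polynomial $z \mapsto \varphi((w + a v(w))\cdot x_i)$ in powers of $a$: writing $s_i = w\cdot x_i$ and $t_i = v(w)\cdot x_i$ we have $\varphi(s_i + a t_i) = \sum_{j=0}^m \frac{\varphi^{(j)}(s_i)}{j!} t_i^j a^j$. Taking $\mathrm{Re}(a^{-1}\,\cdot\,)$ and then $\E_a$ kills every term except $j=1$, and since $a$ is uniform on the unit circle $\E_a[\mathrm{Re}(a^{-1}\cdot c\, a)] = \mathrm{Re}(c)$ for any complex constant $c$ (here $c = \varphi'(s_i)\,t_i$ is in fact real because $w, v(w), x_i$ are all real). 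Hence
$$
\E_a\left[ g(x_i) \right] = \varphi'(w\cdot x_i)\, \bigl(v(w)\cdot x_i\bigr).
$$

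Next I would plug in the definition $v(w) = \frac{1}{\sqrt{n\gamma^2}}\sum_{j} y_j \varphi'(w\cdot x_j) x_j$ and take the correlation with the labels:
$$
\E\left[\my\cdot\mg\right] = \E_w\left[ \sum_{i=1}^n y_i \varphi'(w\cdot x_i)\, v(w)\cdot x_i \right] = \frac{1}{\sqrt{n\gamma^2}}\, \E_w\left[ \Bigl\| \sum_{i=1}^n y_i \varphi'(w\cdot x_i)\, x_i \Bigr\|^2 \right] = \frac{1}{\sqrt{n\gamma^2}}\; \my^\top H \my,
$$
where $H_{i,j} = \E_{w\sim\mathcal N(0,\mathrm I_d)}\bigl[\varphi'(w\cdot x_i)\varphi'(w\cdot x_j)\bigr]\, x_i\cdot x_j$ is the Gram matrix of the feature embedding $\Phi_w(x) = \varphi'(w\cdot x) x$, exactly as in \eqref{eq:tomakeformal}. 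So the whole statement reduces to the spectral lower bound $\my^\top H \my \geq \tfrac12 \|\my\|^2$.

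To establish the spectral bound I would use the fact that $\varphi' = H_{m-1}$ is a Hermite polynomial, together with the standard identity $\E_{w\sim\mathcal N(0,\mathrm I_d)}[H_{m-1}(u\cdot w)H_{m-1}(v\cdot w)] = (m-1)!\,(u\cdot v)^{m-1}$ for unit vectors $u,v$ (this is the Hermite analogue of the $\mathrm{arcsin}$ computation in Lemma \ref{lem:discrepancy}, but cleaner: there is a single monomial rather than a power series). Since $\|x_i\|=1$, this gives $H_{i,j} = (m-1)!\,(x_i\cdot x_j)^m$, i.e.\ $H = (m-1)!\, V^{\circ m}$ with $V_{i,j} = x_i\cdot x_j$. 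By the Schur product theorem $V^{\circ m}\succeq 0$; its diagonal entries are $1$ and, by \eqref{eq:assumpwellspread}, its off-diagonal entries are bounded in absolute value by $\gamma^m$, which by the normalization $n\gamma^{m-2}\leq\frac12$ (i.e.\ $n\gamma^m \leq \frac12\gamma^2 \leq \frac12$) is at most $\frac{1}{2n}$ in total mass per row; hence by Gershgorin $V^{\circ m}\succeq \tfrac12 \mathrm I_n$. Therefore $H \succeq \tfrac{(m-1)!}{2}\mathrm I_n \succeq \tfrac12 \mathrm I_n$ (using $m\geq 1$), and $\my^\top H\my \geq \tfrac12\|\my\|^2$, which completes the proof.

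The main obstacle, such as it is, is bookkeeping rather than conceptual: one must be careful that the argument of $\varphi$ in \eqref{eq:randomcomplexneuron} is genuinely complex (of the form $s_i + a t_i$ with $s_i,t_i$ real and $|a|=1$), so that $\varphi$ is being evaluated on its holomorphic extension and the finite Taylor expansion is valid and exact; and one must check the normalization constant $\frac{1}{\sqrt m}$ in the definition $\varphi = \frac{1}{\sqrt m}H_m$ interacts correctly with $\varphi' = H_{m-1}$ and the Hermite orthogonality identity (the paper has arranged constants precisely so $\varphi' = H_{m-1}$ with no prefactor, so $H_{i,j} = (m-1)!\,(x_i\cdot x_j)^m$ and the bound $H\succeq\tfrac12\mathrm I_n$ is clean). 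Neither is a real difficulty; the complex trick is doing all the work.
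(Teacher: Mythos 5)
Your proposal is correct and follows the same route as the paper: complex averaging to isolate the first-order Taylor term, reduction to the Gram matrix $H$, and a spectral lower bound $H\succeq\tfrac12\mathrm{I}_n$ via Hermite orthogonality and diagonal dominance (this last step is the paper's Lemma~\ref{lem:hermite1}). One small normalization slip: the paper's $H_{m-1}$ is already normalized so that $\E[H_{m-1}(X)H_{m-1}(Y)]=\rho^{m-1}$ (Lemma~\ref{lem:orthogonalhermite}), hence $H_{i,j}=(x_i\cdot x_j)^m$ with no $(m-1)!$ factor; you quoted the identity for the unnormalized probabilist's Hermite polynomials. This is harmless here since $(m-1)!\geq1$ only helps, but the factor should simply not appear under the paper's conventions.
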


\begin{lemma} \label{lem:complexnorm}
	Suppose that the assumptions \eqref{eq:assumpwellspread} and \eqref{eq:assumpm} hold. Assume also that for every $i$ we have $y_i \leq n \gamma^2$. Then one has, for a constant $C_m > 0$ which depends only on $m$,
	\[
	\E [ \|\mg\|^2 ] \leq C_mn.
	\]
	Moreover, for every $i \in [n]$ and $s>s_0$, for some constant $s_0$,
	\begin{equation} \label{eq:complexprobabilitybound}
	\P\left(|\mathrm{Re}((w+v(w))\cdot x_i| > s \right),\P\left(|\mathrm{Im}((w+v(w)))\cdot x_i| > s\right) \leq \exp\left(\frac{1}{C_m}s^{-2/m}\right).
	\end{equation}
\end{lemma}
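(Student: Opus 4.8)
The plan is to prove the two assertions essentially independently, in both cases reducing everything to moment and tail control of the two scalar random variables $s_i := w\cdot x_i$ and $t_i := v(w)\cdot x_i$. Since $\|x_i\|=1$ we have $s_i\sim\mathcal{N}(0,1)$; and, expanding $v(w)$,
$$
t_i \;=\; \frac{1}{\sqrt{n\gamma^2}}\sum_{j=1}^n y_j\,\varphi'(s_j)\,(x_i\cdot x_j)
$$
is a mean-zero polynomial of degree $m-1$ in the Gaussian vector $w$ (mean-zero because $\E[\varphi'(Z)]=0$ for $Z\sim\mathcal{N}(0,1)$, as $m\ge 2$). Writing $a=e^{\i\theta}$ we have $(w+a\,v(w))\cdot x_i = s_i + a\,t_i$, hence $|g(x_i)| = \bigl|\mathrm{Re}\bigl(a^{-1}\varphi(s_i+a\,t_i)\bigr)\bigr| \le |\varphi(s_i+a\,t_i)|$.

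\emph{The bound $\E[\|\mg\|^2]\le C_m n$.} Since $\varphi = \tfrac1{\sqrt m}H_m$ is a polynomial of degree $m$, $|\varphi(z)|\le C_m(1+|z|)^m$ for every $z\in\C$, so together with $|s_i+a\,t_i|\le|s_i|+|t_i|$ we get the crude pointwise bound $g(x_i)^2 \le C_m\bigl(1+|s_i|^{2m}+|t_i|^{2m}\bigr)$. Summing over $i$, it is enough to show $\E|s_i|^{2m}, \E|t_i|^{2m} = O_m(1)$. The first is a Gaussian moment. For the second, since $t_i$ has degree $m-1$ in $w$, Gaussian hypercontractivity gives $\|t_i\|_{2m}\le(2m-1)^{(m-1)/2}\|t_i\|_2$, so it suffices to bound $\E[t_i^2]$. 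Using the orthogonality of Hermite polynomials (for standard Gaussians $X,Y$ with correlation $\rho$ one has $\E[\varphi'(X)\varphi'(Y)] = \kappa_m\,\rho^{\,m-1}$, $\kappa_m := \E[\varphi'(Z)^2]$) and $\E[s_js_{j'}] = x_j\cdot x_{j'}$,
$$
\E[t_i^2] \;=\; \frac{\kappa_m}{n\gamma^2}\sum_{j,j'} y_j y_{j'}\,(x_i\cdot x_j)(x_i\cdot x_{j'})\,(x_j\cdot x_{j'})^{m-1}\,.
$$
Isolating the diagonal $j=j'$ and using $\|x_i\|=1$, $|x_i\cdot x_j|\le\gamma$ and $|x_j\cdot x_{j'}|\le\gamma$ for distinct indices, $y_i^2\le n\gamma^2$, the normalization $\|\my\|^2\le n$ (as in Theorem~\ref{thm:approximateonestep}), the Cauchy--Schwarz bound $\bigl(\sum_j|y_j|\bigr)^2\le n\|\my\|^2$, and crucially $n\gamma^{m-2}\le\tfrac12$, one checks that every term is $O_m(1)$; hence $\E[t_i^2]=O_m(1)$, so $\E[g(x_i)^2]=O_m(1)$ and $\E[\|\mg\|^2]=\sum_i\E[g(x_i)^2]\le C_m n$.

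\emph{The tail bound.} The real and imaginary parts of $(w+a\,v(w))\cdot x_i = s_i + a\,t_i$ are $s_i+(\cos\theta)\,t_i$ and $(\sin\theta)\,t_i$, both of modulus at most $|s_i|+|t_i|$, so by a union bound it suffices to estimate $\P(|s_i|>s/2)$ and $\P(|t_i|>s/2)$. The former is the Gaussian tail $\le e^{-s^2/8}$. For the latter, $t_i$ is a mean-zero degree-$(m-1)$ polynomial in a standard Gaussian with $\|t_i\|_2=O_m(1)$ (shown above), so the hypercontractive tail bound for Gaussian polynomials gives $\P(|t_i|>s)\le 2\exp\bigl(-c_m\,s^{2/(m-1)}\bigr)$; since $2/(m-1)\ge 2/m$ this is $\le\exp\bigl(-\tfrac1{C_m}s^{2/m}\bigr)$ for $s$ larger than some $s_0$. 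Combining the two tails (and enlarging $C_m$, $s_0$) yields \eqref{eq:complexprobabilitybound}.

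The only step that is not entirely routine is the second-moment estimate for $t_i$, where one must verify that all off-diagonal contributions in the double sum over $(j,j')$ are absorbed; this is exactly what the hypothesis $n\gamma^{m-2}\le\tfrac12$ buys. We stress that crude bounds suffice here: a sharper term-by-term analysis of $\E_a[g(x_i)^2]$ would reveal a non-vanishing $\tfrac12\varphi(s_i)^2$ contribution, whose expectation $\sum_i\tfrac12\E[\varphi(s_i)^2]=\Theta_m(n)$ does not shrink with the residual --- this is the constant-order term in the variance mentioned in the introduction, responsible for the extra $1/\epsilon$ in the final width --- but for Lemma~\ref{lem:complexnorm} the bound $O_m(n)$ is all that is needed.
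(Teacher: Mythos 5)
Your proof is correct and follows essentially the same route as the paper's: bound $g(x_i)^2$ pointwise by $C_m(1+|w\cdot x_i|^{2m}+|v(w)\cdot x_i|^{2m})$, use Gaussian hypercontractivity to reduce the $2m$-th moment of $v(w)\cdot x_i$ to its second moment, and compute that second moment by Hermite orthogonality and the incoherence/smallness hypotheses. The only difference is in deriving the tail bound: you split $\mathrm{Re}(s_i+a t_i)$ and $\mathrm{Im}(s_i+a t_i)$ by a union bound into a Gaussian tail for $s_i=w\cdot x_i$ and a hypercontractive tail for the degree-$(m-1)$ polynomial $t_i=v(w)\cdot x_i$, whereas the paper applies the Janson polynomial-tail bound directly to the full expression $\mathrm{Re}((w+av(w))\cdot x_i)$ using the already-established second-moment estimate. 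Both yield $\exp(-s^{2/m}/C_m)$ (the displayed bound in the lemma has a sign/exponent typo); your split is arguably slightly cleaner and makes the dominant $2/(m-1)$ exponent visible, but the substance is the same. (As written your $\kappa_m=\E[\varphi'(Z)^2]$ equals $1$ by the normalization of the Hermite polynomials, so you may drop it.)
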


Recall the definition of the Gram matrix $H$,
$$H_{i,j} = \E_{w \sim \mathcal{N}(0, \mathrm{I}_d)}\left[\varphi'(w\cdot x_i)\varphi'(w\cdot x_j)x_i\cdot x_j\right].$$
As suggested in \eqref{eq:tomakeformal}, we will need to bound $H$ from below. We will need the following lemma.
\begin{lemma} \label{lem:hermite1}
	Under the assumptions \eqref{eq:assumpwellspread} and \eqref{eq:assumpm}, one has $H \succeq \frac{1}{2} \mathrm{I}_n$.
\end{lemma}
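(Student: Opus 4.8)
The plan is to mimic the computation in Lemma \ref{lem:discrepancy}, exploiting the fact that $\varphi' = H_{m-1}$ is a single Hermite polynomial so that the Gaussian expectation defining $H$ collapses to one monomial in $x_i \cdot x_j$. First I would record the key Hermite identity: for unit vectors $x_i, x_j$ with $\|x_i\| = \|x_j\| = 1$, if $w \sim \mathcal{N}(0, \mathrm{I}_d)$ then $(w \cdot x_i, w \cdot x_j)$ is a centered Gaussian with unit variances and covariance $\rho_{ij} := x_i \cdot x_j$, and the classical formula
\[
\E\left[ H_{m-1}(w \cdot x_i)\, H_{m-1}(w \cdot x_j) \right] = (m-1)!\, \rho_{ij}^{\,m-1} .
\]
Hence $H_{i,j} = \frac{(m-1)!}{m}\, \rho_{ij}^{\,m-1} \cdot \rho_{ij} = \frac{(m-1)!}{m}\, \rho_{ij}^{\,m}$, using $\varphi = \tfrac{1}{\sqrt m} H_m$ and the extra factor $x_i \cdot x_j = \rho_{ij}$ in the definition of $H$. (Under \eqref{eq:assumpwellspread} all $\|x_i\| = 1$, so no normalization matrix is needed, unlike in Lemma \ref{lem:discrepancy}.)

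Next I would write $H = \frac{(m-1)!}{m}\, V^{\circ m}$ where $V_{i,j} = \rho_{ij} = x_i \cdot x_j$, so $V \succeq 0$ (it is a Gram matrix) and therefore, by the Schur product theorem, $V^{\circ m} \succeq 0$ with all diagonal entries equal to $1$. The off-diagonal entries of $V^{\circ m}$ satisfy $|(V^{\circ m})_{i,j}| = |\rho_{ij}|^m \leq \gamma^m$ for $i \neq j$. By assumption \eqref{eq:assumpm}, $n\gamma^{m-2} \leq \tfrac12$, which gives $(n-1)\gamma^m \leq n\gamma^m \leq \tfrac12 \gamma^2 \leq \tfrac12$. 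By Gershgorin's disc theorem, the smallest eigenvalue of $V^{\circ m}$ is at least $1 - \sum_{j \neq i} |\rho_{ij}|^m \geq 1 - (n-1)\gamma^m \geq \tfrac12$, so $V^{\circ m} \succeq \tfrac12 \mathrm{I}_n$. Since $\frac{(m-1)!}{m} \geq 1$ for $m \geq 2$ (and the case $m=1$ would need separate attention, but \eqref{eq:assumpm} forces $m \geq 3$ for any nontrivial $n, \gamma$), we conclude $H \succeq \frac{(m-1)!}{m} \cdot \tfrac12 \mathrm{I}_n \succeq \tfrac12 \mathrm{I}_n$, which is the claim.

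There is no real obstacle here — the argument is a short and direct specialization of Lemma \ref{lem:discrepancy}, with the infinite arcsine series replaced by the single term coming from $\varphi' = H_{m-1}$. The only point requiring a little care is getting the constant $\frac{(m-1)!}{m}$ right from the normalization $\varphi = \tfrac{1}{\sqrt m} H_m$ and the convention for the Hermite polynomials used in Appendix \ref{sec:hermites} (probabilists' vs. physicists'), and checking that it is at least $1$ so that the stated bound $H \succeq \tfrac12 \mathrm{I}_n$ holds with the clean constant rather than something $m$-dependent; if the convention scales $\E[H_{m-1}^2]$ differently one simply absorbs the resulting constant, but under the probabilists' normalization the computation above goes through verbatim.
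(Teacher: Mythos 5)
Your proof takes essentially the same route as the paper: use Hermite orthogonality (Lemma \ref{lem:orthogonalhermite}) to collapse $H_{i,j}$ to a power of $x_i \cdot x_j$, then apply diagonal dominance/Gershgorin together with $n\gamma^m \leq n\gamma^{m-2} \leq \tfrac12$. The one slip is the constant $\frac{(m-1)!}{m}$: in this paper the Hermite polynomials are already normalized so that $\E[H_{m-1}(X)H_{m-1}(Y)] = \rho^{m-1}$, and by \eqref{eq:hermitederivative} one has $\varphi' = H_{m-1}$ exactly, so in fact $H_{i,j} = (x_i\cdot x_j)^m$ with no extra factor; your flagged caveat about the normalization convention was warranted, and since the correct constant is $1$, the bound $H \succeq \tfrac12 \mathrm{I}_n$ holds cleanly.
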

\begin{proof}
	If $X$ and $Y$ are standard, jointly-normal random variables with $\E\left[XY\right] =\rho$, by Lemma \ref{lem:orthogonalhermite} one has $\E[ H_{m-1}(X) H_{m-1}(Y) ] = \rho^{m-1}$ and thus here $H_{i,j}  = (x_i \cdot x_j)^{m}$. In particular if $n \cdot \gamma^{m} \leq 1/2$ we obtain that for all $i \in [n]$ one has $1=H_{i,i} \geq 2 \sum_{j \neq i} |H_{i,j}|$. By diagonal dominance we conclude that $H \succeq \frac{1}{2} \mathrm{I}_n$.
\end{proof}

\begin{proof} [Proof of Lemma \ref{lem:compelxonestepcorr}]
	For any $\beta \in \mathbb{N}, \beta \neq 1$, we have that $\E\left[a^{-1+\beta}\right]=0$. Thus, since $\varphi$ is an entire function, by taking its Taylor expansion around the point $w$, we obtain the identity
	$$
	\E_a\left[a^{-1} \varphi((w + a v(w)) \cdot x)\right] = \sum\limits_{\beta = 0}^\infty \frac{1}{\beta!} \E_{a}\left[a^{-1 + \beta}\varphi^{(\beta)}(w\cdot x_i)(v(w)\cdot x)^\beta\right] = \varphi'(w\cdot x)v(w)\cdot x.
	$$
	So we can estimate
	\begin{align*}
	\E_{w, a} \left[\sum_{i=1}^n y_i \mathrm{Re}\left(a^{-1} \varphi((w + a v(w)) \cdot x_i)\right)\right] &= \sum_{i=1}^n y_i \E_w \left [ \varphi'(w \cdot x_i) v(w) \cdot x_i \right ]\\	
	&= \frac{1}{\sqrt{n\gamma^2}}\sum\limits_{i,j}y_iy_j\E_{w}\left[\varphi'(w\cdot x_i)\varphi'(w\cdot x_j)x_i\cdot x_j\right] \\
	&= \frac{1}{\sqrt{n\gamma^2}}\my^{\top} H\my\geq \frac{1}{2\sqrt{n\gamma^2}}\|\my\|^2,
	\end{align*}
	where the last inequality follows from Lemma \ref{lem:hermite1}.
\end{proof}

\begin{proof} [Proof of Lemma \ref{lem:complexnorm}]
In what follows, the expression $C_m$ will denote a constant depending only on $m$, whose value may change between different appearances. Our objective is to obtain an upper bound on
$$
\|\mg\|^2 = \sum_{i=1}^n |\mathrm{Re}\left(a^{-1} \varphi((w + a v(w)) \cdot x_i)\right)|^2.
$$
Since $\varphi$ is a polynomial of degree $m$ we have 
$$
\|\mg\|^2 \leq C_{m} \sum_{i=1}^n \left(1+|w \cdot x_i|^{2 m} + |v(w) \cdot x_i|^{2 m} \right).
$$ 
Moreover $w \cdot x_i$ is a standard Gaussian and thus $\E [|w \cdot x_i|^{2 m}] \leq (2m)^m$. It therefore remains to control, for $x \in \{x_1,\hdots, x_n\}$, the expression
\[
|v(w) \cdot x|^{2 m} =\frac{1}{(n\gamma^2)^{m}} \left| \sum_{i=1}^n y_i H_{m-1}(w \cdot x_i) x_i \cdot x \right|^{2 m} \,.
\]
From hypercontractivity and the fact that the Hermite polynomials are eigenfunctions of the Ornstein-Uhlenbeck operator we have  (see \cite[Theorem 5.8]{janson1997gaussian})
$$
\E\left[|v(w) \cdot x|^{2 m}\right] \leq (2m)^{2m^2} \E\left[|v(w) \cdot x|^{2}\right]^{m}.
$$ 
Thus, it will be enough to show that $\E_{w} [ |v(w) \cdot x_j|^2 ] = O(1)$. We calculate
	\begin{align*}
\E_{w} [ |v(w) \cdot x_j|^2 ] & = \frac{1}{n\gamma^2}\E \left| \sum_{i=1}^n y_i H_{m-1}(w \cdot x_i) x_i \cdot x_j \right|^{2} \\
& = \frac{1}{n\gamma^2}  \left ( \E \sum_{i=1}^n y_i^2 \E[(H_{m-1}(w \cdot x_i))^2] |x_i \cdot x_j|^2  \right . \\
& ~~~~ + \left . \sum_{i \neq i'} y_i y_{i'} \E[H_{m-1}(w \cdot x_i) H_{m-1}(w \cdot x_i')] (x_i \cdot x_j) (x_{i'} \cdot x_j) \right ) \\
& \leq \frac{1}{n\gamma^2} \left (\sum_{i=1}^n y_i^2 |x_i \cdot x_j|^2 + \frac{\gamma^{m-1} }{n\gamma^2}\sum_{i \neq i'} |y_i y_{i'} (x_{i'} \cdot x_j)(x_{i} \cdot x_j)| \right ) \,,
\end{align*}
where we used that $\E[(H_{m-1}(w \cdot x_i))^2]=1$ and 
\[
|\E[H_{m-1}(w \cdot x_i) H_{m-1}(w \cdot x_{i'})]| = |x_i \cdot x_{i'}|^{m-1} \leq \gamma^{m-1},
\]
valid whenever $i \neq i'$.
By using that $\|\my\|^2 = O(n)$, we get
$$\frac{1}{n\gamma^2}\sum_{i=1}^n y_i^2 |x_i \cdot x_j|^2 \leq \frac{y_j^2}{n\gamma^2} + \frac{\|\my\|^2}{n} = O(1).$$
To deal with the last term, observe that since $i \neq i'$ then $|(x_{i'} \cdot x_j)(x_{i} \cdot x_j)| \leq \gamma$, thus
$$\frac{\gamma^{m-1} }{n\gamma^2}\sum_{i \neq i'} |y_i y_{i'} (x_{i'} \cdot x_j)(x_{i} \cdot x_j)| \leq \frac{\gamma^{m-2}}{n}\left(\sum\limits_{i=1}^n |y_i|\right)^2\leq \gamma^{m-2}\|\my\|^2 = O(1),$$
where in the last inequality we've used $\gamma^{m-2} \leq \frac{1}{n}$.
So, $\E_{w} [ |v(w) \cdot x_i|^2 ] = O(1)$ as required. \newline

Finally, to see \eqref{eq:complexprobabilitybound} observe that both $\mathrm{Re}(w+v(w))$ and $\mathrm{Im}(w+v(w))$ are given by degree $m$ polynomials of $w$, a standard Gaussian random vector. In \cite[Theorem 6.7]{janson1997gaussian} it is shown that there exists a constant $a_m$ depending only on $m$, such that if $P$ is a polynomial of degree $m$ and $X$ is a standard normal random variable, then for every $t > 2$,
\[
\P\left(|p(X)| > t\sqrt{\E\left[p(X)^2\right]}\right)\leq \exp\left(-a_mt^{2/m}\right) 
\]
Thus, since 
\[
\E\left[|\mathrm{Re}(w+v(w))\cdot x_i|^2\right], \E\left[|\mathrm{Im}(w+v(w))\cdot x_i|^2\right] \leq \E \left[1+|w \cdot x_i|^{2 m} + |v(w) \cdot x_i|^{2 m}\right] \leq C_m,
\]
the bound \eqref{eq:complexprobabilitybound} follows.
\end{proof}
\newline
We are finally ready to prove the existence of the complex neuron. \newline
\begin{proof}[Proof of Lemma \ref{lem:sampling}]
	Consider the random variable
	$$
	F = \mg \cdot \my = \sum_{i=1}^n y_i g (x_i)
	$$
	and set $W = \mathrm{Re}(w+v(w))$ and $W' = \mathrm{Im}(w+v(w))$. Lemma \ref{lem:compelxonestepcorr} gives
	$$
	\E\left[F\right] \geq \frac{1}{2\sqrt{n\gamma^2}} \|\my\|^2.
	$$
	Using Lemma \ref{lem:complexnorm} and Cauchy-Schwartz we may see that
	$$
	\E\left[F^2\right] \leq \sum\limits_{i=1}^n y_i^2\E_{w, a} \left[\sum\limits_{i=1}^n g(x_i)^2 \right] \leq C_mn\|\my\|^2.
	$$
	Define $G = \mathbbm{1}_{\left\{\exists i : |W\cdot  x_i|,|W'\cdot x_i| \geq (4C_m\log(n))^\frac{m}{2}\right\}}$. A second application of Cauchy-Schwartz gives
	\begin{align*}
	\E\Big[F G &\Big] \leq \sqrt{C_mn\|\my\|^2 \E\left[G\right]}.
	\end{align*}
	Now, the estimate \eqref{eq:complexprobabilitybound} and a union bound yields
	$$
	\E\left[G \right] \leq n\exp\left(-4 \log(n)\right) \leq \frac{1}{n^3}.
	$$
	Therefore,
	$$
	\E\Big[F G \Big] \leq \frac{1}{n} C_m \|\my \|.
	$$
	Combining this with the lower bound of $\E[F]$, we finally have
	\begin{align*}
	\E\Big[F (1-G)\Big] \geq \frac{1}{2\sqrt{n\gamma^2}} \|\my\|^2 - \frac{1}{n} C_m \|\my \| \geq \frac{1}{4\sqrt{n\gamma^2}} \|\my\|^2,
	\end{align*}
	where the last inequality is valid as long as $n$ is large enough.
	The claim now follows via taking a realization that exceeds the expectation. Since we might as well assume that the sample contains an orthonormal basis, \eqref{eq:samplingboundednorm} follows as well.
\end{proof}

\subsection{Approximating a complex neuron with ReLU activation}\label{sec:hermitetorelu}

Our goal in this section is to prove the following lemma, showing that the complex polynomial can be essentially replaced by a ReLU. We write $\psi(t) = \mathrm{ReLU}(t)$ and recall that $\phi(t) = \frac{1}{\sqrt{m}} H_m(t)$.

\begin{lemma} \label{lem:reluweights}
For any $w,w' \in \R^d, z \in \C$ with $|z|=1$ and $M >0$, there exist a pair of random variables $S,B$ and a random vector $W\in \R^d$ such that for any $x \in \mathbb{S}^{d-1}$ with $m\left(|w\cdot x| + |w'\cdot x|\right) \leq M$,
$$\E\left[S\psi(W\cdot x-B)\right] = \frac{c_{z,m}}{ M^m}\mathrm{Re}\left(z\cdot \varphi(w\cdot x + \i w'\cdot x)\right),$$
where $c_{z,m}$ depends only on $m$ and $z$ and there exists another constant $c_m$, such that
\begin{equation} \label{eq:constanbound}
\frac{1}{c_m}\geq c_{z,m} \geq c_m.
\end{equation}
Moreover,
\[|S| = 1, |B| \leq M\,\, \text{ almost surely,}
\]
and
\[
W = w + j\cdot w' \text{ for some } j \in \{0,1,\dots,m\}.
\]
\end{lemma}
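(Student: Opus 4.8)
The plan is to establish Lemma~\ref{lem:reluweights} in three moves. First I reduce the bivariate polynomial $R(u,v):=\mathrm{Re}\bigl(z\,\varphi(u+\i v)\bigr)$ to a sum of univariate ``ridge'' polynomials along the $m+1$ directions $w+jw'$, $j\in\{0,\dots,m\}$. Since $\varphi=\tfrac1{\sqrt m}H_m$ has degree $m$ with coefficients bounded in terms of $m$ only and $|z|=1$, $R$ is a real polynomial of degree $m$ whose coefficients are $O_m(1)$. Writing $R=\sum_{k\le m}R_k$ in homogeneous parts and using that the $k+1$ ridge monomials $(u+jv)^k$, $j=0,\dots,k$, span the homogeneous bivariate polynomials of degree $k$ (a Vandermonde argument on the nodes $0,\dots,k$), I obtain $R(u,v)=\sum_{j=0}^m q_j(u+jv)$ with $\deg q_j\le m$ and all coefficients $O_m(1)$. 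On $\{m(|w\cdot x|+|w'\cdot x|)\le M\}$, writing $u=w\cdot x$, $v=w'\cdot x$, the argument $t_j:=(w+jw')\cdot x$ ranges over $[-a_j,a_j]$ with $a_j=\max(1,j)M/m\le M$; the gap $M-a_j$ is of order $M$ for $j=0$ and vanishes only for $j=m$. Since the decomposition is far from unique---one may subtract a constant from any $q_j$ and add it to $q_0$ without changing $\sum_j q_j(u+jv)$---I arrange $q_j(-a_j)=0$ for all $j\ge1$, unloading all ``endpoint constants'' onto $q_0$, the one direction with slack to spare.

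Second, I represent each rescaled ridge term $\tfrac1{M^m}q_j$ on $[-a_j,a_j]$ as a signed mixture of shifted ReLUs via the integral form of Taylor's theorem (equivalently, $\psi''=\delta_0$; cf.\ [Lemma A.4, \cite{Ji2020Neural}]): for any polynomial $q$ and $t\in[\alpha,\beta]$,
\[
q(t)=q(\alpha)+q'(\alpha)\,\psi(t-\alpha)+\int_\alpha^\beta q''(s)\,\psi(t-s)\,ds .
\]
For $j\ge1$, taking $\alpha=-a_j$ kills the constant term and leaves a mixture of ReLUs $\psi(t-s)$ with corners $s\in[-a_j,a_j]\subseteq[-M,M]$ together with the single corner at $-a_j$. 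For $j=0$ there remains the constant $\tfrac1{M^m}q_0(-M/m)$, which by Step~1 and the $M^{-m}$ scaling is $O_m(1)$; it is realized (up to sign) as $\psi(t+M/m+c)-\psi(t+M/m)$ for the appropriate $c=O_m(1)>0$, whose corners $-M/m-c$ and $-M/m$ lie in $[-M,M]$ once $M\gtrsim_m1$ (always the case in the applications). Tracking sizes, $\tfrac1{M^m}q_j$ has first and second derivatives of order $O_m(1/M)$ on its interval, so the signed measure $\mu$ on $\{0,\dots,m\}\times[-M,M]$ with $\tfrac1{M^m}R(w\cdot x,w'\cdot x)=\int\psi\bigl((w+jw')\cdot x-s\bigr)\,d\mu(j,s)$ has total mass at most some $C_m$.

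Third, I normalize. On the region $\psi(w\cdot x-M)\equiv0$ because $|w\cdot x|\le M/m<M$, so I may add to $\mu$ any nonnegative mass at $(j,s)=(0,M)$ without altering the represented function; adding mass $C_m-\|\mu\|$ yields $\bar\mu$ of total mass exactly $C_m$. Sampling $(j,s)$ from the probability measure $|\bar\mu|/C_m$ and setting $W=w+jw'$, $B=s$, and $S$ equal to the sign of $\bar\mu$ at $(j,s)$ gives $\E\bigl[S\,\psi(W\cdot x-B)\bigr]=\tfrac1{C_m}\cdot\tfrac1{M^m}R(w\cdot x,w'\cdot x)$, i.e.\ $c_{z,m}=1/C_m$, which depends only on $m$ and lies in $[c_m,1/c_m]$ for a suitable $c_m$; moreover $|S|=1$, $|B|\le M$, and $W\in\{w+jw':j\in\{0,\dots,m\}\}$ hold almost surely by construction.

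The step I expect to be the crux is making the constraint $|B|\le M$ compatible with representing the \emph{full} degree-$m$ polynomial: the outermost direction $j=m$ has no slack, so it can carry no standalone constant, and the construction only closes because constants migrate freely between ridge terms and can all be parked on the slack-rich $j=0$ direction. The remaining work---bounding the coefficients of the $q_j$ and hence $\|\mu\|$, and checking the constant-gadget corners stay in $[-M,M]$---is routine, and is exactly where $|z|=1$ and the scaling $M^{-m}$ enter.
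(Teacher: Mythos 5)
Your proof is correct and takes a genuinely different route from the paper's. Both approaches start from the same ridge decomposition $\mathrm{Re}\bigl(z\,\varphi(u+\i v)\bigr)=\sum_{j=0}^m q_j(u+jv)$ (the paper's Corollary~\ref{corr: representation}, proved by the same Vandermonde argument you use), and both end by reading a signed mixture of shifted ReLUs as an expectation $\E[S\psi(W\cdot x-B)]$. The divergence is in how the boundary of the ReLU representation is handled. The paper multiplies each ridge piece by a smooth bump $\chi_M$ supported on $[-2M,2M]$, making it compactly supported so that the $\psi''=\delta_0$ identity (Proposition~\ref{prop: approx}) applies with no boundary terms; the direction-dependent normalizations $c_{z,j}$ are then equalized by sampling $J$ with $\P(J=j)\propto c_{z,j}^{-1}$, and a continuity-plus-compactness argument in $z$ bounds $c_{z,m}$. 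You instead apply Taylor's theorem with integral remainder on each ridge interval, producing explicit boundary terms; you kill the endpoint constants for $j\ge1$ by parking them on $q_0$ (constants float freely between ridge terms), absorb the $q'(\alpha)\psi(t-\alpha)$ term as one more ReLU corner, realize the surviving constant on $q_0$ with a two-ReLU gadget whose corners sit in $[-M,-M/m]$, and normalize by padding the total measure with ``dead'' mass at the neuron $\psi(w\cdot x-M)$, which is identically zero on the admissible domain. Your route avoids both the bump function and the $z$-compactness step---your $c_{z,m}=1/C_m$ ends up independent of $z$, which is slightly cleaner---and it actually delivers the stated $|B|\le M$, whereas the paper's bump, supported on $[-2M,2M]$, gives only $|B|\le 2M$ as written.

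One caveat worth recording: the constant gadget (its two corners must fit inside $[-M,-M/m]$) and your bound $\|\mu\|=O_m(1)$ both need $M\gtrsim_m 1$, a hypothesis not present in the lemma statement. This is harmless where the lemma is used---Theorem~\ref{thm:approximateonestep} takes $M\asymp\log(n)^{m/2}$---and the paper's own proof carries analogous implicit constraints on $M$, but you should say it explicitly. Also note a small overcount: $\tfrac1{M^m}q_j''$ is $O_m(1/M^2)$, not $O_m(1/M)$; this only makes your bound on $\|\mu\|$ looser (the $O_m(1)$ contribution comes entirely from the constant gadget), so nothing breaks.
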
 

Let us first see how to complete the proof of Theorem \ref{thm:approximateonestep} using the combination of the above with Lemma \ref{lem:sampling}.

\begin{proof}[of Theorem \ref{thm:approximateonestep}]
	Invoke Lemma \ref{lem:sampling} to obtain a function
	\[
	g(x) = Re \left(z\cdot \varphi \left ( x \cdot \tilde{w} + \i x \cdot \tilde{w}' \right )\right)
	\]
	such that
	\[
	\my\cdot \mg \geq \frac{1}{2C_m\sqrt{n\gamma^2}}\|\my\|^2,
	\]
	and such that for every $i \in [n]$,
	\[
	|\tilde{w}\cdot x_i|,|\tilde{w}'\cdot x_i| \leq C_m\log(n)^\frac{m}{2}.
	\] 
	Set $M = 2C_mm \log(n)^\frac{m}{2}$, so that $m(|\tilde{w}\cdot x_i| + |\tilde{w}'\cdot x_i|) \leq M$.
	By Lemma \ref{lem:reluweights}, we may find $\sigma, w, b$, such that
	\[
	|b|^2\leq M^2,\|w\|^2 \leq m^2(\|\tilde{w}\| + \|\tilde{w}'\|)^2 \leq 4C_mm^2d \log(n)^{m}, \ \ \ |\sigma| = 1,
	\]
	for which we define $f(x) = \sigma\psi(w\cdot x-b)$.
	The lemma then implies,
	\[
	\my\cdot \mf \geq \frac{c_{m}}{M^m}\my\cdot \mg \geq \frac{c_{m}'}{M^m\sqrt{n\gamma^2}}  \|\my\|^2,
	\]
	and
	\begin{align*}
	\|\mf\|^2 = \sum\limits_{i=1}^n\left(\psi(w\cdot x_i - b)\right)^2 &\leq 2 \sum\limits_{i=1}^n\left(|w\cdot x_i|^2 + b^2\right) \\
	&\leq 2M^2n + 2\sum\limits_{i=1}^n|w\cdot x_i|^2.
	\end{align*}
	By Lemma \ref{lem:reluweights}, $w = \tilde{w} + j\cdot\tilde{w}'$ for some $j = 0,...,m$ . Hence,
	$|w\cdot x_i|^2 \leq 2m(|\tilde{w}\cdot x_i|^2+ |\tilde{w}'\cdot x_i|^2)$ and
	\[
	\|\mf\|^2 \leq 2M^2n + 4m\sum\limits_{i=1}^n(|\tilde{w}\cdot x_i|^2+ |\tilde{w}\cdot x_i|^2) \leq 10mM^2n.
	\]
	The proof is concluded by substituting $M$.
\end{proof}

It remains to prove Lemma \ref{lem:reluweights}. This is done in the next subsections.

\subsubsection{On homogeneous polynomials}
Since our aim is to approximate a polynomial by ReLU, we first find an appropriate polynomial basis to work with.
\begin{lemma}
	Any polynomial of the form $(x,y) \to Re(z \cdot (x+\i y)^m)$ has the form,
	$$
	\sum_{j=0}^m a_j (x + j\cdot y)^{m}. 
	$$
\end{lemma}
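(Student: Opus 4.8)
The plan is to show that the restriction map from a convenient finite-dimensional space of polynomials onto the target space is surjective by a dimension-counting argument, after exhibiting enough linearly independent "directional" polynomials $(x+jy)^m$. First I would observe that both sides of the claimed identity are real-valued homogeneous polynomials of degree $m$ in the two variables $(x,y)$; such polynomials form a real vector space $\mathcal{H}_m$ of dimension $m+1$, with basis $\{x^{m-s}y^s : 0 \le s \le m\}$. The left-hand side $(x,y)\mapsto \mathrm{Re}(z\cdot(x+\i y)^m)$ clearly lies in $\mathcal{H}_m$, as does each $(x+jy)^m$, so it suffices to prove that the $m+1$ polynomials $p_j(x,y) := (x+jy)^m$, for $j=0,1,\dots,m$, span $\mathcal{H}_m$ — equivalently, since there are exactly $m+1$ of them, that they are linearly independent.

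The key step is therefore the linear independence of $p_0,\dots,p_m$. Expanding by the binomial theorem, $p_j(x,y) = \sum_{s=0}^m \binom{m}{s} j^s\, x^{m-s} y^s$, so in the basis $\{x^{m-s}y^s\}_{s=0}^m$ the coordinate vector of $p_j$ is $\bigl(\binom{m}{s} j^s\bigr)_{s=0}^m$. Stacking these as rows gives the matrix $\bigl(\binom{m}{s} j^s\bigr)_{j,s=0}^m$, which factors as $V \cdot \mathrm{diag}\bigl(\binom{m}{0},\dots,\binom{m}{m}\bigr)$, where $V = (j^s)_{j,s=0}^m$ is a Vandermonde matrix in the distinct nodes $0,1,\dots,m$. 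Since the nodes are distinct, $\det V = \prod_{0\le i<j\le m}(j-i) \neq 0$, and all binomial coefficients are nonzero, so the stacked matrix is invertible; hence $p_0,\dots,p_m$ are linearly independent and thus a basis of $\mathcal{H}_m$.

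Consequently every element of $\mathcal{H}_m$, and in particular $(x,y)\mapsto \mathrm{Re}(z\cdot(x+\i y)^m)$, is a unique real linear combination $\sum_{j=0}^m a_j (x+jy)^m$, which is exactly the asserted form. I do not expect a genuine obstacle here: the only thing to be a little careful about is that we want coefficients $a_j \in \R$, which is automatic because we are working over the real vector space $\mathcal{H}_m$ and $\{p_j\}$ is a real basis of it. (If one wanted an explicit formula for the $a_j$, one could invert the Vandermonde-type system above, but that is not needed for the statement.)
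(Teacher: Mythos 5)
Your proof is correct and takes essentially the same approach as the paper: both expand $(x+jy)^m$ by the binomial theorem, identify the coordinates of $p_j$ as (up to binomial-coefficient scaling) the Vandermonde row $(1,j,\dots,j^m)$, and invoke the nonvanishing Vandermonde determinant to conclude that $\{p_j\}_{j=0}^m$ is a basis of the $(m+1)$-dimensional space $\mathcal{H}_m$ of homogeneous degree-$m$ polynomials. The only cosmetic difference is that the paper absorbs the binomial coefficients into the reference basis of $\mathcal{H}_m$, whereas you factor them out as a separate diagonal matrix.
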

\begin{proof}
	Define
	$$\mathcal{H}_m = \{p(x,y)| p \text{ is a degree } m \text{ homogeneous polynomial}\},$$
	and 
	$$A_m = \{(x+j\cdot y)^m|j = 0,...,m\}.$$ 
	It will suffice to show that $A_m$ forms a basis for $\mathcal{H}_m$. The result will follow since $Re(z\cdot (x+\i y)^m)$ is clearly homogeneous. For $0 \leq j \leq m$, set $p_j = (x + j\cdot y)^m$, so that $p_j \in A_m$ and 
	$$p_j = \sum\limits_{k=0}^m \binom{m}{k}j^ky^kx^{m-k}.$$ 
	Note that the set $\{\binom{m}{k}y^kx^{m-k}| k = 0,...,m\}$ forms a basis for $\mathcal{H}_m$ and in that basis $p_j$ has coordinates $(1,j,\dots,j^m)$. Taking the Vandermonde determinant of the matrix whose columns are $\{p_j: j = 0,...,m\}$, we see that it must also be a basis for $\mathcal{H}_m$.
\end{proof}
\begin{corollary} \label{corr: representation}
	Let $w,w' \in \R^d$ and $z \in \C$, then we have the following representation:
	$$\mathrm{Re}\left(z\cdot \varphi(w\cdot x + \i w'\cdot x)\right) = \sum\limits_{j=0}^m p_{z,j}((w+jw')\cdot x),$$
	where each $p_{z,j}$ is a polynomial of degree $m$, which depends continuously on $z$.
\end{corollary}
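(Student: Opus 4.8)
The plan is to reduce the statement to the previous lemma by splitting $\varphi$ into its homogeneous components. Write $\varphi(t) = \tfrac{1}{\sqrt m} H_m(t) = \sum_{k=0}^m c_k t^k$ with real coefficients $c_k$ (and $c_m \neq 0$). Put $a = w\cdot x$ and $b = w'\cdot x$, and recall that $(w + j w')\cdot x = a + j b$. Then the identity to be proved is purely a statement about bivariate polynomials: we must exhibit univariate polynomials $p_{z,j}$, $j = 0,\dots,m$, of degree at most $m$ and depending continuously on $z$, with
$$
\mathrm{Re}\left(z\,\varphi(a+\i b)\right) \;=\; \sum_{k=0}^m \mathrm{Re}\left(z\, c_k (a+\i b)^k\right) \;=\; \sum_{j=0}^m p_{z,j}(a + j b).
$$

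First I would handle each degree in isolation. For fixed $k$, the function $q_{z,k}(a,b) := \mathrm{Re}\left(z\, c_k (a+\i b)^k\right)$ is a homogeneous polynomial of degree $k$, i.e. an element of $\mathcal{H}_k$. The Vandermonde argument in the proof of the previous lemma applies verbatim with $m$ replaced by $k$ and shows that $\{(a + j b)^k : j = 0,\dots,k\}$ is a basis of $\mathcal{H}_k$. Hence there are unique scalars $\alpha_{z,k,j}$, $0 \le j \le k$, with $q_{z,k}(a,b) = \sum_{j=0}^k \alpha_{z,k,j}\,(a + j b)^k$.

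Next I would reassemble across degrees. For $j = 0,\dots,m$ set $p_{z,j}(t) := \sum_{k=j}^m \alpha_{z,k,j}\, t^k$, a polynomial of degree at most $m$. Swapping the order of summation over the index set $\{(j,k) : 0 \le j \le k \le m\}$ gives
$$
\sum_{j=0}^m p_{z,j}(a + j b) \;=\; \sum_{k=0}^m \sum_{j=0}^k \alpha_{z,k,j}(a+jb)^k \;=\; \sum_{k=0}^m q_{z,k}(a,b) \;=\; \mathrm{Re}\left(z\,\varphi(a+\i b)\right),
$$
which is the claim after substituting back $a = w\cdot x$ and $b = w'\cdot x$. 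For the continuity in $z$: since $z = \mathrm{Re}(z) + \i\,\mathrm{Im}(z)$ and $c_k \in \R$, one has $q_{z,k}(a,b) = c_k\big(\mathrm{Re}(z)\,\mathrm{Re}((a+\i b)^k) - \mathrm{Im}(z)\,\mathrm{Im}((a+\i b)^k)\big)$, which depends linearly — hence continuously — on $z$; the coefficients $\alpha_{z,k,j}$ are obtained by applying the fixed inverse Vandermonde matrix to the coordinates of $q_{z,k}$ in the monomial basis of $\mathcal{H}_k$, so they are continuous in $z$, and therefore so are the $p_{z,j}$.

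There is no genuine obstacle here; the only point requiring a little care is that the previous lemma is stated for homogeneous polynomials of degree exactly $m$, so one must note that its proof in fact produces a basis of $\mathcal{H}_k$ for every $k \le m$, and then carry out the routine bookkeeping of decomposing $\varphi$ into homogeneous parts and collecting, for each $j$, the coefficients $\alpha_{z,k,j}$ over all $k$ into a single univariate polynomial $p_{z,j}$.
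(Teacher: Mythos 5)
Your proof is correct and follows the same route the paper intends: decompose $\varphi$ into its homogeneous components, apply the Vandermonde/basis argument of the preceding lemma at each degree $k\le m$, and regroup by $j$ to form the $p_{z,j}$, with continuity in $z$ coming from linearity. The paper compresses all of this into the phrase ``immediate from the previous lemma'' (and handles continuity via the identity $p_{z,j}=\mathrm{Re}(z)\,p_{1,j}+\mathrm{Im}(z)\,p_{\i,j}$); you have simply spelled out the bookkeeping, including the useful observation that the lemma's Vandermonde argument yields a basis of $\mathcal{H}_k$ for every $k\le m$, not just $k=m$.
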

\begin{proof}
	The representation is immediate from the previous lemma. To address the point of continuity, we write
	\begin{align*}
	\mathrm{Re}\left(z\cdot \varphi(w\cdot x + \i w'\cdot x)\right) &= \mathrm{Re}(z)\mathrm{Re}\left(\varphi(w\cdot x + \i w'\cdot x)\right) - \mathrm{Im}(z)\mathrm{Im}\left(\varphi(w\cdot x + \i w'\cdot x)\right) \\
	&=\mathrm{Re}(z)\mathrm{Re}\left(\varphi(w\cdot x + \i w'\cdot x)\right) + \mathrm{Im}(z)\mathrm{Re}\left(\i \cdot \varphi(w\cdot x + \i w'\cdot x)\right)\\
	&= \sum\limits_{j=0}^m\left(\mathrm{Re}(z) p_{1,j}((w+jw')\cdot x) + \mathrm{Im}(z)p_{\i,j}((w+jw')\cdot x)\right).
	\end{align*}
	So, $p_{z,j}$ is a linear combination of  $p_{1,j}$ and $p_{\i,j}$, with coefficients that vary continuously in $z$.
\end{proof}
\subsubsection{ReLUs as universal approximators}
Next, we show how ReLU functions might be used to universally approximate compactly supported functions.
\begin{proposition} \label{prop: approx}
	Let $f: \R \to \R$ be twice differentiable and compactly supported on $[-M,M]$. Then, there exists a pair of random variables $S,B$, such that, for every $x \in [-M,M]$,
	$$\E\left[S\psi(x - B)\right] = \frac{f(x)}{\int |f''|},$$
	and such that, almost surely $|B| \leq M$ and $|S| = 1$.
\end{proposition}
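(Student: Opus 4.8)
The plan is to represent $f$ via its second derivative, exploiting the identity $\psi''(t) = \delta_0(t)$ in the distributional sense, so that $f(x) = \int_{-M}^{M} f''(b)\, \psi(x-b)\, db$ for $x$ in the support (plus an affine correction that vanishes on $[-M,M]$ because $f$ and $f'$ vanish at the endpoints). First I would write, for $x \in [-M, M]$,
\[
f(x) = \int_{-M}^{x} (x-b) f''(b)\, db = \int_{-M}^{M} \psi(x-b)\, f''(b)\, db,
\]
where the first equality is Taylor's theorem with integral remainder together with $f(-M) = f'(-M) = 0$, and the second uses $\psi(x-b) = 0$ for $b > x$. (One checks that $f''$ is integrable since $f$ is twice differentiable and compactly supported; if one wants to be careful about $f''$ merely being, say, continuous, this is automatic.)

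Next I would turn this integral into an expectation. Let $Z = \int_{-M}^{M} |f''(b)|\, db = \int |f''|$. Define a random variable $B$ supported on $[-M,M]$ with density $|f''(b)|/Z$, and set $S = \mathrm{sign}(f''(B)) \in \{-1, +1\}$ (choosing $S=1$ on the measure-zero set where $f''$ vanishes, say). Then for every $x \in [-M,M]$,
\[
\E\left[S\, \psi(x - B)\right] = \int_{-M}^{M} \mathrm{sign}(f''(b))\, \psi(x-b)\, \frac{|f''(b)|}{Z}\, db = \frac{1}{Z} \int_{-M}^{M} \psi(x-b)\, f''(b)\, db = \frac{f(x)}{\int |f''|},
\]
which is exactly the claimed identity. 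By construction $|B| \leq M$ and $|S| = 1$ almost surely. One degenerate case to dispatch: if $f'' \equiv 0$ then $f$ is affine and compactly supported, hence $f \equiv 0$, and the statement is vacuous (or holds trivially with any choice of $S, B$); so we may assume $\int|f''| > 0$.

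There is no real obstacle here — the only point requiring a sentence of care is the justification of the Taylor-with-remainder formula and the fact that the linear term $x \mapsto f(-M) + f'(-M)(x+M)$ that would ordinarily appear is identically zero on account of the compact support, so that no extra ReLU (or linear unit, which we are not allowed) is needed. The mild subtlety, if $f''$ is allowed to be merely continuous rather than, say, Lipschitz, is just integrability of $f''$ on the compact interval, which is immediate. So the heart of the argument is the single display above; everything else is bookkeeping about the probability space.
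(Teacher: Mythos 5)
Your proof is correct and follows essentially the same approach as the paper: both reduce the claim to the identity $f(x) = \int_{-M}^{M}\psi(x-b)f''(b)\,db$ and then realize the right-hand side as an expectation by taking $B$ with density $|f''|/\int|f''|$ and $S=\mathrm{sign}(f''(B))$. The only (minor) difference is how the identity is justified: the paper invokes $\psi''=\delta_0$ in the distributional sense and then argues that the resulting affine correction must vanish by compact support, whereas you obtain it directly from Taylor's theorem with integral remainder together with $f(-M)=f'(-M)=0$, which is a slightly more self-contained derivation of the same fact.
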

\begin{proof}
	Observe that, when considered as a distribution, $\psi(x)'' = \delta_0$. Therefore, there exists a linear function $L$ such that
	$$
	f(x) + L(x) = \int_{-M}^M \psi(x-y)  f''(y) dy.
	$$
	$f''(x)$ is the second derivative of a compactly supported function which implies that $f(x) + L(x)$ is compactly supported as well. Hence, $L(x) \equiv 0$.
	Let $B$ be the random variable whose density is $\frac{|f''|}{\int_{-M}^M |f''|}$ and set $S = \mathrm{sign}(f''(B)))$. We now have
	$$
	\E\left[S\psi(x - B)\right] = \frac{\int_{-M}^M \psi(x-y)  f''(y) dy}{\int_{-M}^M |f''|} = \frac{f(x)}{\int |f''|}.
	$$
\end{proof}

\subsubsection{Completing the proof of Lemma \ref{lem:reluweights}}

Set $\chi_M$ to be a bump function for the interval $[-M,M]$. That is, 
\begin{itemize}
	\item $\chi_M:\R \to \R$ is smooth.
	\item $0\leq \chi_M \leq 1$.
	\item $\chi_M(x) = 1$ for $x \in [-M,M]$.
	\item $\chi_M(x) = 0$ for $|x| > 2M$.
\end{itemize}
By Corollary \ref{corr: representation}, for any $w,w'\in \R^d, z \in \C$ we have the representation
\begin{equation} \label{eq: truncated decomposition}
\mathrm{Re}\left(z\cdot\varphi(w\cdot x + \mathrm{i}w'\cdot x)\right)\chi_M(|w\cdot x| + m|w'\cdot x|) = \sum\limits_{j=0}^m p_{z,j}((w+jw')\cdot x)\chi_M(|w\cdot x| + m|w'\cdot x|).
\end{equation}

\begin{proof}[of Lemma \ref{lem:reluweights}]
	Define $X = \left \{ x \in \mathbb{S}^{d-1}; m\left(|w\cdot x| + |w'\cdot x|\right) \leq M \right \}$. Observe that for all $x \in X$,
	$$\mathrm{Re}\left(\varphi(w\cdot x + \i w'\cdot x)\right) = \mathrm{Re}\left(\varphi(w\cdot x + \i w'\cdot x)\right)\chi_M\left(m\left(|w\cdot x|+|w'\cdot x|\right)\right).$$
	Moreover, if $j=0,\dots,m$, then $\chi_M((w+jw')\cdot x) = 1$, as well.
	By invoking Proposition \ref{prop: approx} we deduce that for every $j=0,...,m$, there exists a pair of random variables $S_j,B_j$ and a constant $c_{z,j}>0$ depending only on $j,m$ and $z$, such that
	$$ 
	\E\left[S_j\psi \left ((w+jw')\cdot x-B_j \right )\right] = \frac{c_{z,j}}{M^m} p_{z,j}((w+jw')\cdot x)\chi_M((w+jw')\cdot x), ~~ \forall x \in X,
	$$
	Here we have used the fact that if $p_j$ is one of the degree $m$ polynomials in the decomposition \eqref{eq: truncated decomposition}, then there exist some constants $C'_{z,j},C_{z,j} > 0$, for which
	$$C'_{z,j}M^m\leq\int\limits_{-M}^M\limits|p_{z,j}''|\leq \int\limits_{-2M}^{2M}\limits|p_{z,j}''|\leq C_{z,j}M^m.$$\\
	We now set $J$ to be a random index from the set $\{0,\dots,m\}$ with 
	$$\P(J = j) = \frac{c_{z,j}^{-1}}{\sum\limits_{j'}c_{z,j'}^{-1}}.$$
	If we set $c_{z,m} = \frac{1}{\sum\limits_{j'}c_{z,j'}^{-1}}$, and  $S := S_J, B  = B_J, W = w + Jw'$ it follows from \eqref{eq: truncated decomposition} that
	\begin{align*}
	\E\left[S\psi(W\cdot x-B)\right]&=\frac{c_{z,m}}{M^m}\sum\limits_{j=0}^m p_{z,j}((w+jw')\cdot x)\chi_M((w+jw')\cdot x)\\
	&= \frac{c_{z,m}}{M^m}\mathrm{Re}\left(z\cdot \varphi(w\cdot x + \i w'\cdot x)\right)\chi_M\left(m\left(|w\cdot x|+|w'\cdot x|\right)\right).
	\end{align*}	
	Finally since, by Corollary \ref{corr: representation}, $c_{z,m}$ depends continuously on $z$, a compactness argument implies \eqref{eq:constanbound}.
\end{proof}

\bibliographystyle{plainnat}
\bibliography{neuralbib}

\begin{thebibliography}{37}
\providecommand{\natexlab}[1]{#1}
\providecommand{\url}[1]{\texttt{#1}}
\expandafter\ifx\csname urlstyle\endcsname\relax
  \providecommand{\doi}[1]{doi: #1}\else
  \providecommand{\doi}{doi: \begingroup \urlstyle{rm}\Url}\fi

\bibitem[Allen-Zhu et~al.(2019)Allen-Zhu, Li, and Song]{pmlr-v97-allen-zhu19a}
Zeyuan Allen-Zhu, Yuanzhi Li, and Zhao Song.
\newblock A convergence theory for deep learning via over-parameterization.
\newblock In \emph{Proceedings of the 36th International Conference on Machine
  Learning}, volume~97 of \emph{Proceedings of Machine Learning Research},
  pages 242--252. PMLR, 2019.

\bibitem[Andoni et~al.(2014)Andoni, Panigrahy, Valiant, and
  Zhang]{andoni2014learning}
Alexandr Andoni, Rina Panigrahy, Gregory Valiant, and Li~Zhang.
\newblock Learning polynomials with neural networks.
\newblock In \emph{International conference on machine learning}, pages
  1908--1916, 2014.

\bibitem[Arora et~al.(2019)Arora, Du, Hu, Li, and Wang]{pmlr-v97-arora19a}
Sanjeev Arora, Simon Du, Wei Hu, Zhiyuan Li, and Ruosong Wang.
\newblock Fine-grained analysis of optimization and generalization for
  overparameterized two-layer neural networks.
\newblock In \emph{Proceedings of the 36th International Conference on Machine
  Learning}, volume~97 of \emph{Proceedings of Machine Learning Research},
  pages 322--332. PMLR, 2019.

\bibitem[Bach(2017)]{bach2017breaking}
Francis Bach.
\newblock Breaking the curse of dimensionality with convex neural networks.
\newblock \emph{The Journal of Machine Learning Research}, 18\penalty0
  (1):\penalty0 629--681, 2017.

\bibitem[{Barron}(1993)]{barron1993}
Andrew~R. {Barron}.
\newblock Universal approximation bounds for superpositions of a sigmoidal
  function.
\newblock \emph{IEEE Transactions on Information Theory}, 39\penalty0
  (3):\penalty0 930--945, 1993.

\bibitem[Bartlett(1998)]{bartlett1998sample}
Peter~L Bartlett.
\newblock The sample complexity of pattern classification with neural networks:
  the size of the weights is more important than the size of the network.
\newblock \emph{IEEE transactions on Information Theory}, 44\penalty0
  (2):\penalty0 525--536, 1998.

\bibitem[Bartlett et~al.(2017)Bartlett, Foster, and
  Telgarsky]{bartlett2017spectrally}
Peter~L Bartlett, Dylan~J Foster, and Matus~J Telgarsky.
\newblock Spectrally-normalized margin bounds for neural networks.
\newblock In \emph{Advances in Neural Information Processing Systems}, pages
  6240--6249, 2017.

\bibitem[Baum(1988)]{baum1988capabilities}
Eric~B Baum.
\newblock On the capabilities of multilayer perceptrons.
\newblock \emph{Journal of complexity}, 4\penalty0 (3):\penalty0 193--215,
  1988.

\bibitem[Bengio et~al.(2006)Bengio, Roux, Vincent, Delalleau, and
  Marcotte]{bengio2006convex}
Yoshua Bengio, Nicolas~L Roux, Pascal Vincent, Olivier Delalleau, and Patrice
  Marcotte.
\newblock Convex neural networks.
\newblock In \emph{Advances in neural information processing systems}, pages
  123--130, 2006.

\bibitem[Bresler and Nagaraj(2020)]{bresler2020corrective}
Guy Bresler and Dheeraj Nagaraj.
\newblock A corrective view of neural networks: Representation, memorization
  and learning.
\newblock \emph{arXiv preprint arXiv:2002.00274}, 2020.

\bibitem[Chizat and Bach(2018)]{NIPS2018_7567}
Lenaic Chizat and Francis Bach.
\newblock On the global convergence of gradient descent for over-parameterized
  models using optimal transport.
\newblock In \emph{Advances in Neural Information Processing Systems 31}, pages
  3036--3046. 2018.

\bibitem[Chizat et~al.(2019)Chizat, Oyallon, and Bach]{NIPS2019_8559}
Lenaic Chizat, Edouard Oyallon, and Francis Bach.
\newblock On lazy training in differentiable programming.
\newblock In \emph{Advances in Neural Information Processing Systems 32}, pages
  2937--2947. 2019.

\bibitem[Cybenko(1989)]{cybenko1989approximation}
George Cybenko.
\newblock Approximation by superpositions of a sigmoidal function.
\newblock \emph{Mathematics of control, signals and systems}, 2\penalty0
  (4):\penalty0 303--314, 1989.

\bibitem[Daniely(2019)]{daniely2019}
Amit Daniely.
\newblock Neural networks learning and memorization with (almost) no
  over-parameterization.
\newblock \emph{arXiv preprint arXiv:1911.09873}, 2019.

\bibitem[Daniely(2020)]{daniely2020}
Amit Daniely.
\newblock Memorizing gaussians with no over-parameterizaion via gradient decent
  on neural networks.
\newblock \emph{arXiv preprint arXiv:2003.12895}, 2020.

\bibitem[Du et~al.(2019)Du, Lee, Li, Wang, and Zhai]{pmlr-v97-du19c}
Simon Du, Jason Lee, Haochuan Li, Liwei Wang, and Xiyu Zhai.
\newblock Gradient descent finds global minima of deep neural networks.
\newblock In \emph{Proceedings of the 36th International Conference on Machine
  Learning}, volume~97 of \emph{Proceedings of Machine Learning Research},
  pages 1675--1685. PMLR, 2019.

\bibitem[Fefferman(1994)]{fefferman1994}
Charles Fefferman.
\newblock Reconstructing a neural net from its output.
\newblock \emph{Revista Matemática Iberoamericana}, 10\penalty0 (3):\penalty0
  507--555, 1994.

\bibitem[Ghorbani et~al.(2019)Ghorbani, Mei, Misiakiewicz, and
  Montanari]{ghorbani2019limitations}
Behrooz Ghorbani, Song Mei, Theodor Misiakiewicz, and Andrea Montanari.
\newblock Limitations of lazy training of two-layers neural network.
\newblock In \emph{Advances in Neural Information Processing Systems}, pages
  9108--9118, 2019.

\bibitem[Jacot et~al.(2018)Jacot, Gabriel, and Hongler]{jacot2018neural}
Arthur Jacot, Franck Gabriel, and Cl{\'e}ment Hongler.
\newblock Neural tangent kernel: Convergence and generalization in neural
  networks.
\newblock In \emph{Advances in neural information processing systems}, pages
  8571--8580, 2018.

\bibitem[Janson(1997)]{janson1997gaussian}
Svante Janson.
\newblock \emph{Gaussian Hilbert spaces}, volume 129.
\newblock Cambridge university press, 1997.

\bibitem[Ji and Telgarsky(2020)]{Ji2020Polylogarithmic}
Ziwei Ji and Matus Telgarsky.
\newblock Polylogarithmic width suffices for gradient descent to achieve
  arbitrarily small test error with shallow relu networks.
\newblock In \emph{International Conference on Learning Representations}, 2020.

\bibitem[Ji et~al.(2020)Ji, Telgarsky, and Xian]{Ji2020Neural}
Ziwei Ji, Matus Telgarsky, and Ruicheng Xian.
\newblock Neural tangent kernels, transportation mappings, and universal
  approximation.
\newblock In \emph{International Conference on Learning Representations}, 2020.

\bibitem[Kalai et~al.(2008)Kalai, Klivans, Mansour, and
  Servedio]{kalai2008agnostically}
Adam~Tauman Kalai, Adam~R Klivans, Yishay Mansour, and Rocco~A Servedio.
\newblock Agnostically learning halfspaces.
\newblock \emph{SIAM Journal on Computing}, 37\penalty0 (6):\penalty0
  1777--1805, 2008.

\bibitem[Kawaguchi and Huang(2019)]{kawaguchiAllerton2019}
Kenji Kawaguchi and Jiaoyang Huang.
\newblock Gradient descent finds global minima for generalizable deep neural
  networks of practical sizes.
\newblock In \emph{57th Allerton Conference on Communication, Control, and
  Computing (Allerton)}. IEEE, 2019.

\bibitem[Leshno et~al.(1993)Leshno, Lin, Pinkus, and
  Schocken]{leshno1993multilayer}
Moshe Leshno, Vladimir~Ya Lin, Allan Pinkus, and Shimon Schocken.
\newblock Multilayer feedforward networks with a nonpolynomial activation
  function can approximate any function.
\newblock \emph{Neural networks}, 6\penalty0 (6):\penalty0 861--867, 1993.

\bibitem[Li and Liang(2018)]{li2018learning}
Yuanzhi Li and Yingyu Liang.
\newblock Learning overparameterized neural networks via stochastic gradient
  descent on structured data.
\newblock In \emph{Advances in Neural Information Processing Systems}, pages
  8157--8166, 2018.

\bibitem[Mei et~al.(2018)Mei, Montanari, and Nguyen]{mei2018mean}
Song Mei, Andrea Montanari, and Phan-Minh Nguyen.
\newblock A mean field view of the landscape of two-layer neural networks.
\newblock \emph{Proceedings of the National Academy of Sciences}, 115\penalty0
  (33):\penalty0 E7665--E7671, 2018.

\bibitem[Neyshabur et~al.(2015)Neyshabur, Tomioka, and
  Srebro]{neyshabur2015norm}
Behnam Neyshabur, Ryota Tomioka, and Nathan Srebro.
\newblock Norm-based capacity control in neural networks.
\newblock In \emph{Conference on Learning Theory}, pages 1376--1401, 2015.

\bibitem[Ongie et~al.(2020)Ongie, Willett, Soudry, and Srebro]{Ongie2020A}
Greg Ongie, Rebecca Willett, Daniel Soudry, and Nathan Srebro.
\newblock A function space view of bounded norm infinite width relu nets: The
  multivariate case.
\newblock In \emph{International Conference on Learning Representations}, 2020.

\bibitem[Oymak and Soltanolkotabi(2019)]{DBLP:journals/corr/abs-1902-04674}
Samet Oymak and Mahdi Soltanolkotabi.
\newblock Towards moderate overparameterization: global convergence guarantees
  for training shallow neural networks.
\newblock \emph{CoRR}, abs/1902.04674, 2019.

\bibitem[Savarese et~al.(2019)Savarese, Evron, Soudry, and
  Srebro]{pmlr-v99-savarese19a}
Pedro Savarese, Itay Evron, Daniel Soudry, and Nathan Srebro.
\newblock How do infinite width bounded norm networks look in function space?
\newblock In \emph{Proceedings of the Thirty-Second Conference on Learning
  Theory}, volume~99 of \emph{Proceedings of Machine Learning Research}, pages
  2667--2690. PMLR, 2019.

\bibitem[Shalev-Shwartz and Ben-David(2014)]{shalev2014understanding}
Shai Shalev-Shwartz and Shai Ben-David.
\newblock \emph{Understanding machine learning: From theory to algorithms}.
\newblock Cambridge university press, 2014.

\bibitem[Soltanolkotabi et~al.(2018)Soltanolkotabi, Javanmard, and
  Lee]{soltanolkotabi2018theoretical}
Mahdi Soltanolkotabi, Adel Javanmard, and Jason~D Lee.
\newblock Theoretical insights into the optimization landscape of
  over-parameterized shallow neural networks.
\newblock \emph{IEEE Transactions on Information Theory}, 65\penalty0
  (2):\penalty0 742--769, 2018.

\bibitem[Song and Yang(2019)]{DBLP:journals/corr/abs-1906-03593}
Zhao Song and Xin Yang.
\newblock Quadratic suffices for over-parametrization via matrix chernoff
  bound.
\newblock \emph{CoRR}, abs/1906.03593, 2019.

\bibitem[Yehudai and Shamir(2019)]{yehudai2019power}
Gilad Yehudai and Ohad Shamir.
\newblock On the power and limitations of random features for understanding
  neural networks.
\newblock In \emph{Advances in Neural Information Processing Systems}, pages
  6594--6604, 2019.

\bibitem[Yun et~al.(2019)Yun, Sra, and Jadbabaie]{yun2019small}
Chulhee Yun, Suvrit Sra, and Ali Jadbabaie.
\newblock Small relu networks are powerful memorizers: a tight analysis of
  memorization capacity.
\newblock In \emph{Advances in Neural Information Processing Systems}, pages
  15532--15543, 2019.

\bibitem[Zhang et~al.(2017)Zhang, Bengio, Hardt, Recht, and
  Vinyals]{Zhangetal17}
Chiyuan Zhang, Samy Bengio, Moritz Hardt, Benjamin Recht, and Oriol Vinyals.
\newblock Understanding deep learning requires rethinking generalization.
\newblock In \emph{5th International Conference on Learning Representations,
  {ICLR} 2017}. OpenReview.net, 2017.

\end{thebibliography}

\appendix
\section{Hermite polynomials} \label{sec:hermites}

Define the $m$'th Hermite polynomial by:
$$H_m(x) = \frac{(-1)^m}{\sqrt{m!}}\left(\frac{d^m}{dx^m}e^{-\frac{x^2}{2}}\right)e^{\frac{x^2}{2}}.$$
For ease of notion we also define $H_{-1} \equiv 0$.
The Hermite polynomials may also be regarded as the power series associated to the function $F(t,x) = \exp(tx-\frac{t^2}{2})$. Indeed,
\begin{align} \label{eq:hermitegenerating}
F(t,x) &= \exp\left(\frac{x^2}{2} - \frac{(x-t)^2}{2}\right)\nonumber\\
&= e^{\frac{x^2}{2}}\sum\limits_{\ell = 0}^\infty\frac{t^m}{m!}\left(\frac{d^m}{dt^m}e^{-\frac{(x-t)^2}{2}}\right)\Big\vert_{t=0}\nonumber\\
&=\sum\limits_{m = 0}^\infty \frac{t^{m}}{\sqrt{m!}} H_m(x).
\end{align}
Observe that $\frac{d}{dx}F(t,x) = tF(t,x)$, so that, since $H_0 \equiv 1$,
$$\sum\limits_{m = 1}^\infty \frac{t^{m}}{\sqrt{(m-1)!}} H_{m-1}(x) = \sum\limits_{m = 1}^\infty \frac{t^{m}}{\sqrt{m!}} H'_m(x),$$
and we deduce 
\begin{equation} \label{eq:hermitederivative}
H'_m = \sqrt{m}H_{m-1}.
\end{equation}
Also $\frac{d}{dt}F(t,x) = (x-t)F(t,x)$ and a similar argument shows that
\begin{equation} \label{eq:hermiterecursion}
\sqrt{\frac{m}{m-1}} H_m(x) = \frac{x}{\sqrt{m-1}} H_{m-1}(x)-H_{m-2}(x).
\end{equation}
Furthermore, we show that the family $\{H_m\}$ satisfies the following orthogonality relation, which we shall freely use.
\begin{lemma} \label{lem:orthogonalhermite}
	Let $X,Y \sim \mathcal{N}(0,1)$ be jointly Gaussian with $\E[XY] = \rho$. Then
	$$\E\left[H_m(X)H_{m'}(Y)\right] =\delta_{m,m'}\rho^m.$$
\end{lemma}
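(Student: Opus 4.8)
The plan is to use the generating-function identity \eqref{eq:hermitegenerating}, namely $F(t,x) = \exp(tx - t^2/2) = \sum_{m=0}^\infty \frac{t^m}{\sqrt{m!}} H_m(x)$, and to compute the mixed expectation $\E[F(s,X)F(t,Y)]$ in two ways. First, compute it directly: since $(X,Y)$ is centered jointly Gaussian with $\E[X^2]=\E[Y^2]=1$ and $\E[XY]=\rho$, the variable $sX+tY$ is centered Gaussian with variance $s^2 + 2\rho st + t^2$, so $\E[e^{sX+tY}] = \exp\bigl(\tfrac12(s^2 + 2\rho st + t^2)\bigr)$ and hence
\[
\E[F(s,X)F(t,Y)] = e^{-s^2/2 - t^2/2}\,\exp\bigl(\tfrac12(s^2+2\rho st + t^2)\bigr) = e^{\rho st} = \sum_{m=0}^\infty \frac{\rho^m}{m!}\, s^m t^m \,.
\]

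Second, expand each factor via \eqref{eq:hermitegenerating} and interchange summation with expectation:
\[
\E[F(s,X)F(t,Y)] = \sum_{m=0}^\infty \sum_{m'=0}^\infty \frac{s^m t^{m'}}{\sqrt{m!\,m'!}}\, \E[H_m(X)H_{m'}(Y)] \,.
\]
Comparing the two convergent power series in $(s,t)$ and matching the coefficient of $s^m t^{m'}$: for $m\neq m'$ the direct computation has no such term, so $\E[H_m(X)H_{m'}(Y)] = 0$; for $m = m'$, matching the coefficient of $s^m t^m$ gives $\tfrac{1}{m!}\E[H_m(X)H_m(Y)] = \tfrac{\rho^m}{m!}$, i.e. $\E[H_m(X)H_m(Y)] = \rho^m$. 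Together these yield $\E[H_m(X)H_{m'}(Y)] = \delta_{m,m'}\rho^m$.

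The one step that deserves care — and essentially the only one — is justifying the termwise interchange of expectation and the double sum. This follows because the $H_m$ grow only polynomially while $(X,Y)$ has all exponential moments: for fixed $s,t$ the single series $\sum_m \frac{|s|^m}{\sqrt{m!}}|H_m(x)|$ converges pointwise (the power series for $F$ has infinite radius of convergence) and is dominated by an integrable Gaussian bound, so dominated convergence applies to each factor and Fubini to the product; alternatively one notes both expressions are entire in $(s,t)$. Everything else is a routine Gaussian moment-generating-function manipulation. An entirely self-contained alternative, avoiding generating functions, would be to induct on $\min(m,m')$ using the derivative relation \eqref{eq:hermitederivative} together with Gaussian integration by parts (Stein's lemma) to peel off one Hermite polynomial at a time and reduce to the recursion \eqref{eq:hermiterecursion}, but the generating-function argument is cleaner and I would present that one.
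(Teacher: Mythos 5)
Your proof is correct and uses essentially the same approach as the paper: both compute $\E[F(s,X)F(t,Y)] = e^{\rho st}$ via the Gaussian MGF and then extract the coefficient of $s^m t^{m'}$ — the paper by taking mixed partial derivatives at the origin, you by matching power series, which is the same operation. Your added remarks on justifying the interchange of sum and expectation are a welcome bit of extra rigor that the paper leaves implicit.
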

\begin{proof}
	Fix $s,t \in \R$. We have the following identity
	$$\E\left[F(s,X)F(t,Y)\right] = \E\left[\exp(sX+tY)\right]\exp\left(-\frac{s^2 +t^2}{2}\right) = e^{st\cdot\rho},$$
	where in the second equality we have used the formula for the moment generating functions of bi-variate Gaussians. In particular, we have
	$$\frac{d^{m+m'}}{ds^m dt^{m'}}\E\left[F(s,X)F(t,Y)\right]\Big\vert_{t=0,s=0} =\frac{d^{m+m'}}{ds^m dt^{m'}} e^{st\cdot\rho}\Big\vert_{t=0,s=0}.$$
	By \eqref{eq:hermitegenerating}, the left hand side equals $\E\left[H_\ell(X)H_{\ell'}(Y)\right]$, while the right hand side is $\delta_{m,m'}\rho^m$. The proof is complete.
\end{proof}

\section{More general non-linearities} \label{sec:generalization}
We now consider an arbitrary $L$-Lipschitz non-linearity $\psi$ that is differentiable except at a finite number of points and such that $\E_{X \sim \mathcal{N}(0, 1)} [ (\psi'(X))^2 ] < + \infty$. In particular, with $H_1, H_2, \hdots$ being the Hermite polynomials (normalized such that it forms an orthonormal basis) we have that there exists a sequence of real numbers $(a_{\ell})$ such that
\[
\psi' = \sum_{\ell \geq 0} a_{\ell} H_{\ell} \,.
\]
Our generalization of Theorem \ref{thm:NTK} now reads as follows:
\begin{theorem} \label{thm:generalization}
Under the above assumptions on $\psi$, there exists $f \in \cF_k(\psi)$ with $\|\mf - \my\|^2 \leq \epsilon \|\my\|^2$ provided that 
\[
k \cdot d \geq \frac{16 \omega \cdot L}{\sum_{\ell \geq \frac{\log(2n)}{2 \log(1/\gamma)}} a_{\ell}^2} \cdot n \log(1/\epsilon) \,.
\]
In fact there is an efficient procedure that produces a random $f \in \cF_k(\mathrm{\psi})$ with $\E[ \|\mf - \my\|^2 ] \leq \epsilon \|\my\|^2$ when \eqref{eq:k} holds.
\end{theorem}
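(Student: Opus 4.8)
The plan is to mimic exactly the argument that produced Theorem \ref{thm:NTK} from Theorem \ref{thm:correlate2} via Lemma \ref{lem:iteration}, replacing the ReLU-specific computations by their Hermite-expansion analogues. Concretely, I would first prove a one-step correlation lemma: there exists a derivative neuron $f_u(x) = \psi'(u \cdot x)\, v \cdot x$ with $u \sim \mathcal{N}(0,\mathrm{I}_d)$ and $v = \sum_{i : \text{sign weighted}} \ldots$ — more precisely $v = \sum_{i} \psi'(u \cdot x_i) y_i x_i$ as in the proof of Lemma \ref{lem:corr} — such that $\my \cdot \mf_u = \|v\|^2$ exactly, and $\|\mf_u\|^2 \le \lambda_{\max}\!\big(\sum_i x_i x_i^\top\big)\, \|v\|^2 \le \tfrac{\omega n}{d}\,L \cdot \|v\|^2$, where the extra factor $L$ comes from $(\psi'(t))^2 \le L^2$ (for ReLU, $L=1$, recovering the old bound). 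Here, as in Section \ref{sec:NTK}, I use that a derivative neuron lies in $\cF_2(\psi)$ once $\delta$ is chosen small enough, which is legitimate because $\psi$ is differentiable off a finite set and $u \cdot x_i \ne$ (the bad points) almost surely.

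The heart of the matter is lower-bounding $\E_u[\|v\|^2] = \my^\top H \my$ where $H_{i,j} = \E_w[\psi'(w \cdot x_i)\psi'(w \cdot x_j)\, x_i \cdot x_j]$. Expanding $\psi' = \sum_{\ell \ge 0} a_\ell H_\ell$ and using Lemma \ref{lem:orthogonalhermite} (which gives $\E[H_\ell(X)H_{\ell'}(Y)] = \delta_{\ell,\ell'} (x_i \cdot x_j / (\|x_i\|\|x_j\|))^\ell$ when we normalize), one gets, with $V_{i,j} = \tfrac{x_i}{\|x_i\|}\cdot\tfrac{x_j}{\|x_j\|}$ and $D = \mathrm{diag}(\|x_i\|)$,
\[
D^{-1} H D^{-1} = \sum_{\ell \ge 0} a_\ell^2\, V^{\circ(\ell+1)} \succeq \Big(\sum_{\ell \ge \frac{\log(2n)}{2\log(1/\gamma)}} a_\ell^2\Big) \cdot \tfrac{1}{2}\,\mathrm{I}_n,
\]
by exactly the Schur-product-theorem / diagonal-dominance argument of Lemma \ref{lem:discrepancy}: each $V^{\circ \ell} \succeq 0$, has unit diagonal, and off-diagonal entries bounded by $\gamma^\ell$, so $V^{\circ \ell} \succeq \tfrac12 \mathrm{I}_n$ once $\ell \ge \frac{\log(2n)}{\log(1/\gamma)}$. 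Hence $\E_u[\my \cdot \mf_u] \ge \tfrac12 \big(\sum_{\ell \ge \frac{\log(2n)}{2\log(1/\gamma)}} a_\ell^2\big)\,\sum_i y_i^2 \|x_i\|^2 \ge \tfrac12 \big(\sum_{\ell \ge \cdots} a_\ell^2\big)\,\|\my\|^2$ using $\|x_i\| \ge 1$.

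Then I would apply Lemma \ref{lem:iteration} with $\alpha = \tfrac12 \sum_{\ell \ge \frac{\log(2n)}{2\log(1/\gamma)}} a_\ell^2$ and $\beta = \tfrac{\omega n L}{d}$ (note $\|\mf_u\|^2 \le \beta\, \my \cdot \mf_u$, so after rescaling one indeed controls the variance by $\beta \|\my\|^2$ as the lemma requires, possibly after absorbing the correlation lower bound), yielding $k = \tfrac{\beta}{\alpha^2}\log(1/\epsilon) = \tfrac{4 \omega n L}{d (\sum_{\ell \ge \cdots} a_\ell^2)^2}\log(1/\epsilon)$ neurons — one must check against the claimed bound $k d \ge \tfrac{16 \omega L}{\sum_{\ell \ge \cdots} a_\ell^2} n \log(1/\epsilon)$, so it seems the intended statement uses the bound $\my^\top H \my \ge \alpha \sum_i y_i^2\|x_i\|^2$ directly as $\alpha$ in Lemma \ref{lem:iteration} together with $\|\mf\|^2 \le \tfrac{\omega n L}{d}\alpha\|\my\|^2$, giving $\beta/\alpha^2 = \tfrac{\omega n L}{d\,\alpha}$ with $\alpha = \tfrac12\sum_{\ell\ge\cdots}a_\ell^2$, hence $kd \ge \tfrac{2\omega n L}{\sum a_\ell^2}\log(1/\epsilon)$; matching the constant $16$ is then just bookkeeping of which constants are lost when $\psi'(t)^2 \le L^2$ is used in \eqref{eq:modifgene2}. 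The efficiency/randomized claim is immediate since each step only samples a Gaussian $u$ and solves a linear least-squares problem for $v$. The main obstacle is purely the constant-chasing: verifying that the Hermite expansion is valid under only the stated integrability hypothesis $\E[(\psi'(X))^2] < \infty$ (so that $\psi' \in L^2(\gamma)$ and the Hermite series converges in $L^2$, making the interchange of sum and expectation in forming $H$ rigorous), and that truncating the sum at $\ell \ge \frac{\log(2n)}{2\log(1/\gamma)}$ loses nothing we need — everything else is a line-by-line transcription of Section \ref{sec:NTK}.
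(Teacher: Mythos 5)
Your proposal follows the paper's argument step for step: a derivative neuron with Gaussian base weight $u$ and perturbation $v = \sum_i \psi'(u\cdot x_i)y_i x_i$, the Hermite expansion $\psi' = \sum_\ell a_\ell H_\ell$ giving $D^{-1}HD^{-1} = \sum_\ell a_\ell^2 V^{\circ(\ell+1)}$ via Lemma~\ref{lem:orthogonalhermite}, a Schur-product/diagonal-dominance truncation, and Lemma~\ref{lem:iteration} to close.

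One real imprecision to fix. You assert $\my\cdot\mf_u = \|v\|^2$ \emph{exactly}, and you justify placing the derivative neuron in $\cF_2(\psi)$ ``once $\delta$ is small enough'' via differentiability of $\psi$ off a finite set. That reasoning is sound only for the ReLU: piecewise linearity makes the finite difference $\delta^{-1}\bigl(\psi((u+\delta v)\cdot x)-\psi(u\cdot x)\bigr)$ stabilize at its limit for all sufficiently small $\delta$. For a general $L$-Lipschitz $\psi$ the finite difference merely \emph{tends} to $\psi'(u\cdot x)\,v\cdot x$ as $\delta\to 0$; there is no fixed $\delta>0$ at which they agree. The paper flags this explicitly as an additive $O(\delta)$ term and absorbs it into a factor $\tfrac12$ in \eqref{eq:corr2} and a factor $2$ in \eqref{eq:finiteenergy2}. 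The fix costs only a constant and does not alter the structure of your argument, but ``exactly'' is wrong as written. Two smaller remarks: you invoke $(\psi')^2 \le L^2$ yet carry a single factor $L$ into the variance bound (the paper has the same slip in \eqref{eq:finiteenergy2}); and the truncation $\ell \ge \frac{\log(2n)}{2\log(1/\gamma)}$ you copy from the theorem does not quite match your own (correct) diagonal-dominance criterion $\ell+1 \ge \frac{\log(2n)}{\log(1/\gamma)}$ — the extra factor of $2$ is an artifact of the ReLU computation in Lemma~\ref{lem:discrepancy}, whose exponent was $2(i+1)$, not $\ell+1$.
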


\begin{proof}
First we follow the proof of Lemma \ref{lem:corr}, with the only change being: (i) in \eqref{eq:modifgene1} there is an additive $O(\delta)$ term (also now the condition on $u$ is that $u \cdot x_i$ is not in the finite set of points where $\psi$ is not differentiable), and (ii) in \eqref{eq:modifgene2} we use that $|\psi'| \leq L$. We obtain that for $u \in \R^d$ there exists $f \in \cF_2(\psi)$ such that
\begin{equation} \label{eq:corr2}
\sum_{i=1}^n y_i f(x_i) \geq \frac{1}{2} \left\|\sum_{i =1}^n \psi'(u \cdot x_i) y_i x_i \right\|^2 \,,
\end{equation}
where the $1/2$ compared to \eqref{eq:corr} is due to modification (i) above,
and furthermore
\begin{equation} \label{eq:finiteenergy2}
\sum_{i=1}^n f(x_i)^2 \leq \frac{2 \omega \cdot n \cdot L}{d} \cdot \sum_{i=1}^n y_i f(x_i) \,,
\end{equation}
where the added term $L$ is due to modification (ii) above and the added $2$ is due to (i).

Next we follow the proof of Lemma \ref{lem:discrepancy}, noting that the matrix $H$ is now defined by (recall Lemma \ref{lem:orthogonalhermite})
$H_{i,j} = \sum_{\ell \geq 0} a_{\ell}^2 (x_i \cdot x_j)^{\ell+1}$, to obtain:
\begin{equation} \label{eq:discrepancy2}
\E_{u \sim \mathcal{N}(0, \mathrm{I}_n)} \left\| \sum_{i =1}^n \psi'(u \cdot x_i) y_i x_i \right\|^2 \geq \frac{1}{2} \sum_{\ell \geq \frac{\log(2n)}{2 \log(1/\gamma)}} a_{\ell}^2 \cdot \sum_{i=1}^n y_i^2 \,.
\end{equation}

In particular we obtain from \eqref{eq:corr2} and \eqref{eq:discrepancy2} that \eqref{eq:correlate1} holds true with the term $\frac{1}{10} \cdot \sqrt{\frac{\log(1/\gamma)}{\log(2n)}}$ replaced by $\frac{1}{4} \sum_{\ell \geq \frac{\log(2n)}{2 \log(1/\gamma)}} a_{\ell}^2$, and from \eqref{eq:finiteenergy2} that \eqref{eq:correlate2} holds true with $\omega$ replaced by $2 \omega \cdot L$. We can thus conclude as we concluded Theorem \ref{thm:NTK} from Theorem \ref{thm:correlate2}.
\end{proof}

\end{document}